\theoremstyle{plain}
\newtheorem{theorem}{Theorem}
\newtheorem{lemma}{Lemma}
\newtheorem{proposition}{Proposition}
\newtheorem{cor}{Corollary}
\theoremstyle{definition}
\newtheorem{remark}{Remark}
\newcommand{\RETURN}{\STATE return}
\icmltitlerunning{Gossip Dual Averaging for Decentralized Optimization of Pairwise Functions}
\begin{document}

\twocolumn[
\icmltitle{Gossip Dual Averaging for Decentralized Optimization of \\
  Pairwise Functions}


\icmlauthor{Igor Colin}{igor.colin@telecom-paristech.fr}
\icmladdress{LTCI, CNRS, T\'el\'ecom ParisTech, Unversit\'e Paris-Saclay,
  75013 Paris, France}
\icmlauthor{Aur\'elien Bellet}{aurelien.bellet@inria.fr}
\icmladdress{Magnet Team, INRIA Lille -- Nord Europe,
            59650 Villeneuve d'Ascq, France}
\icmlauthor{Joseph Salmon}{joseph.salmon@telecom-paristech.fr}
\icmlauthor{St\'ephan Cl\'emen\c{c}on}{stephan.clemencon@telecom-paristech.fr}
\icmladdress{LTCI, CNRS, T\'el\'ecom ParisTech, Unversit\'e Paris-Saclay,
  75013 Paris, France}

\icmlkeywords{decentralized optimization, dual averaging, gossip protocols}
\vskip 0.3in
]


\begin{abstract}

In decentralized networks (of sensors, connected objects, \textit{etc.}), there is an important need for efficient algorithms to optimize a global cost function, for instance to learn a global model from the local data collected by each computing unit.
In this paper, we address the problem of decentralized minimization of pairwise functions of the data points, where these points are distributed over the nodes of a graph defining the communication topology of the network.
This general problem finds applications in ranking, distance metric learning and graph inference, among others.
We propose new gossip algorithms based on dual averaging which aims at solving such problems both in synchronous and asynchronous settings. The proposed framework is flexible enough to deal with constrained and regularized variants of the optimization problem. Our theoretical analysis reveals that the proposed algorithms preserve the convergence rate of centralized dual averaging up to an additive bias term. We present numerical simulations on Area Under the ROC Curve (AUC) maximization and metric learning problems which illustrate the practical interest of our approach.

\end{abstract}


\section{Introduction}
\label{sec:introduction}

The increasing popularity of large-scale and fully decentralized computational architectures, fueled for instance by the advent of the ``Internet of Things'', motivates the development of efficient optimization algorithms adapted to this setting.
An important application is machine learning in wired and wireless networks of agents (sensors, connected objects, mobile phones, \textit{etc.}), where the agents seek to minimize a global learning objective which depends of the data collected locally by each agent. In such networks, it is typically impossible to efficiently centralize data or to globally aggregate intermediate results: agents can only communicate with their immediate neighbors (\eg agents within a small distance), often in a completely asynchronous fashion. Standard distributed optimization and machine learning algorithms (implemented for instance using MapReduce/Spark) require a coordinator node and/or to maintain synchrony, and are thus unsuitable for use in decentralized networks.

In contrast, \emph{gossip algorithms} \citep{Tsitsiklis1984a,Boyd2006a,Kempe2003a,Shah2009a} are tailored to this setting because they only rely on simple peer-to-peer communication: each agent only exchanges information with one neighbor at a time. Various gossip algorithms have been proposed to solve the flagship problem of decentralized optimization, namely to find a parameter vector $\theta$ which minimizes an average of convex functions $(1/n)\sum_{i=1}^n f(\theta; x_i)$, where the data $x_i$ is only known to agent $i$. The most popular algorithms are based on (sub)gradient descent \citep{Johansson2010a,Nedic2009a,Ram2010a,Bianchi2013a}, ADMM \citep{Wei2012a,Wei2013a,Iutzeler2013a} or dual averaging \citep{Duchi2012a,yuan2012distributed,lee2015decentralized,Tsianos2015a}, some of which can also accommodate constraints or regularization on $\theta$. The main idea underlying these methods is that each agent seeks to minimize its local function by applying local updates (\eg gradient steps) while exchanging information with neighbors to ensure a global convergence to the consensus value.

In this paper, we tackle the problem of minimizing an average of \emph{pairwise} functions of the agents' data:
\begin{equation}\label{eq:sumfij}
\min_{\theta} \frac{1}{n^2}\sum_{1 \leq i, j \leq n} f(\theta; x_i, x_j).
\end{equation}
This problem finds numerous applications in statistics and machine learning, \eg Area Under the ROC Curve (AUC) maximization \citep{Zhao2011a}, distance/similarity learning \citep{Bellet2015c}, ranking \citep{Clemencon2008a}, supervised graph inference \citep{Biau2006a} and multiple kernel learning \citep{Kumar2012a}, to name a few.
As a motivating example, consider a mobile phone application which locally collects information about its users. The provider could be interested in learning pairwise similarity functions between users in order to group them into clusters or to recommend them content without having to centralize data on a server (which would be costly for the users' bandwidth) or to synchronize phones.

The main difficulty in Problem \eqref{eq:sumfij} comes from the fact that each term of the sum depends on two agents $i$ and $j$, making the local update schemes of previous approaches impossible to apply unless data is exchanged between nodes. Although gossip algorithms have recently been introduced to evaluate such pairwise functions for a \emph{fixed} $\theta$ \citep{Pelckmans2009a,Colin_Bellet_Salmon_Clemencon15}, to the best of our knowledge, efficiently finding the \emph{optimal solution} $\theta$ in a decentralized way remains an open challenge.
Our contributions towards this objective are as follows. We propose new gossip algorithms based on dual averaging \citep{Nesterov2009a,Xiao10} to efficiently solve Problem~\eqref{eq:sumfij} and its constrained or regularized variants. Central to our methods is a light data propagation scheme which allows the nodes to compute \emph{biased} estimates of the gradients of functions in \eqref{eq:sumfij}.
We then propose a theoretical analysis of our algorithms both in synchronous and asynchronous settings establishing their convergence under an additional hypothesis that the bias term decreases fast enough over the iterations (and we have observed such a fast decrease in all our experiments).
Finally, we present some numerical simulations on Area Under the ROC Curve (AUC) maximization and metric learning problems. These experiments illustrate the practical performance of the proposed algorithms and the influence of network topology, and show that in practice the influence of the bias term is negligible as it decreases very fast with the number of iterations.

The paper is organized as follows. Section~\ref{sec:statement_notation} formally introduces the problem of interest and briefly reviews the dual averaging method, which is at the root of our approach.
Section~\ref{sec:distr-dual-aver} presents the proposed gossip algorithms and their convergence analysis. Section~\ref{sec:experiments} displays our numerical simulations. Finally, concluding remarks are collected in Section~\ref{sec:conclusion}.













\section{Preliminaries}
\label{sec:statement_notation}

\subsection{Definitions and Notation}
\label{subsec:notation}

For any integer $p > 0$,  we denote by $[p]$ the set $\{ 1, \ldots, p \}$ and by $\vert F \vert$ the cardinality of any finite set $F$. We denote an undirected graph by $\mathcal{G} = (V, E)$, where $V=[n]$ is the set of vertices and $E\subseteq V\times V$ is the set of edges.
A node $i \in V$ has degree $d_i = | \{j : (i, j) \in E \} |$. $\mathcal{G}$ is connected if for all $(i, j) \in V^2$ there exists a path connecting $i$ and $j$; it is bipartite if there exist $S, T \subset V$ such that $S \cup T = V$, $S \cap T = \emptyset$ and $E \subseteq (S \times T) \cup (T \times S)$. The graph Laplacian of $\mathcal{G}$
is denoted by $L(\mathcal{G}) = D(\mathcal{G}) - A(\mathcal{G})$, where $D(\mathcal{G})$ and $A(\mathcal{G})$ are respectively the degree and the adjacency matrices of $\mathcal{G}$.

The transpose of a matrix $M \in \bbR^{n \times n}$ is denoted by $M^{\top}$. A matrix $P \in \bbR^{n \times n}$ is termed stochastic whenever $P \geq 0$ and $P \1_n = \1_n$, where $\1_n = (1, \ldots, 1)^{\top}\in \bbR^n$, and bi-stochastic whenever both $P$ and $P^{\top}$ are stochastic. We denote by $I_n$ the identity matrix in $\bbR^{n\times n}$, by $(e_1,\dots,e_n)$ the canonical basis of $\bbR^n$, by $\mathbb{I}_{\{\mathcal{E}\}}$ the indicator function of any event $\mathcal{E}$ and by $\|\cdot\|$ the usual $\ell_2$-norm.
For $\theta \in \bbR^d$ and $g : \bbR^d \rightarrow \bbR$, we denote by $\nabla g(\theta)$ the gradient of $g$ at $\theta$.
Finally, given a collection of vectors $u_1,\dots,u_n$, we denote by $\bar{u}^n=(1/n)\sum_{i=1}^nu_i$ its empirical mean.

\subsection{Problem Statement}
\label{subsec:problem_statement}

We represent a network of $n$ agents as an undirected graph $\mathcal{G} = ([n], E)$, where each node $i\in[n]$ corresponds to an agent and $(i,j)\in E$ if nodes $i$ and $j$ can exchange information directly (\textit{i.e.}, they are neighbors). For ease of exposition, we assume that each node $i \in [n]$ holds a single data point $x_i\in\mathcal{X}$.  Though restrictive in practice, this assumption can easily be relaxed, but it would lead to more technical details to handle the storage size, without changing the overall analysis (see supplementary material for details).

Given $d > 0$, let $f : \bbR^d \times \featspace \times \featspace \to \bbR$ a differentiable and convex function with respect to the first variable. We assume that for any $(x, x') \in \featspace^2$, there exists $L_{f} > 0$ such that $f(\cdot; x, x')$ is $L_f$-Lipschitz (with respect to the $\ell_2$-norm).
Let $\psi:\bbR^d \to \bbR^+$ be a non-negative, convex, possibly non-smooth, function such that, for simplicity, $\psi(0)=0$.
We aim at solving the following optimization problem:
\begin{equation}
  \label{eq:emp_opt_problem}
  \min_{\theta \in \bbR^d} \frac{1}{n^2} \sum_{1 \leq i, j \leq n} f(\theta; x_i, x_j)  + \psi(\theta).
\end{equation}
In a typical machine learning scenario, Problem~\eqref{eq:emp_opt_problem} is a (regularized) empirical risk minimization problem and $\theta$ corresponds to the model parameters to be learned. The quantity $f(\theta; x_i, x_j)$ is a pairwise loss measuring the performance of the model $\theta$ on the data pair $(x_i, x_j)$, while $\psi(\theta)$ represents a regularization term penalizing the complexity of $\theta$. Common examples of regularization terms include indicator functions of a closed convex set to model explicit convex constraints, or norms enforcing specific properties such as sparsity (a canonical example being the $\ell_1$-norm).

Many machine learning problems can be cast as Problem \eqref{eq:emp_opt_problem}. For instance, in AUC maximization \citep{Zhao2011a}, binary labels $(\ell_1,\ldots,\ell_n)\in\{-1,1\}^n$ are assigned to the data points and we want to learn a (linear) scoring rule $x\mapsto x^\top\theta$ which hopefully gives larger scores to positive data points than to negative ones.
One may use the logistic loss
$$f(\theta; x_i, x_j)=\mathbb{I}_{\{\ell_i>\ell_j\}} \log\left(1+\exp((x_j-x_i)^\top\theta)\right),$$
and the regularization term $\psi(\theta)$ can be the square $\ell_2$-norm of $\theta$ (or the $\ell_1$-norm when a sparse model is desired). Other popular instances of Problem~\eqref{eq:emp_opt_problem} include metric learning \citep{Bellet2015c}, ranking \citep{Clemencon2008a}, supervised graph inference \citep{Biau2006a} and multiple kernel learning \citep{Kumar2012a}.

For notational convenience, we denote by $f_i$ the partial function $(1/n)\sum_{j = 1}^n f(\cdot; x_i, x_j)$ for $i\in[n]$ and by $\avf = (1/n) \sum_{i = 1}^n f_i$. Problem~\eqref{eq:emp_opt_problem} can then be recast as:
\begin{equation}
  \label{eq:alt_emp_opt_problem}
  \min_{\theta \in \bbR^d} R_n(\theta) = \avf(\theta)+ \psi(\theta).
\end{equation}
Note that the function $\avf$ is $L_{f}$-Lipschitz, since all the $f_i$ are $L_{f}$-Lipschitz.

\begin{remark}
  Throughout the paper we assume that the function $f$ is differentiable, but we expect all our results to hold even when $f$ is non-smooth, for instance in $L_1$-regression problems or when using the hinge loss. In this case, one simply needs to replace gradients by subgradients in our algorithms, and a similar analysis could be performed.
\end{remark}

\subsection{Centralized Dual Averaging}
\label{subsec:centralize_dual_averaging}

\begin{algorithm}[t]
  \small
  \caption{Stochastic dual averaging in the centralized setting}
  \label{alg:dual_averaging_sto}
  \begin{algorithmic}[1]
    \REQUIRE Step size $(\gamma(t))_{t \geq 0} > 0$.
    \STATE Initialization: $\theta = 0$, $\avtheta = 0$, $z = 0$.
    \FOR{$t = 1, \ldots, T$}
    \STATE Update $z \gets z + g(t)$, where $\bbE[g(t)|\theta] = \nabla \avf(\theta)$
    \STATE Update $\theta \gets \smoothop_t(z)$
    \STATE Update $\avtheta \gets \left( 1 - \frac{1}{t} \right) \avtheta + \frac{1}{t} \theta$
    \ENDFOR
    \STATE \textbf{return} $\avtheta$
  \end{algorithmic}
\end{algorithm}

In this section, we review the stochastic dual averaging optimization algorithm \citep{Nesterov2009a,Xiao10} to solve Problem \eqref{eq:emp_opt_problem} in the centralized setting (where all data lie on the same machine). This method is at the root of our gossip algorithms, for reasons that will be made clear in Section~\ref{sec:distr-dual-aver}.
To explain the main idea behind dual averaging, let us first consider the iterations of Stochastic Gradient Descent (SGD), assuming $\psi \equiv 0$ for simplicity:
$$\theta(t+1) = \theta(t) - \gamma(t)g(t),$$
where $\bbE[g(t)|\theta(t)] = \nabla\avf(\theta(t))$, and $(\gamma(t))_{t \geq 0}$ is a non-negative non-increasing step size sequence. For SGD to converge to an optimal solution, the step size sequence must satisfy $\gamma(t)\underset{t \to +\infty}{\longrightarrow}0$ and $\sum_{t=0}^\infty\gamma(t) = \infty$. As noticed by \citet{Nesterov2009a}, an undesirable consequence is that new gradient estimates are given smaller weights than old ones. Dual averaging aims at integrating all gradient estimates with the same weight.

Let $(\gamma(t))_{t \geq 0}$ be a positive and non-increasing step size sequence. The dual averaging algorithm maintains a sequence of iterates $(\theta(t))_{t > 0}$, and a sequence $(z(t))_{t \geq 0}$ of ``dual'' variables which collects the sum of the unbiased gradient estimates seen up to time $t$. We initialize to $\theta(1)=z(0)=0$. At each step $t>0$, we compute an unbiased estimate $g(t)$ of $\nabla\avf(\theta(t))$. The most common choice is to take $g(t)=\nabla f(\theta; x_{i_t}, x_{j_t})$ where $i_t$ and $j_t$ are drawn uniformly at random from $[n]$. We then set $z(t + 1)=z(t)+g(t)$ and generate the next iterate with the following rule:
$$ \displaystyle\begin{cases}
\theta(t+1)=\smoothop_t^\psi(z(t + 1)),\\
\smoothop_t^\psi(z):= \displaystyle\argmin_{\theta \in \bbR^d}
    \left\{
    -z^{\top}\theta + \frac{\| \theta \|^2}{2 \gamma(t)}  + t \psi(\theta) \right\}.
\end{cases}
$$
When it is clear from the context, we will drop the dependence in $\psi$ and simply write $\smoothop_t(z)=\smoothop_t^\psi(z)$.
\begin{remark}
  Note that $\smoothop_t(\cdot)$ is related to the proximal operator of a function $\phi:\bbR^d \to \bbR$ defined by $\prox_{\phi}(x)=\argmin_{z\in \bbR^d} \left(\|z-x\|^2/2 +\phi(x)\right)$. Indeed, one can write:
$$\smoothop_t(z)=\prox_{t\gamma(t) \psi}\left(\gamma(t)z\right).$$
For many functions $\psi$ of practical interest, $\smoothop_t(\cdot)$ has a closed form solution.
For instance, when $\psi=\|\cdot\|^2$, $\smoothop_t(\cdot)$ corresponds to a simple scaling, and when $\psi=\|\cdot\|_1$ it is a soft-thresholding operator. If $\psi$ is the indicator function of a closed convex set $\mathcal{C}$, then $\smoothop_t(\cdot)$ is the projection operator onto $\mathcal{C}$.
\end{remark}

The dual averaging method is summarized in Algorithm~\ref{alg:dual_averaging_sto}. If $\gamma(t)\propto 1/\sqrt{t}$ then for any $T>0$:
$$\bbE_T  \big[R_n(\avtheta(T)) - R_n(\theta^*) \big] = \mathcal{O}(1/\sqrt{T}),$$
where $\theta^*\in\argmin_{\theta\in\bbR^d} R_n(\theta)$, $\avtheta(T)=\frac{1}{T}\sum_{i=1}^T\theta(t)$ is the averaged iterate and $\bbE_T$ is the expectation over all possible sequences $(g(t))_{1 \leq t \leq T}$. A precise statement of this result along with a proof can be found in the supplementary material for completeness.

Notice that dual averaging cannot be easily adapted to our decentralized setting. Indeed, a node cannot compute an unbiased estimate of its gradient: this would imply an access to the entire set of data points, which violates the communication and storage constraints. Therefore, data points have to be appropriately propagated during the optimization procedure, as detailed in the following section.





\section{Pairwise Gossip Dual Averaging}
\label{sec:distr-dual-aver}


We now turn to our main goal, namely to develop efficient gossip algorithms for solving Problem~\eqref{eq:emp_opt_problem} in the decentralized setting.
The methods we propose rely on dual averaging (see Section~\ref{subsec:centralize_dual_averaging}). This choice is guided by the fact that the structure of the updates makes dual averaging much easier to analyze in the distributed setting than sub-gradient descent when the problem is constrained or regularized. This is because dual averaging maintains a simple sum of sub-gradients, while the (non-linear) smoothing operator $\smoothop_t$ is applied separately.

Our work builds upon the analysis of \citet{Duchi2012a}, who proposed a distributed dual averaging algorithm to optimize an average of \emph{univariate} functions $f(\cdot; x_i)$. In their algorithm, each node $i$ computes \emph{unbiased} estimates of its local function $\nabla f(\cdot; x_i)$ that are iteratively averaged over the network.
Unfortunately, in our setting, the node $i$ cannot compute unbiased estimates of $\nabla f_i(\cdot) = \nabla (1/n)\sum_{j = 1}^n f(\cdot; x_i, x_j)$: the latter depends on all data points while each node $i \in [n]$ only holds $x_i$. To go around this problem, we rely on a gossip data propagation step \citep{Pelckmans2009a,Colin_Bellet_Salmon_Clemencon15} so that the nodes are able to compute \emph{biased} estimates of $\nabla f_i(\cdot)$ while keeping the communication and memory overhead to a small level for each node.



We present and analyze our algorithm in the synchronous setting in Section~\ref{subsec:synchronous-setting}. We then turn to the more intricate analysis of the asynchronous setting in Section~\ref{subsec:asynchronous-setting}.

\subsection{Synchronous Setting}
\label{subsec:synchronous-setting}

\begin{algorithm}[t]
  \small
  \caption{Gossip dual averaging for pairwise function in synchronous setting}
  \label{alg:gossip_dual_averaging_sync}
  \begin{algorithmic}[1]
    \REQUIRE Step size $(\gamma(t))_{t \geq 1} > 0$.
    \STATE Each node $i$ initializes $y_i = x_i$, $z_i = \theta_i = \avtheta_i = 0$.
    \FOR{$t = 1, \ldots, T$}
    \STATE Draw $(i, j)$ uniformly at random from $E$
    \STATE Set $z_i, z_j \gets \frac{z_i + z_j}{2}$
    \STATE Swap auxiliary observations: $y_i \leftrightarrow y_j$
    \FOR{$k = 1, \ldots, n$}
    \STATE Update $z_k \gets z_k + \nabla_{\theta} f(\theta_k; x_k, y_k)$
    \STATE Compute $\theta_k \gets \smoothop_t(z_k)$
    \STATE Average $\avtheta_k \gets \left( 1 - \frac{1}{t} \right) \avtheta_k + \frac{1}{t} \theta_k$
    \ENDFOR
    \ENDFOR
    \STATE \textbf{return} Each node $k$ has $\avtheta_k$
  \end{algorithmic}
\end{algorithm}

In the synchronous setting, we assume that each node has access to a global clock such that every node can update simultaneously at each tick of the clock. Although not very realistic, this setting allows for simpler analysis. We assume that the scaling sequence $(\gamma(t))_{t \geq 0}$ is the same for every node.
At any time, each node $i$ has the following quantities in its local memory register: a variable $z_i$ (the gradient accumulator), its original observation $x_i$, and an \textit{auxiliary observation} $y_i$, which is initialized at $x_i$ but will change throughout the algorithm as a result of data propagation.

The algorithm goes as follows. At each iteration, an edge $(i, j)\in E$ of the graph is drawn uniformly at random. Then, nodes $i$ and $j$ average their gradient accumulators $z_i$ and $z_j$, and swap their auxiliary observations $y_i$ and $y_j$. Finally, every node of the network performs a dual averaging step, using their original observation and their current auxiliary one to estimate the partial gradient. The procedure is detailed in Algorithm~\ref{alg:gossip_dual_averaging_sync}, and the following proposition adapts the convergence rate of centralized dual averaging under the hypothesis that the contribution of the bias term decreases fast enough over the iterations.

\begin{theorem}
  \label{thm:gossip_dual_averaging_general_rate}
  Let $\mathcal{G}$ be a connected and non-bipartite graph with $n$ nodes, and let $\theta^*\in\argmin_{\theta\in\bbR^d} R_n(\theta)$. Let $(\gamma(t))_{t \geq 1}$ be a non-increasing and non-negative sequence. For any $i \in [n]$ and any $t\geq 0$, let $z_i(t) \in \bbR^d$ and $\avtheta_i(t) \in \bbR^d$ be generated according to Algorithm~\ref{alg:gossip_dual_averaging_sync}. Then for
  any $i \in [n]$ and $T > 1$, we have:
  \[
    \bbE_T[R_n(\avtheta_i) - R_n(\theta^*)] \leq C_1(T) + C_2(T) + C_3(T),
  \]
  where
  \[
    \left\{
    \begin{aligned}
      C_1(T) &= \frac{1}{2 T \gamma(T)} \| \theta^* \|^2 + \frac{L_f^2}{2T} \sum_{t = 1}^{T - 1} \gamma(t), \\
      C_2(T) &= \frac{3L_f^2}{T\left(1 - \sqrt{\lambda_2^{\mathcal{G}}}\right)}\sum_{t = 1}^{T - 1}\gamma(t),\\
      C_3(T) &= \frac{1}{T} \sum_{t = 1}^{T - 1} \bbE_t[(\omega(t) - \theta^*)^{\top} \bar{\epsilon}^n(t)],
    \end{aligned}
    \right.
  \]
  and $\lambda_2^{\mathcal{G}} < 1$ is the second largest eigenvalue of the matrix $W(\mathcal{G}) = I_n - \frac{1}{|E|} L(\mathcal{G})$.
\end{theorem}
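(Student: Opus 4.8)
The plan is to reduce the decentralized dynamics to a single virtual centralized dual averaging sequence plus two controllable error sources. First I would introduce the averaged dual variable $\bar{z}^n(t) = \frac{1}{n}\sum_{k=1}^n z_k(t)$ and the associated virtual iterate $\omega(t) = \smoothop_t(\bar{z}^n(t))$. The key observation is that the gossip step (line~4 of Algorithm~\ref{alg:gossip_dual_averaging_sync}) only redistributes mass between $z_i$ and $z_j$ and therefore preserves the sum $\sum_k z_k$; consequently $\bar{z}^n(t+1) = \bar{z}^n(t) + \bar{g}^n(t)$ with $\bar{g}^n(t) = \frac{1}{n}\sum_k \nabla_\theta f(\theta_k(t); x_k, y_k(t))$, i.e.\ $\omega(t)$ evolves exactly as a centralized dual averaging sequence fed with the averaged (biased) gradients. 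Using the averaging update, convexity of $R_n$ and Jensen's inequality, $R_n(\avtheta_i(T)) - R_n(\theta^*) \leq \frac{1}{T}\sum_{t=1}^{T}[R_n(\theta_i(t)) - R_n(\theta^*)]$, and I would split each summand as $[R_n(\theta_i(t)) - R_n(\omega(t))] + [R_n(\omega(t)) - R_n(\theta^*)]$.

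For the second bracket I would apply the centralized dual averaging regret bound from Section~\ref{subsec:centralize_dual_averaging} (proved in the supplement) to the sequence $\omega(t)$. Since $\|\bar{g}^n(t)\| \leq L_f$ (each $f(\cdot; x_k, y_k)$ is $L_f$-Lipschitz), this produces exactly the term $C_1(T)$ after the standard reindexing that truncates the gradient sums at $T-1$. The subtlety is that $\bar{g}^n(t)$ is a \emph{biased} estimate: writing $\bar{g}^n(t) = \nabla\avf(\omega(t)) + \bar{\epsilon}^n(t) + r(t)$, where $\bar{\epsilon}^n(t) = \frac{1}{n}\sum_k[\nabla_\theta f(\omega(t); x_k, y_k(t)) - \nabla f_k(\omega(t))]$ is the data-propagation bias and $r(t)$ gathers the discrepancies due to $\theta_k(t)\neq\omega(t)$, the convexity step $\avf(\omega(t)) - \avf(\theta^*) \leq \langle \nabla\avf(\omega(t)), \omega(t)-\theta^*\rangle$ leaves over the inner product $\langle \bar{\epsilon}^n(t), \omega(t) - \theta^*\rangle$, whose time-average is precisely $C_3(T)$. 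This is the term that is not controlled in general and motivates the fast-decay hypothesis.

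For the first bracket (and the residual $r(t)$), everything is governed by the network disagreement $\|z_i(t) - \bar{z}^n(t)\|$, which I would bound using the spectral properties of $W(\mathcal{G}) = I_n - \frac{1}{|E|}L(\mathcal{G})$. Stacking the $z_k(t)$ into a matrix and unrolling the recursion expresses the disagreement as a sum over all past gradient injections, each transported by a product of the random one-edge averaging matrices whose expectation is $W(\mathcal{G})$. Projecting onto $\1_n^\perp$, the expected contraction per step is controlled by $\lambda_2^{\mathcal{G}}$; passing to the norm via Jensen ($\bbE\|\cdot\| \leq \sqrt{\bbE\|\cdot\|^2}$) yields a per-step factor $\sqrt{\lambda_2^{\mathcal{G}}}$, and summing the geometric series $\sum_s (\sqrt{\lambda_2^{\mathcal{G}}})^{t-s} L_f$ gives the $1/(1-\sqrt{\lambda_2^{\mathcal{G}}})$ factor. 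Connectedness guarantees $\lambda_2^{\mathcal{G}} < 1$ and non-bipartiteness keeps the smallest eigenvalue of $W(\mathcal{G})$ strictly above $-1$, so the spectral radius on $\1_n^\perp$ is indeed $<1$. Since $\theta_i(t) = \smoothop_t(z_i(t))$ and $\smoothop_t$ is $\gamma(t)$-Lipschitz (it is the proximal map of $t\gamma(t)\psi$ composed with scaling by $\gamma(t)$), one gets $\|\theta_i(t) - \omega(t)\| \leq \gamma(t)\|z_i(t) - \bar{z}^n(t)\|$; feeding this and $\|r(t)\|$ through the $L_f$-Lipschitz bound on $\avf$ and collecting constants produces $C_2(T)$ with its factor $3L_f^2$.

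The main obstacle is the consensus analysis of the third paragraph: correctly writing $z_i(t) - \bar{z}^n(t)$ as a sum of past gradients carried by random products of mixing matrices, and establishing the expected geometric contraction at rate $\sqrt{\lambda_2^{\mathcal{G}}}$ on $\1_n^\perp$ while cleanly separating this mixing error (which becomes $C_2$) from the data-propagation bias (which becomes $C_3$). Once the disagreement bound and the $\gamma(t)$-Lipschitz property of $\smoothop_t$ are in hand, the remaining steps are routine combinations of convexity, the centralized regret bound, and the uniform gradient bound $\|\nabla_\theta f\| \leq L_f$.
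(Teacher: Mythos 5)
Your overall plan is the paper's plan (virtual averaged sequence $\omega(t)=\smoothop_t(\avz(t))$, conservation of $\avz$ under the gossip step, the spectral disagreement lemma with the $\sqrt{\lambda_2^{\mathcal{G}}}$ contraction and geometric series, the $\gamma(t)$-Lipschitzness of $\smoothop_t$, the factor $3$ from triangle inequalities, and a leftover bias inner product), but one step in the middle would fail as written. You decompose $\bar{g}^n(t) = \nabla\avf(\omega(t)) + \bar{\epsilon}^n(t) + r(t)$, where $r(t)$ collects the gradient discrepancies due to $\theta_k(t)\neq\omega(t)$, and propose to control $r(t)$ by the network disagreement. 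This requires gradients at nearby points to be close, i.e.\ smoothness (Lipschitz continuity of $\nabla f$), which is \emph{not} among the hypotheses: the paper only assumes $f(\cdot;x,x')$ differentiable, convex and $L_f$-Lipschitz (bounded gradients), and its Remark explicitly anticipates the non-smooth case with subgradients, where $\|\nabla_\theta f(\theta_k(t);x_k,y_k) - \nabla_\theta f(\omega(t);x_k,y_k)\|$ can be of order $L_f$ however small $\gamma(t)\|z_k(t)-\avz(t)\|$ is (think of a hinge-type loss near its kink). The only bound you actually have is $\|r(t)\|\leq 2L_f$, and ``feeding $\|r(t)\|$ through the $L_f$-Lipschitz bound on $\avf$'' then leaves a non-vanishing per-step contribution of order $L_f\,\|\omega(t)-\theta^*\|$, which destroys the stated rate.

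The paper avoids this by never comparing gradients at two different points. Convexity is invoked at $\theta_j(t)$, where the update is actually computed ($\bbE[g_j(t)\,|\,\theta_j(t)]=\nabla f_j(\theta_j(t))$), and the resulting inner product is split via $\theta_j(t)-\theta^* = (\theta_j(t)-\omega(t)) + (\omega(t)-\theta^*)$: the first piece costs $L_f\|\theta_j(t)-\omega(t)\|\leq L_f\,\gamma(t-1)\|z_j(t)-\avz(t)\|$, and the relocation of \emph{function values} (not gradients), $|f_j(\theta_i(t))-f_j(\theta_j(t))|\leq L_f\|\theta_i(t)-\theta_j(t)\|$, accounts for the $\|z_i(t)-z_j(t)\|$ terms; together these are absorbed into $C_2(T)$ by the disagreement lemma. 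The second piece, averaged over $j$, gives $(\omega(t)-\theta^*)^{\top}\bar{g}^n(t) = (\omega(t)-\theta^*)^{\top}(\bar{d}^n(t)-\bar{\epsilon}^n(t))$; the dual-averaging regret lemma is then applied to the sequence $\bar{d}^n(t)$ \emph{actually fed} into the dual variables (it only requires $\|\bar{d}^n(t)\|\leq L_f$), yielding $C_1(T)$, and the leftover is exactly $C_3(T)$ with the bias $\epsilon_k(t)=d_k(t)-g_k(t)$ defined at $\theta_k(t)$. Note also that your $\bar{\epsilon}^n(t)$, defined by evaluating at $\omega(t)$, is not the quantity appearing in the theorem's $C_3(T)$. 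The rest of your argument (the unrolled recursion $Z(t)=\sum_s W(t{:}s)G(s)$, projection onto the orthogonal complement of $\1_n$, Jensen via $\bbE\|\cdot\|\leq\sqrt{\bbE\|\cdot\|^2}$, and the resulting $L_f/(1-\sqrt{\lambda_2^{\mathcal{G}}})$ bound) matches the paper's Lemma on the dual variables and needs no change; repairing the proof is just a matter of replacing your gradient relocation $r(t)$ by the paper's function-value relocation before linearizing.
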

\begin{proof}[Sketch of proof]
First notice that at a given (outer) iteration $t + 1$, $\avz$ is updated as follows:
\begin{equation}
  \label{eq:update_av_z}
  \avz(t + 1) = \avz(t) + \frac{1}{n} \sum_{k = 1}^n d_k(t),
\end{equation}
where $d_k(t)=\nabla_{\theta} f(\theta_k(t); x_k, y_k(t + 1))$ is a biased estimate of $\nabla f_k(\theta_k(t))$. Let $\epsilon_k(t)=d_k(t)-g_k(t)$ be the bias, so that we have $\bbE[g_k(t) | \theta_k(t)] = \nabla f_k(\theta_k(t))$.

Let us define $\projavz(t) = \smoothop_t(\avz(t))$. Using convexity of $R_n$, the gradient's definition and the fact that the functions $\avf$ and $\smoothop_t$ are both $L_f$-Lipschitz, we obtain: for $T \geq 2$ and $i \in [n]$,
  \begin{eqnarray}
      \lefteqn{\bbE_T[ R_n(\avtheta_i(T)) - R_n(\theta^*)]}\nonumber\\
      &\leq& \frac{L_f}{nT} \sum_{t = 2}^T \gamma(t - 1) \sum_{j = 1}^n \bbE_t\Big[ \|z_i(t) - z_j(t) \| \Big]\label{eq:part1a} \\
      &+& \frac{L_f}{nT} \sum_{t = 2}^T \gamma(t - 1) \sum_{j = 1}^n \bbE_t\Big[\| \avz(t) - z_j(t) \| \Big]\label{eq:part1b}\\
      & +& \frac{1}{T} \sum_{t = 2}^T \bbE_t[(\omega(t) - \theta^*)^{\top} \avg(t) ].\label{eq:part2}
  \end{eqnarray}
Using Lemma~4 (see supplementary material), the terms \eqref{eq:part1a}-\eqref{eq:part1b} can be bounded by $C_2(T)$. The term \eqref{eq:part2} requires a specific analysis because the updates are performed using biased estimates. We decompose it as follows:
  \begin{eqnarray}
      \lefteqn{\frac{1}{T} \sum_{t = 2}^T \bbE_t\Big[\omega(t) - \theta^*)^{\top} \bar{g}^n(t) \Big]}\nonumber\\
      &=& \frac{1}{T} \sum_{t = 2}^T \bbE_t\Big[(\omega(t) - \theta^*)^{\top} (\bar{d}^n(t) - \bar{\epsilon}^n(t))\Big] \nonumber\\
      &\leq& \frac{1}{T} \sum_{t = 2}^T \bbE_t\Big[(\omega(t) - \theta^*)^{\top} \bar{d}^n(t)\Big]\label{eq:xiao}\\
      &+& \frac{1}{T} \sum_{t = 2}^{T} \bbE_t\Big[(\omega(t) - \theta^*)^{\top} \bar{\epsilon}^n(t)\Big]\nonumber
      .
  \end{eqnarray}
The term \eqref{eq:xiao} can be bounded by $C_1(T)$ \citep[see][Lemma 9]{Xiao10}.
We refer the reader to the supplementary material for the detailed proof.
\end{proof}

The rate of convergence in Proposition~\ref{thm:gossip_dual_averaging_general_rate} is divided into three parts: $C_1(T)$ is a \emph{data dependent} term which corresponds to the rate of convergence of the centralized dual averaging, while $C_2(T)$ and $C_3(T)$ are \emph{network dependent} terms since $1 - \lambda_2^{\mathcal{G}}=\beta_{n-1}^{\mathcal{G}}/|E|$, where $\beta_{n-1}^{\mathcal{G}}$ is the second smallest eigenvalue of the graph Laplacian $L(\mathcal{G})$, also known as the spectral gap of $\mathcal{G}$. The convergence rate of our algorithm thus improves when the spectral gap is large, which is typically the case for well-connected graphs \citep{chung1997spectral}. Note that $C_2(T)$ corresponds to the network dependence for the distributed dual averaging algorithm of \citet{Duchi2012a} while the term $C_3(T)$ comes from the bias of our partial gradient estimates. In practice, $C_3(T)$ vanishes quickly and has a small impact on the rate of convergence, as shown in Section~\ref{sec:experiments}.






\subsection{Asynchronous Setting}
\label{subsec:asynchronous-setting}

\begin{algorithm}[t]
  \small
  \caption{Gossip dual averaging for pairwise function in asynchronous setting }
  \label{alg:gossip_dual_averaging_async}
  \begin{algorithmic}[1]
    \REQUIRE Step size $(\gamma(t))_{t \geq 0} > 0$, probabilities $(p_k)_{k \in [n]}$.
    \STATE Each node $i$ initializes $y_i = x_i$, $z_i = \theta_i = \avtheta_i = 0$, $m_i = 0$.
    \FOR{$t = 1, \ldots, T$}
    \STATE Draw $(i, j)$ uniformly at random from $E$
    \STATE Swap auxiliary observations: $y_i \leftrightarrow y_j$
    \FOR{$k \in \{i, j\}$}
    \STATE Set $z_k \gets \frac{z_i + z_j}{2}$
    \STATE Update $z_k \gets \frac{1}{p_k} \nabla_{\theta} f(\theta_k; x_k, y_k)$
    \STATE Increment $m_k \gets m_k + \frac{1}{p_k}$
    \STATE Compute $\theta_k \gets \smoothop_{m_k}(z_k)$
    \STATE Average $\avtheta_k \gets \left( 1 - \frac{1}{m_k p_k} \right) \avtheta_k$
    \ENDFOR
    \ENDFOR
    \STATE \textbf{return} Each node $k$ has $\avtheta_k$
  \end{algorithmic}
\end{algorithm}

For any variant of gradient descent over a network with a decreasing step size, there is a need for a common time scale to perform the suitable decrease. In the synchronous setting, this time scale information can be shared easily among nodes by assuming the availability of a global clock. This is convenient for theoretical considerations, but is unrealistic in practical (asynchronous) scenarios.
In this section, we place ourselves in a fully asynchronous setting where each node has a local clock, ticking at a Poisson rate of $1$, independently from the others. This is equivalent to a global clock ticking at a rate $n$ Poisson process which wakes up an edge of the network uniformly at random \citep[see][for details on clock modeling]{Boyd2006a}.

With this in mind, Algorithm~\ref{alg:gossip_dual_averaging_sync} needs to be adapted to this setting. First, one cannot perform a full dual averaging update over the network since only two nodes wake up at each iteration. Also, as mentioned earlier, each node needs to maintain an estimate of the current iteration number in order for the scaling factor $\gamma$ to be consistent across the network. For $k \in [n]$, let $p_k$ denote the probability for the node $k$ to be picked at any iteration. If the edges are picked uniformly at random, then one has $p_k = 2 d_k / |E|$. For simplicity, we focus only on this case, although our analysis holds in a more general setting.

Let us define an activation variable $(\delta_k(t))_{t \geq 1}$ such that for any $t \geq 1$,
\[
  \delta_k(t) =
  \begin{cases}
    1 & \text{if node $k$ is picked at iteration $t$}, \\
    0 & \text{otherwise}.
  \end{cases}
\]
One can immediately see that $(\delta_k(t))_{t \geq 1}$ are i.i.d.\ random variables, Bernoulli distributed with parameter $p_k$. Let us define $(m_k(t)) \geq 0$ such that $m_k(0) = 0$ and for $t \geq 0$, $m_k(t + 1) = m_k(t) + \frac{\delta_k(t + 1)}{p_k}$. Since $(\delta_k(t))_{t \geq 1}$ are Bernoulli random variables, $m_k(t)$ is an unbiased estimate of the time $t$.

Using this estimator, we can now adapt Algorithm~\ref{alg:gossip_dual_averaging_sync} to the fully asynchronous case, as shown in Algorithm~\ref{alg:gossip_dual_averaging_async}.
The update step slightly differs from the synchronous case: the partial gradient has a weight $1/p_k$ instead of $1$ so that all partial functions asymptotically count in equal way in every gradient accumulator. In contrast, uniform weights would penalize partial gradients from low degree nodes since the probability of being drawn is proportional to the degree. This weighting scheme is essential to ensure the convergence to the global solution. The model averaging step also needs to be altered: in absence of any global clock, the weight $1/t$ cannot be used and is replaced by $1 / (m_k p_k)$, where $m_k p_k$ corresponds to the average number of times that node $k$ has been selected so far.

The following result is the analogous of Theorem~\ref{thm:gossip_dual_averaging_general_rate} for the asynchronous setting.

\begin{theorem}\label{th:asynch}
  Let $\mathcal{G}$ be a connected and non bipartite graph.
  Let $(\gamma(t))_{t \geq 1}$ be defined as $\gamma(t)= c/ t^{1/2+\alpha}$ for some constant $c>0$ and $\alpha \in(0,1/2)$. For $i \in [n]$, let $(d_i(t))_{t \geq 1}$, $(g_i(t))_{t \geq 1}$, $(\epsilon_i(t))_{t \geq 1} $, $(z_i(t))_{t \geq 1} $ and $(\theta_i(t))_{t \geq 1} $ be generated as described in Algorithm~\ref{alg:gossip_dual_averaging_async}. Then, there exists some constant $C<+\infty$ such that, for $\theta^* \in \argmin_{\theta' \in \bbR^d} R_n(\theta')$, $i \in [n]$ and $T > 0$, 
  \begin{align}
    R_n(\avtheta_i(T)) - R_n(\theta^*) \leq& C  \max(T^{-\alpha/2},T^{\alpha-1/2}) \nonumber\\
                                           &+ \frac{1}{T} \sum_{t = 2}^T \mathbb{E}_t[(\omega(t) - \theta^*)^{\top} \overline{\epsilon}^n(t)]. \nonumber
  \end{align}
\end{theorem}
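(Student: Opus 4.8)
The plan is to mirror the decomposition already used for Theorem~\ref{thm:gossip_dual_averaging_general_rate}, isolating exactly the same bias term and absorbing everything else into $C\max(T^{-\alpha/2},T^{\alpha-1/2})$. First I would track the averaged dual variable $\avz(t)$ and its associated centralized iterate $\projavz(t)=\smoothop_t(\avz(t))$. The point is that although only two nodes wake up per tick, the per-node reweighting by $1/p_k$ is chosen so that the \emph{expected} per-tick increment of $\avz$ reproduces the synchronous full accumulation $\frac{1}{n}\sum_k d_k(t)$ (each node $k$ is active with probability $p_k$ and contributes $\frac{1}{p_k}d_k$, so the expectation is $\frac{1}{n}\sum_k d_k$). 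Convexity of $R_n$, together with the fact that $\avf$ and $\smoothop_t$ are $L_f$-Lipschitz, then splits $R_n(\avtheta_i(T))-R_n(\theta^*)$ into (i) network-disagreement terms controlling $\|z_i(t)-\avz(t)\|$, (ii) a centralized dual-averaging descent term $\frac{1}{T}\sum_t\bbE_t[(\omega(t)-\theta^*)^{\top}\bar{d}^n(t)]$, and (iii) the bias term $\frac{1}{T}\sum_t\bbE_t[(\omega(t)-\theta^*)^{\top}\overline{\epsilon}^n(t)]$, which I keep explicit, exactly as $C_3(T)$ before.

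Second, I would dispatch the network-disagreement terms. The new feature relative to the synchronous case is that the gossip step is random: each tick activates a single edge, so the one-step averaging operator is a random matrix $W(t)$ with $\bbE[W(t)]=W(\mathcal{G})=I_n-\frac{1}{|E|}L(\mathcal{G})$. Using independence of successive edge draws to factor expectations through products of the $W(t)$, I would control $\bbE\|z_i(t)-\avz(t)\|$ by the geometric mixing governed by $\lambda_2^{\mathcal{G}}<1$, i.e.\ the asynchronous analogue of Lemma~4. Since $\gamma(t)=c\,t^{-1/2-\alpha}$ gives $\sum_{t\le T}\gamma(t)=O(T^{1/2-\alpha})$, this contribution is $O(T^{-1/2-\alpha})$, which decays strictly faster than $T^{\alpha-1/2}$ whenever $\alpha>0$ and is therefore absorbed into the stated maximum.

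The hard part, and the genuinely new ingredient, is controlling the \emph{clock estimation}. Each node runs dual averaging on its local clock $m_k(t)=p_k^{-1}\sum_{s\le t}\delta_k(s)$ instead of the true global time $t$, so both the step size $\gamma(m_k(t))$ and the regularizer weight $m_k(t)\psi$ become random. As $m_k(t)$ is an unbiased estimate of $t$ built from i.i.d.\ $\mathrm{Bernoulli}(p_k)$ activations, it concentrates around $t$ with fluctuations of order $\sqrt{t}$ and exponential tails (Bernstein/Chernoff). I would split on a typical event $\{|m_k(t)-t|\le \kappa\sqrt{t\log t}\}$, where smoothness of $s\mapsto\gamma(s)$ yields $\gamma(m_k(t))=\gamma(t)\big(1+O(|m_k(t)-t|/t)\big)$, and an atypical event whose exponentially small probability makes its contribution negligible despite the unfavourable $1/\gamma$ weights. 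Propagating the relative clock error $|m_k(t)-t|/t=O(t^{-1/2})$ — whose half-power is inherited directly from the standard deviation $\sqrt{\mathrm{Var}(m_k(t))}=O(\sqrt{t})$ — through the regret bound, and exploiting that a faster-decaying $\gamma$ damps this error, yields an accumulated clock-mismatch contribution of order $T^{-\alpha/2}$. This is the origin of the $\max$: requiring both the centralized rate $T^{\alpha-1/2}$ and the clock-mismatch rate $T^{-\alpha/2}$ to vanish forces $\alpha<1/2$ and $\alpha>0$ respectively, which is exactly the admissible range $\alpha\in(0,1/2)$, with $\alpha=1/3$ balancing both at $T^{-1/6}$.

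Finally I would collect the pieces: the centralized descent term contributes $O(T^{\alpha-1/2})$ through the endpoint factor $\frac{1}{T\gamma(T)}\|\theta^*\|^2$ and the gradient-variance sum, via the analogue of \citep[Lemma~9]{Xiao10} adapted to random local clocks; the network term contributes $O(T^{-1/2-\alpha})$; and the clock mismatch contributes $O(T^{-\alpha/2})$. Bounding each by a single constant $C$ times $\max(T^{-\alpha/2},T^{\alpha-1/2})$ gives the claim, leaving the bias term untouched. I expect the main obstacle to be a rigorous, uniform-in-$t$ handling of the random step sizes $\gamma(m_k(t))$ inside the regret telescoping — in particular ensuring that the atypical-clock events, where $1/\gamma(m_k(t))$ can be abnormally large, remain negligible; this is precisely what the $\alpha>0$ margin and the exponential concentration of $m_k(t)$ are there to secure.
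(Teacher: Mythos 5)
Your overall architecture matches the paper's: the same three-way split keeping the bias term explicit, the network-disagreement term controlled through the random gossip matrices and the spectral gap exactly as in Lemma~\ref{lma:control_on_z}, and the $\max(T^{-\alpha/2},T^{\alpha-1/2})$ arising from balancing the centralized rate $T^{\alpha-1/2}$ against a clock-mismatch contribution of order $T^{-\alpha/2}$. Your concentration step is a legitimate (arguably more careful) substitute for the paper's tool: the paper invokes the law of iterated logarithms (Lemma~\ref{lma:bound_time_estimates}, applied with $q=\alpha/2$) to get an almost-sure envelope $|m_k(t)-t|\leq t^{1/2+q}$, rather than your Bernstein typical/atypical split, and your ``propagate $\gamma(m_k(t))\approx\gamma(t)$ through the regret'' step corresponds to the paper's explicit perturbation bound comparing $\smoothop_{t_1}(z)$ and $\smoothop_{t_2}(z)$ (Lemma~\ref{lma:thisistheend}), combined with $\|\avz(t)\|\leq L_f\max_k m_k(t)$, which is where $T^{-\alpha/2}$ actually emerges.

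There is, however, a genuine gap at your very first step. In the asynchronous algorithm the output is $\avtheta_i(T)=\frac{1}{m_i(T)}\sum_{t=1}^{T}\frac{\delta_i(t)}{p_i}\theta_i(t)$, so convexity produces terms of the form $\bbE\big[\frac{\delta_i(t)}{p_i}f_j(\theta_i(t))\big]$ in which the activation indicator $\delta_i(t)$ multiplies iterates that are themselves functions of the whole activation history; $\delta_i(t)$ is therefore \emph{not} independent of $\theta_i(t)$ or $\theta_j(t)$, and one cannot replace $\delta_i(t)/p_i$ by $1$, nor by $\delta_j(t)/p_j$, by unbiasedness alone. You assert that convexity ``then splits'' the regret into the synchronous three terms, but this is precisely where the paper must work hardest: it introduces the renewal times $t_1<\dots<t_{N_j}$ at which node $j$ wakes, computes $\bbE\big[\sum_{t=t_k}^{t_{k+1}-1}\delta_i(t)/p_i\,\big|\,t_k,t_{k+1}\big]$ with a case distinction on whether $(i,j)\in E$ (conditional probability $1/d_j$ of simultaneous activation) or $(i,j)\notin E$, and shows that $\bbE\big[\big(\frac{\delta_i(t)}{p_i}-\frac{\delta_j(t)}{p_j}\big)f_j(\theta_j(t))\big]$ vanishes up to boundary terms of order $1/T^{-}$ (see \eqref{eq:proba}); the same device is reused to handle the $\psi$-terms. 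Without this decoupling argument (or an equivalent martingale substitute), your step (ii) — the centralized descent term — is not available, because the gradient inequality can only be applied at node $j$'s own wake-up times while convexity is invoked along node $i$'s. The rest of your budget (network term $O(T^{-1/2-\alpha})$, clock error $O(T^{-\alpha/2})$) is sound, so the repair consists of adding exactly this renewal computation rather than changing the plan.
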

The proof is given in the supplementary material.

\begin{remark}
  In the asynchronous setting, no convergence rate was known even for the distributed dual averaging algorithm of \citet{Duchi2012a}, which deals with the simpler problem of minimizing \emph{univariate} functions. The arguments used to derive Theorem~\ref{th:asynch} can be adapted to derive a convergence rate (without the bias term) for an asynchronous version of their algorithm.
\end{remark}

\begin{remark}
We have focused on the setting where all pairs of observations are involved in the objective. In practice, the objective may depend only on a subset of all pairs. To efficiently apply our algorithm to this case, one should take advantage of the potential structure of the subset of interest: for instance, one could attach some additional concise information to each observation so that a node can easily identify whether a pair contributes to the objective, and if not set the loss to be zero. This is essentially the case in the AUC optimization problem studied in Section~\ref{sec:experiments}, where pairs of similarly labeled observations do not contribute to the objective. If the subset of pairs cannot be expressed in such a compact form, then one would need to provide each node with an index list of active pairs, which could be memory-intensive when $n$ is large.
\end{remark}




\section{Numerical Simulations}
\label{sec:experiments}

\begin{table*}[t]
\centering
\small
\ra{1.3}
\begin{tabular}{@{}lcccc@{}}
\toprule
Dataset & Complete graph & Watts-Strogatz & Cycle graph \\
\midrule
Breast Cancer (AUC Maximization, $n = 699$) & $1.43 \cdot 10^{-3}$ & $8.71 \cdot 10^{-5}$ & $5.78 \cdot 10^{-8}$ \\
Synthetic (Metric Learning, $n = 1000$) & $1.00 \cdot 10^{-3}$ & $6.23 \cdot 10^{-5}$ & $1.97 \cdot 10^{-8}$ \\
\bottomrule
\end{tabular}
\caption{Spectral gap values $1 - \lambda_2^{\mathcal{G}}$ for each network.}
\label{tab:networks}
\end{table*}

In this section, we present numerical experiments on two popular machine learning problems involving pairwise functions: Area Under the ROC Curve (AUC) maximization and metric learning. Our results show that our algorithms converge and that the bias term vanishes very quickly with the number of iterations.

To study the influence of the network topology, we perform our simulations on three types of network (see Table~\ref{tab:networks} for the corresponding spectral gap values):
\begin{itemize}
\item \emph{Complete graph:} All nodes are connected to each other. It is the ideal situation in our framework, since any pair of nodes can communicate directly. In this setting, the bias of gradient estimates should be very small, as one has for any $k \in [n]$ and any $t \geq 1$, $\bbE_t[d_k(t)|\theta_k(t)] = 1/ (n - 1) \sum_{y' \neq y_k(t)} \nabla_{\theta} f(\theta_k(t); x_k, y')$. For a network size $n$, the complete graph achieves the highest spectral gap: $1 - \lambda_2^{\mathcal{G}}=1/n$, see \citet[][Ch.9]{bollobas1998modern} or \citet[Ch.1]{chung1997spectral} for details.
\item \emph{Cycle graph:} This is the worst case in terms of connectivity: each node only has two neighbors. This network has a spectral gap of order $1/n^{3}$, and gives a lower bound in terms of convergence rate.
\item \emph{Watts-Strogatz:} This random network generation technique \citep{watts1998collective} 
relies on two parameters: the average degree of the network $k$ and a rewiring probability $p$. In expectation, the higher the rewiring probability, the better the connectivity of the network. Here, we use $k = 5$ and $p = 0.3$ to achieve a compromise between the connectivities of the complete graph and the cycle graph.
\end{itemize}

\begin{figure*}[t]
  \centering
  \subfigure[Evolution of the objective function and its standard deviation (synchronous)]{\includegraphics[height=4.7cm]{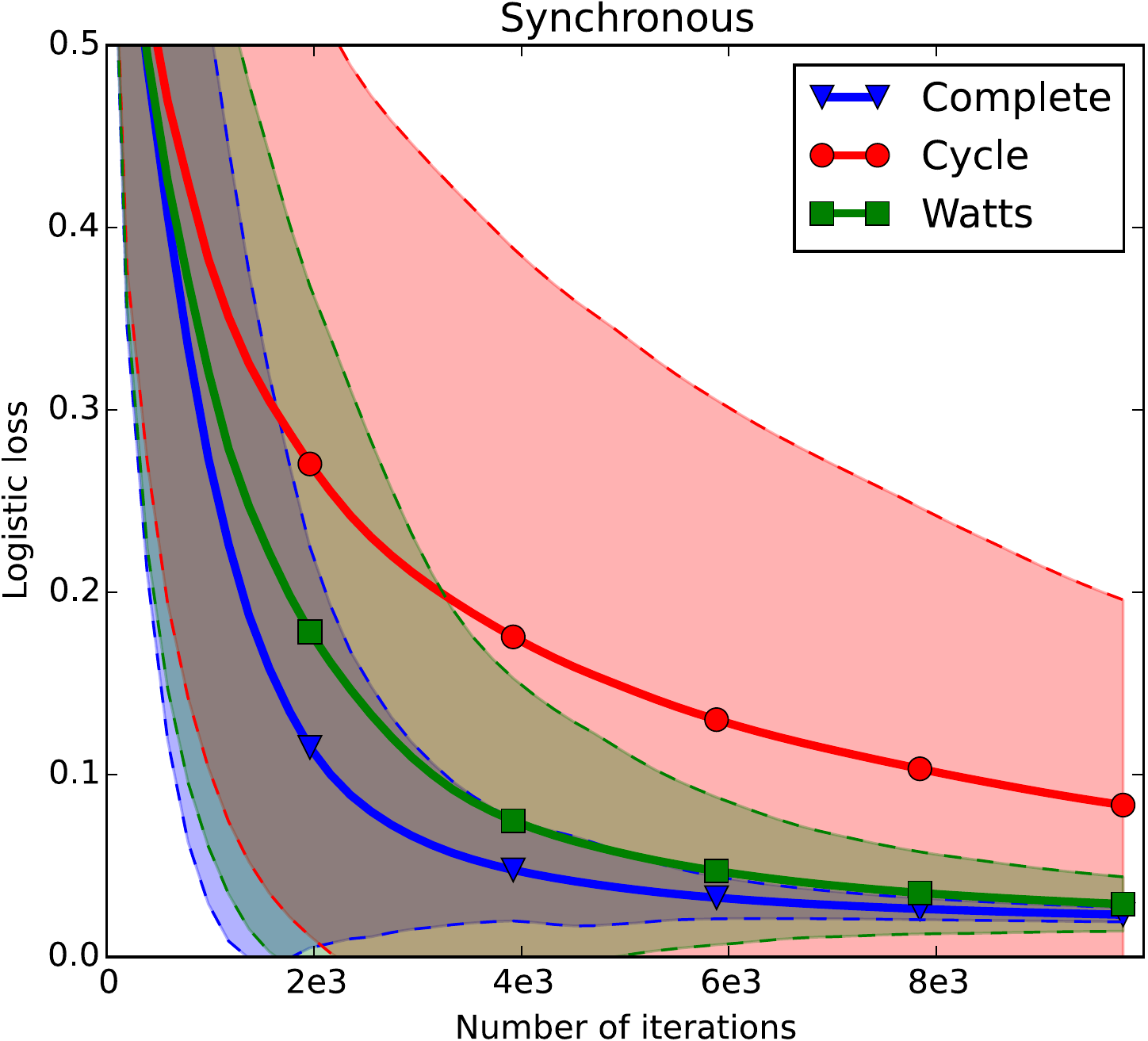}\label{fig:sync_graph-comp}}\hspace*{0.5cm}
  \subfigure[Evolution of the objective function and its standard deviation (asynchronous)]{\includegraphics[height=4.7cm]{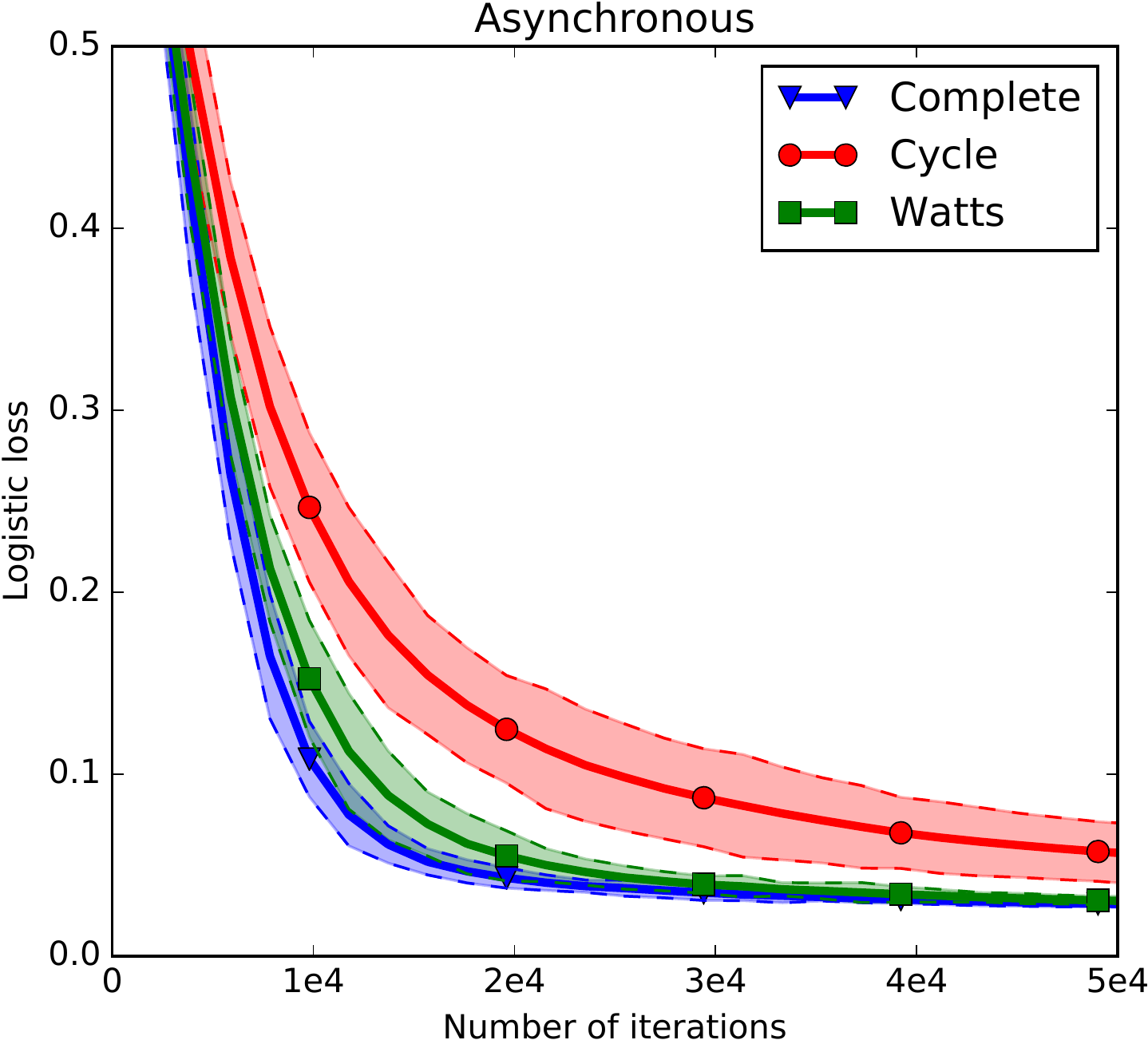}\label{fig:async_auc_standard}}\hspace*{0.5cm}
  \subfigure[Evolution of the bias term (asynchronous)]{\includegraphics[height=4.7cm]{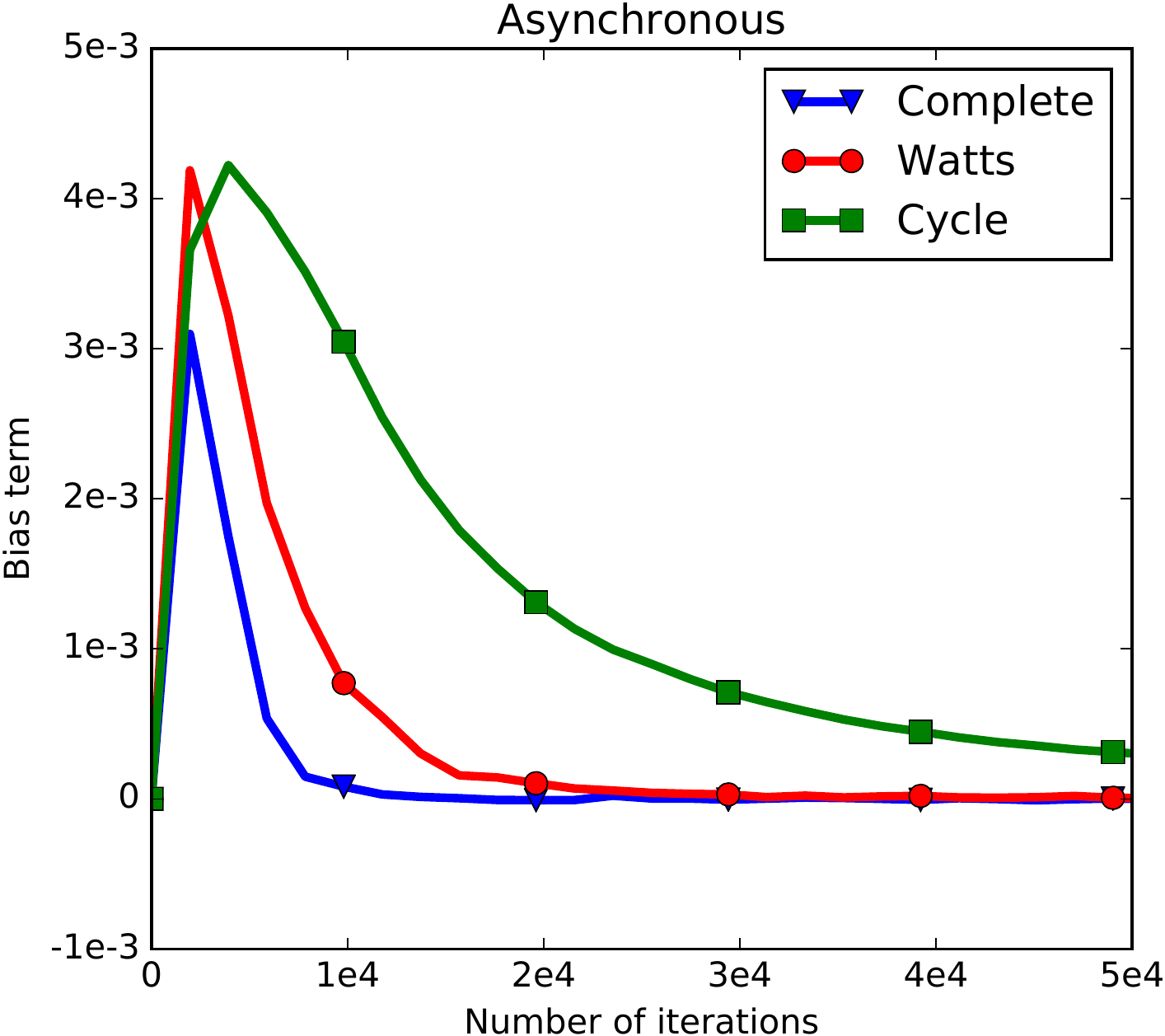}\label{fig:async_auc_bias}}
  \caption{AUC maximization in synchronous and asynchronous settings.}
\end{figure*}

\paragraph{AUC Maximization}
We first present an application of our algorithms to AUC maximization on a real dataset. Given a set of data points $x_1, \dots, x_n\in\bbR^d$ with associated binary labels $\ell_1,\dots,\ell_n\in\{-1,1\}$, the goal is to learn a linear scoring rule $x\mapsto x^\top\theta$ parameterized by $\theta\in\bbR^d$ which maximizes:
\begin{equation*}
  AUC(\theta) = \frac{\sum_{1 \leq i, j \leq n} \mathbb{I}_{\{\ell_i > \ell_j\}} \mathbb{I}_{\{x_i^{\top} \theta > x_j^{\top} \theta\}}}{\sum_{1 \leq i, j \leq n} \mathbb{I}_{\{\ell_i > \ell_j\}}}.
\end{equation*}
It corresponds to the probability that the scoring rule associated with $\theta$ outputs a higher score on a positively labeled sample than on a negatively labeled one. This formulation leads to a non-smooth optimization problem; therefore, one typically minimizes a convex surrogate such as the logistic loss:
\begin{equation*}
R_n(\theta) = \frac{1}{n^2} \sum_{1 \leq i, j \leq n} \mathbb{I}_{\{\ell_i > \ell_j\}} \log\left(1+\exp((x_j-x_i)^\top\theta)\right).
\end{equation*}
We do not apply any regularization (\textit{i.e.}, $\psi \equiv 0$), and use the Breast Cancer Wisconsin dataset,\footnote{\url{https://archive.ics.uci.edu/ml/datasets/Breast+Cancer+Wisconsin+(Original)}} which consists of $n = 699$ points in $d = 11$ dimensions.

We initialize each $\theta_i$ to $0$ and for each network, we run 50 times Algorithms~\ref{alg:gossip_dual_averaging_sync} and~\ref{alg:gossip_dual_averaging_async} with $\gamma(t) = 1 / \sqrt{t}$.\footnote{Even if this scaling sequence does not fulfill the hypothesis of Theorem~\ref{th:asynch} for the asynchronous setting, the convergence rate is acceptable in practice.} 
Figure~\ref{fig:sync_graph-comp} shows the evolution of the objective function and the associated standard deviation (across nodes) with the number of iterations in the synchronous setting. As expected, the average convergence rate on the complete and the Watts-Strogatz networks is much better than on the poorly connected cycle network. The standard deviation of the node estimates also decreases with the connectivity of the network.


The results for the asynchronous setting are shown in Figure~\ref{fig:async_auc_standard}. As expected, the convergence rate is slower in terms of number of iterations (roughly $5$ times) than in the synchronous setting. Note however that much fewer dual averaging steps are performed: for instance, on the Watts-Strogatz network, reaching a $0.1$ loss requires $210,000$ (partial) gradient computations in the synchronous setting and only $25,000$ in the asynchronous setting. Moreover, the standard deviation of the estimates is much lower than in the synchronous setting. This is because communication and local optimization are better balanced in the asynchronous setting (one optimization step for each gradient accumulator averaged) than in the synchronous setting ($n$ optimization steps for $2$ gradient accumulators averaged).

The good practical convergence of our algorithm comes from the fact that the bias term $\overline{\epsilon}^n(t)^{\top} \omega(t)$ vanishes quite fast. Figure~\ref{fig:async_auc_bias} shows that its average value quickly converges to $0$ on all networks. Moreover, its order of magnitude is negligible compared to the objective function.
In order to fully estimate the impact of this bias term on the performance, we also compare our algorithm to the ideal but unrealistic situation where each node is given an unbiased estimate of its partial gradient: instead of adding $\nabla f(\theta_i(t); x_i, y_i(t))$ to $z_i(t)$, a node $i$ will add $\nabla f(\theta_i(t); x_i, x_j)$ where $j\in[n]$ is picked uniformly at random. As shown in Figure~\ref{fig:async_auc_baseline-vs-gossip}, the performance of both methods are very similar on well-connected networks.

\begin{figure*}[t]
  \centering
  \includegraphics[height=5.2cm]{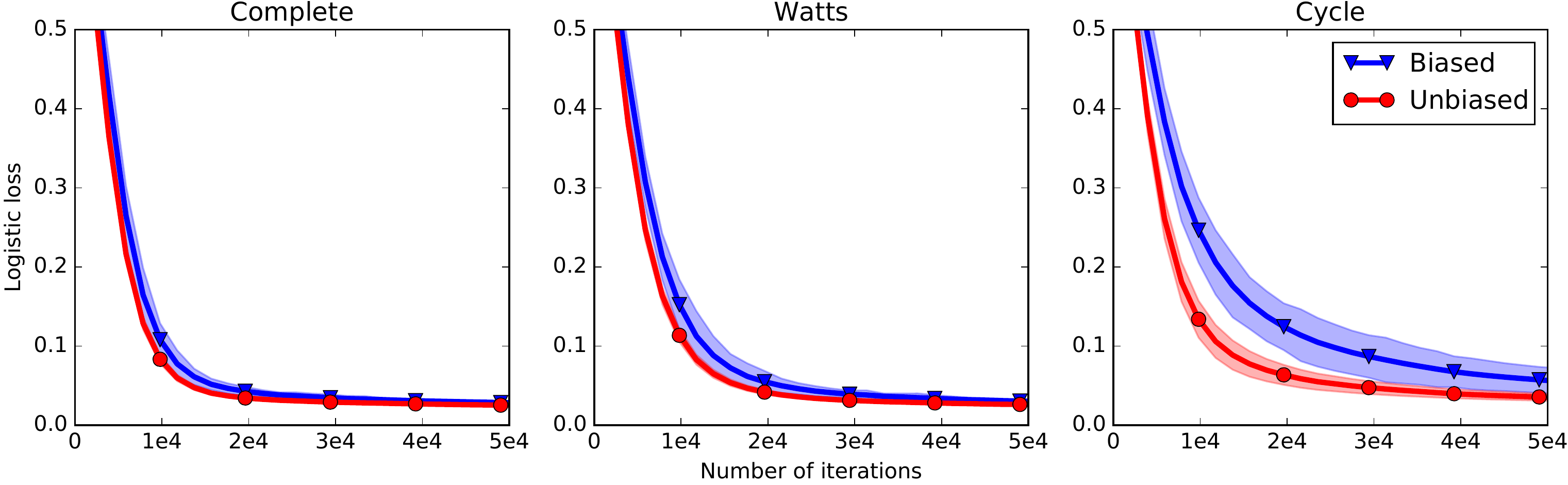}
  \caption{AUC maximization: comparison between our algorithm and an unbiased version.}
  \label{fig:async_auc_baseline-vs-gossip}
\end{figure*}

\begin{figure*}[t]
  \centering
\subfigure[Evolution of the objective function and its standard deviation (asynchronous setting)]{\hspace*{.5cm}\includegraphics[height=5.2cm]{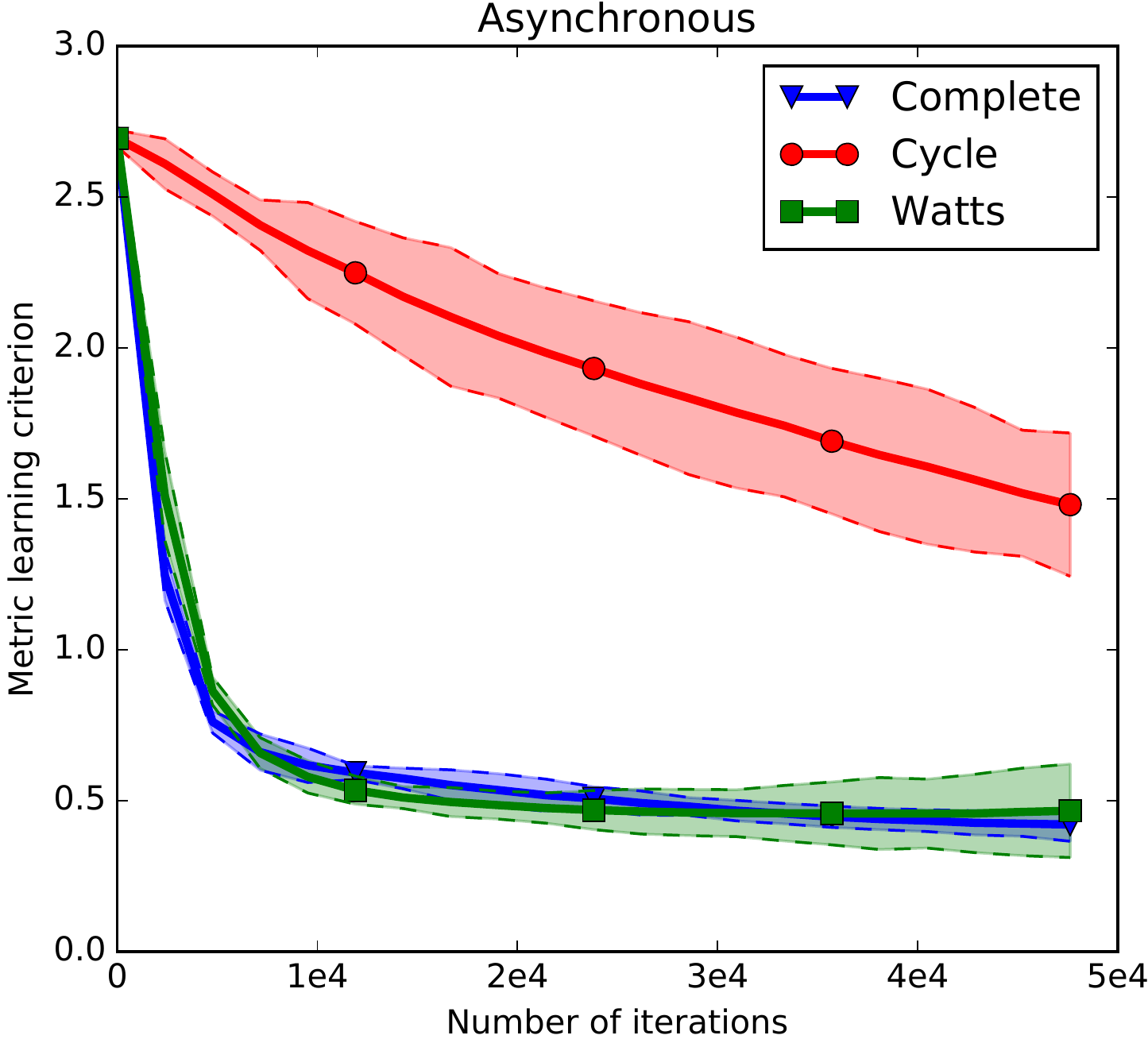}\hspace*{.5cm}\label{fig:async_ml_standard}}\hspace*{.5cm}
  \subfigure[Evolution of the bias term]{\hspace*{.5cm}\includegraphics[height=5.2cm]{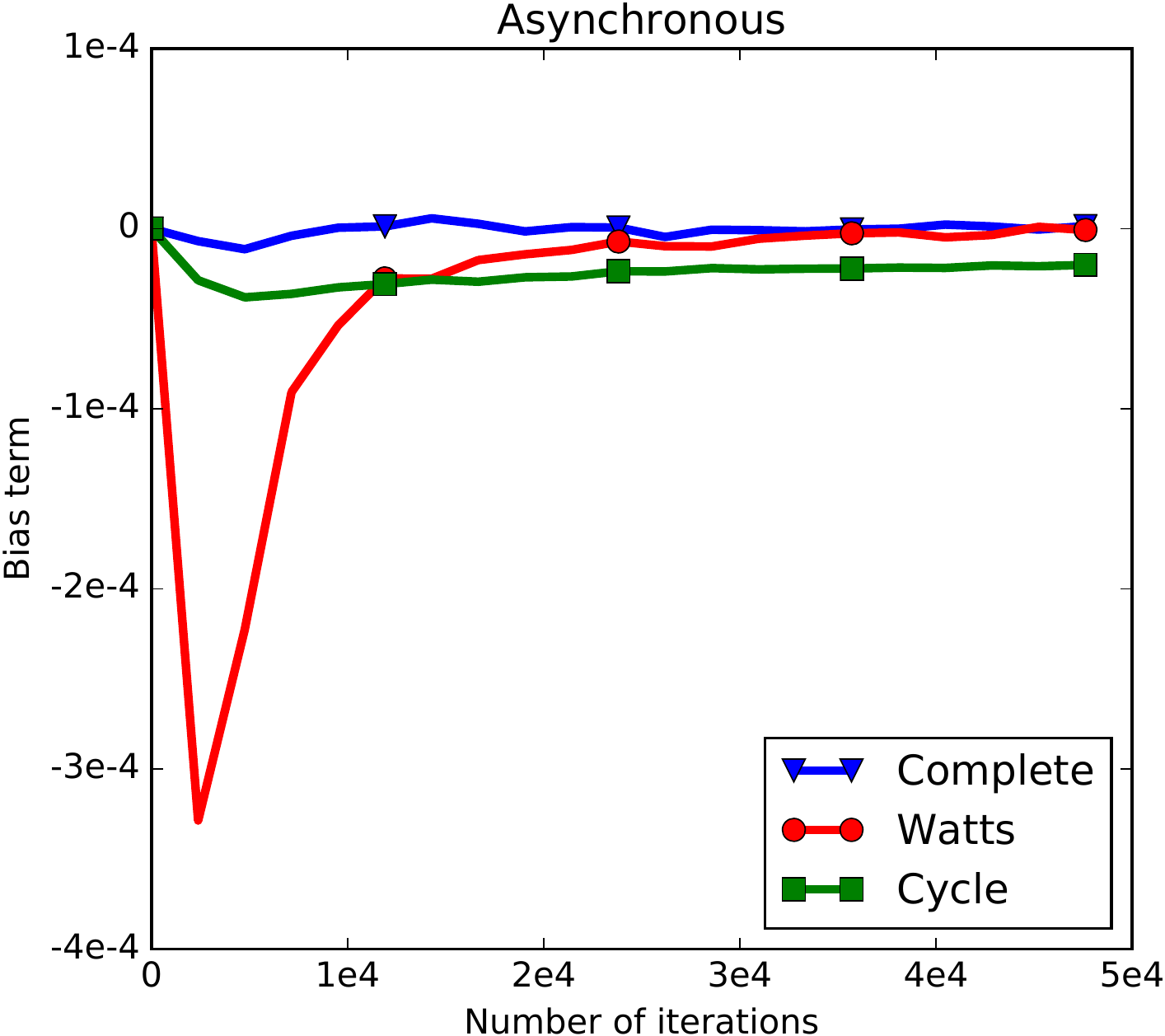}\hspace*{.5cm}\label{fig:async_ml_bias}}
  \caption{Metric learning experiments.}
\end{figure*}

\paragraph{Metric Learning}
We now turn to a metric learning application. We consider the family of Mahalanobis distances $D_\theta(x_i,x_j) = (x_i - x_j)^{\top} \theta (x_i - x_j)$ parameterized by $\theta\in\mathbb{S}_+^d$, where $\mathbb{S}_+^d$ is the cone of $d\times d$ positive semi-definite real-valued matrices. 
Given a set of data points $x_1, \dots, x_n\in\bbR^d$ with associated labels $\ell_1,\dots,\ell_n\in\{-1,1\}$, the goal is to find $\theta\in\mathbb{S}_+^d$ which minimizes the following criterion \citep{Jin2009a}:
$$R_n(\theta) = \frac{1}{n^2} \sum_{1 \leq i, j \leq n} \big[\ell_i \ell_j (b - D_\theta(x_i, x_j)) \big]_+ + \psi(\theta),$$
where $[u]_+=\max(0,1-u)$, $b > 0$, and $\psi(\theta)=\infty$ if $\theta\notin\mathbb{S}_+^d$ and $0$ otherwise.
We use a synthetic dataset of $n=1,000$ points generated as follows: each point is drawn from a mixture of $10$ Gaussians in $\mathbb{R}^{40}$ (each corresponding to a class) with all Gaussian means contained in a 5d subspace and their shared covariance matrix proportional to the identity with a variance factor such that some overlap is observed. 

Figure~\ref{fig:async_ml_standard} shows the evolution of the objective function and its standard deviation for the asynchronous setting. As in the case of AUC maximization, the algorithm converges much faster on the well-connected networks than on the cycle network. Again, we can see in Figure~\ref{fig:async_ml_bias} that the bias vanishes very quickly with the number of iterations.

\paragraph{Additional Experiment}
We refer to the supplementary material for a metric learning experiment on a real dataset. 


\vspace{-.15cm}
\section{Conclusion}
\label{sec:conclusion}

In this work, we have introduced new synchronous and asynchronous gossip algorithms to optimize functions depending on pairs of data points distributed over a network. The proposed methods are based on dual averaging and can readily accommodate various popular regularization terms. We provided an analysis showing that they behave similarly to the centralized dual averaging algorithm, with additional terms reflecting the network connectivity and the gradient bias. Finally, we proposed some numerical experiments on AUC maximization and metric learning which illustrate the performance of the proposed algorithms, as well as the influence of network topology.
A challenging line of future research consists in designing and analyzing novel adaptive gossip schemes, where the communication scheme is dynamic and depends on the network connectivity properties and on the local information carried by each node.


\onecolumn
\newpage
\appendix


\section{Outline of the Supplementary Material}
\label{sec:outline}

The supplementary material is organized as follows.
In Section~\ref{sec:centr-dual-aver}, we recall the standard proof of convergence rate for the (centralized) dual averaging.
Then, in Section~\ref{sec:decentr-with-bias}, we improve the analysis of the decentralized version of the dual averaging algorithm for simple sums of functions, and provide insights to analyze the case of sum of pairwise functions. Our asynchronous variant is investigated in Section~\ref{sec:asynchr-distr-sett}. Technical details on how to extend our framework to the case with multiple points per node are given in Section~\ref{app:multi}. Finally, additional numerical results are discussed in Section~\ref{sec:addit-exper}.

\section{Centralized Dual Averaging}
\label{sec:centr-dual-aver}

\subsection{Deterministic Setting}
\label{subsec:determ-sett}


We introduce the dual averaging algorithm for minimizing the sum $f+\psi$, in a context where $f$ is convex and smooth, $\psi(0)=0$, $\psi$ is convex, non-negative and possibly non-smooth, with a proximity operator simple to compute.  In the centralized framework, this algorithm reads as follows:
\begin{align}\label{eq:def_dual_av}
\theta(t + 1) = \argmin_{\theta' \in \bbR^d } \left\{ \theta'^{\top} \sum_{s = 1}^t g(s) + \frac{\| \theta' \|^2}{2 \gamma(t)} + t \psi(\theta') \right\},
\end{align}

for any $t \geq 1$, where $\gamma(t)$ represents a scale factor similar to a gradient step size use in standard gradient descent algorithms, and $g(t)$ is a sequence of gradient of $f$ taken at $\theta(t)$. Moreover we initialize $\theta(1)=0$. The function $f$ we consider is here of the form $\avf(\theta)=1/n \sum_{i=1}^n f_i(\theta)$, where each $f_i$ is assumed $L_f$-Lipschitz for simplicity (so is $ f$ then). We denote $R_n = \avf + \psi$. As a reminder, note that the Centralized dual averaging method is explicitly stated in Algorithm~\ref{alg:dual_averaging_standard}.

This particular formulation was introduced in \citep{xiao2009dual, Xiao10}, extending the method introduced by \cite{Nesterov2009a} in the specific case of indicator functions. In this work, we borrow the notation from \cite{Xiao10}.

In order to perform a theoretical analysis of this algorithm, we introduce the following functions. Let us define, for $t \geq 0$
\[
V_t(z) := \displaystyle \max_{\theta \in \bbR^d} \left\{ z^{\top} \theta - \frac{\| \theta \|^2}{2 \gamma(t)}  - t \psi(\theta) \right\}
.
\]
Remark that with the assumption that $\psi(0)=0$, then $V_t(0) = 0$.
We also define the smoothing function $\smoothop_t$ that plays a crucial role in the dual algorithm formulation:
\[
    \smoothop_t(z) :=  \argmax_{\theta \in \bbR^d}
    \left\{
    z^{\top} \theta - \frac{\| \theta \|^2}{2 \gamma(t)}  - t \psi(\theta) \right\}=
    \argmin_{\theta \in \bbR^d}
    \left\{
    -z^{\top}\theta + \frac{\| \theta \|^2}{2 \gamma(t)}  + t \psi(\theta) \right\}
  \]

 Strong convexity in $\theta$ of the objective function, ensures that the solution of the optimization problem is unique. The following lemma links the function $V_t$ and the algorithm update and is a simple application of the results from \citep[Lemma 10]{xiao2009dual}:

\begin{lemma}
  \label{lma:grad_argmin}
  For any $z \in \bbR^d $, one has:
 \begin{align}
    \smoothop_t(z) = \nabla V_t (z) \,,
 \end{align}
   and the following statements hold true: for any $z_1,z_2 \in \bbR^d$
  \begin{align}
      \|    \smoothop_t(z_1)-\smoothop_t(z_2)\| \leq \gamma(t)\|z_1-z_2\| \,,
  \end{align}
and for any $g,z \in \bbR^d$,
\begin{align}\label{ineq:stongly_cvx}
V_t(z+g) \leq V_t(z) + g^\top \nabla V_t(z) + \frac{\gamma(t)}{2} \|g\|^2.
\end{align}
\end{lemma}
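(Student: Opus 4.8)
The plan is to recognize that $V_t$ is, up to the sign conventions used here, the convex conjugate of the strongly convex function $h_t(\theta) := \|\theta\|^2/(2\gamma(t)) + t\psi(\theta)$, and then to read off the three claims from the standard duality between strong convexity and smoothness. Concretely, I would first observe that the objective $\phi_z(\theta) := z^{\top}\theta - \|\theta\|^2/(2\gamma(t)) - t\psi(\theta)$ is strongly concave in $\theta$ with modulus $1/\gamma(t)$, since $\psi$ is convex and the quadratic term contributes curvature $1/\gamma(t)$. Strong concavity guarantees that the maximizer is unique, so that $\smoothop_t(z)$ is well defined, and that the maximum is finite, so that $V_t(z)$ is well defined for every $z$.

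For the identity $\smoothop_t(z) = \nabla V_t(z)$, I would invoke Danskin's (envelope) theorem. Since $\phi_z(\theta)$ is jointly continuous, affine and hence differentiable in $z$ with $\nabla_z\phi_z(\theta) = \theta$, and the maximizer $\smoothop_t(z)$ is unique, the theorem yields that $V_t$ is differentiable with $\nabla V_t(z) = \nabla_z\phi_z(\theta)\big|_{\theta=\smoothop_t(z)} = \smoothop_t(z)$. This is precisely the content of Lemma~10 of \citep{xiao2009dual}, so at this point I would simply cite it and record the identity.

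For the Lipschitz bound I would argue directly from the optimality conditions, which keeps the constant explicit. Writing $\theta_i = \smoothop_t(z_i)$, the first-order condition for the strongly concave maximization produces a subgradient $s_i \in \partial\psi(\theta_i)$ with $z_i = \theta_i/\gamma(t) + t\,s_i$. Subtracting the two conditions and taking the inner product with $\theta_1 - \theta_2$, the monotonicity of $\partial\psi$ renders the subgradient contribution nonnegative, leaving
\[
(z_1 - z_2)^{\top}(\theta_1 - \theta_2) \geq \frac{1}{\gamma(t)}\|\theta_1 - \theta_2\|^2 .
\]
Applying Cauchy--Schwarz to the left-hand side then gives $\|\theta_1 - \theta_2\| \leq \gamma(t)\|z_1 - z_2\|$, which is the claimed inequality.

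Finally, inequality \eqref{ineq:stongly_cvx} is the usual descent lemma applied to $V_t$, which is convex as a supremum of functions affine in $z$ and, by the previous step, has a $\gamma(t)$-Lipschitz gradient. Integrating $\nabla V_t$ along the segment from $z$ to $z+g$ and bounding the gradient increment by $\gamma(t)\|g\|$ yields
\[
V_t(z+g) - V_t(z) - g^{\top}\nabla V_t(z) = \int_0^1 g^{\top}\big(\nabla V_t(z+sg) - \nabla V_t(z)\big)\,ds \leq \frac{\gamma(t)}{2}\|g\|^2 .
\]
The only genuinely delicate point is the differentiability claim in the second step: one must secure uniqueness of the maximizer and finiteness of the value before Danskin's theorem applies, which is exactly where strong concavity (equivalently, strong convexity of $h_t$) is used. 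Everything after that is mechanical, so I expect the bulk of the care to go into stating the strong concavity argument cleanly and otherwise leaning on \citep[Lemma 10]{xiao2009dual}.
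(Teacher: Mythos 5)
Your proposal is correct and follows essentially the same route as the paper, which does not prove this lemma itself but defers to \citep[Lemma 10]{xiao2009dual}: your three steps (the envelope/conjugacy identity $\nabla V_t = \smoothop_t$ via strong concavity and uniqueness of the maximizer, monotonicity of $\partial\psi$ plus Cauchy--Schwarz for the $\gamma(t)$-Lipschitz bound, and the descent-lemma integration for \eqref{ineq:stongly_cvx}) are precisely the standard arguments behind that citation. One cosmetic remark: the phrase ``bounding the gradient increment by $\gamma(t)\|g\|$'' would by itself only yield $\gamma(t)\|g\|^2$; the stated factor $\tfrac{1}{2}$ comes from the $s$-dependent bound $\|\nabla V_t(z+sg)-\nabla V_t(z)\|\leq \gamma(t)\,s\,\|g\|$ inside the integral, which your displayed computation implicitly uses and which is indeed how the standard proof proceeds.
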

With this notation one can write the dual averaging rule as $\theta(t + 1)=\smoothop_t \left(-z(t + 1) \right)$, where $z(t) := \sum_{s=1}^{t - 1} g(s)$, with the convention $z(1) = 0$. Moreover, adapting \citep[Lemma 11]{xiao2009dual} we can state:

\begin{lemma}
\label{lma:bound_vprox}
For any $t \geq 1$ and any non-increasing sequence $(\gamma(t))_{t \geq 1}$, we have
\begin{align}
V_t\left(-z(t + 1) \right) + \psi(\theta(t + 1)) \leq V_{t - 1}\left(-z(t + 1) \right).
\end{align}
\end{lemma}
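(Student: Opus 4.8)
The plan is to exploit the variational definition of $V_t$ together with the fact that $\theta(t+1)=\smoothop_t(-z(t+1))$ is precisely the maximizer realizing $V_t(-z(t+1))$. Writing $z := -z(t+1)$ for brevity, the maximizer identity gives the \emph{exact} value
$$V_t(z) = z^{\top}\theta(t+1) - \frac{\|\theta(t+1)\|^2}{2\gamma(t)} - t\,\psi(\theta(t+1)).$$
The key idea is then to compare this with a judicious \emph{lower} bound on $V_{t-1}(z)$, obtained by plugging the (generally suboptimal) choice $\theta=\theta(t+1)$ into the maximization defining $V_{t-1}$:
$$V_{t-1}(z) \geq z^{\top}\theta(t+1) - \frac{\|\theta(t+1)\|^2}{2\gamma(t-1)} - (t-1)\,\psi(\theta(t+1)).$$

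Subtracting the first display from the second, the linear term $z^{\top}\theta(t+1)$ cancels, and the $\psi(\theta(t+1))$ coefficients combine as $-(t-1)-(-t)=+1$, so the extra $\psi(\theta(t+1))$ term appearing on the left of the claimed inequality is absorbed exactly. What survives is only the quadratic term, leaving
$$V_{t-1}(z) - V_t(z) - \psi(\theta(t+1)) \geq \frac{\|\theta(t+1)\|^2}{2}\left(\frac{1}{\gamma(t)} - \frac{1}{\gamma(t-1)}\right).$$
Finally I would invoke the hypothesis that $(\gamma(t))_{t\geq 1}$ is non-increasing, which gives $\gamma(t)\leq\gamma(t-1)$ and hence $1/\gamma(t)\geq 1/\gamma(t-1)$, so the right-hand side is non-negative. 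Rearranging and recalling $z=-z(t+1)$ yields the claim.

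There is no genuine obstacle in this argument; it is a short convex-duality bookkeeping computation. The only points requiring care are the exact cancellation of the $\psi$ coefficients (so that the extra $+\psi(\theta(t+1))$ is neither lost nor doubled) and the correct sign of the quadratic term, which is exactly where the monotonicity of the step-size sequence enters. Note that one does not need strong convexity or any smoothness of $\psi$ here: only the defining maximizer property of $\smoothop_t$ and the suboptimality bound for $V_{t-1}$ are used, so the argument is purely a comparison of the two variational problems at the common point $-z(t+1)$.
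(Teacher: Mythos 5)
Your proof is correct and is essentially the paper's own argument: the paper establishes this lemma merely by pointing to Lemma~11 of \citet{Xiao10}, whose proof is exactly your comparison of the two variational problems at the common point $-z(t+1)$ --- the maximizer identity for $V_t$, the suboptimal test point $\theta(t+1)$ for $V_{t-1}$, the exact cancellation of the $\psi$-coefficients, and the monotonicity of $1/\gamma(\cdot)$ to sign the residual quadratic term. One boundary remark: at $t=1$ your step $1/\gamma(1)\geq 1/\gamma(0)$ invokes $\gamma(0)$, which is outside the stated hypothesis on $(\gamma(t))_{t\geq 1}$ and must be fixed by a convention such as $\gamma(0)=\gamma(1)$ (as in \citet{Xiao10}, under which your argument goes through verbatim); note that the convention $\gamma(0)=0$ adopted in Lemma~\ref{lma:bound_inner_pdct} would instead force $V_0\equiv 0$ and make the $t=1$ case of the present lemma fail, a slip in the paper's bookkeeping rather than in your proof.
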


\begin{algorithm}[t]
  \small
  \caption{Centralized dual averaging}
  \label{alg:dual_averaging_standard}
  \begin{algorithmic}[1]
    \REQUIRE Step size $(\gamma(t))_{t \geq 1} > 0$.
    \STATE Initialization $\theta = 0$, $\avtheta = 0$, $z = 0$.
    \FOR{$t = 1, \ldots, T$}
    \STATE Update $z \gets z + g(t)$, where $g(t) = \nabla \avf(\theta)$
    \STATE Update $\theta \gets \smoothop_t(z)$
    \STATE Update $\avtheta \gets \left( 1 - \frac{1}{t} \right) \avtheta + \frac{1}{t} \theta$
    \ENDFOR
    \RETURN $\avtheta$
  \end{algorithmic}
\end{algorithm}

We also need a last technical result that we will use several times in the following:

\begin{lemma}
\label{lma:bound_inner_pdct}

Let $\theta(t)=\smoothop_t (\sum_{s=1}^{t - 1} g(s))$, and let $(\gamma(t))_{t\geq1}$ be a non-increasing and non-negative sequence sequence (with the convention $\gamma(0)=0$), then for any $\theta \in \bbR^d$:

\begin{align}
    \frac{1}{T} \sum_{t = 1}^T g(t)^{\top} (\theta(t) - \theta) +
    \frac{1}{T}\sum_{t = 1}^T (\psi( \theta(t)) - \psi(\theta)) \label{ineq:cvx_dual_proof}
    \leq & \frac{1}{T} \sum_{t = 1}^T \frac{\gamma(t-1)}{2} \|g(t)\|^2+ \frac{\| \theta \|^2}{2T \gamma(T)} \,.
\end{align}

\begin{proof}
Use the definition of $V_T$ to get the following upper bound
\begin{align}
    \frac{1}{T} \sum_{t = 1}^T g(t)^{\top} (\theta(t) - \theta) +
    \frac{1}{T}\sum_{t = 1}^T (\psi( \theta(t)) - \psi(\theta))
    = &  \frac{1}{T} \sum_{t = 1}^T g(t)^{\top} \theta(t) + \psi( \theta(t))+ \frac{\| \theta \|^2}{2T \gamma(T)} - \psi(\theta) \nonumber\\
    & - \left(\frac{z(T + 1)}{T}\right)^{\top} \theta -\frac{\| \theta \|^2}{2T \gamma(T)}\nonumber\\
    \leq &  \frac{1}{T} \sum_{t = 1}^T  \left(  g(t)^{\top} \theta(t) + \psi( \theta(t)) \right) \nonumber\\
    &+ \frac{\| \theta^* \|^2}{2T \gamma(T)} + V_T(-z(T + 1)) \label{ineq:intermed_lemma} \,.
\end{align}
Then one can check that with \eqref{ineq:stongly_cvx} and Lemma~\ref{lma:bound_vprox} that:
\begin{align*}
V_t(-z(t + 1)) + \psi (\theta(t+1))
  \leq & V_{t-1}(-z(t + 1))\\
  = & V_{t-1}(-z(t)-g(t))\\
  \leq & V_{t-1}(-z(t)) - g(t)^\top \nabla V_{t-1}(-z(t)) + \frac{\gamma(t-1)}{2} \|g(t)\|^2\\
  = & V_{t-1}(-z(t)) - g(t)^\top \theta(t) + \frac{\gamma(t-1)}{2} \|g(t)\|^2.
\end{align*}
From the last display, the following holds:
\begin{align*}
g(t)^\top \theta(t) + \psi (\theta(t+1)) \leq
V_{t-1}(-z(t)) -V_t(-z(t + 1))   + \frac{\gamma(t-1)}{2} \|g(t)\|^2.
\end{align*}
Summing the former for $t =1,\ldots,T$ yields
\begin{align*}
\sum_{t=1}^T g(t)^\top \theta(t) + \psi (\theta(t+1)) \leq
V_{0}(-s_{0}) -V_T(-s_T) + \sum_{t=1}^T \frac{\gamma(t-1)}{2} \|g_{t}\|^2.
\end{align*}
Remark that $V_0(0) = 0$ and $\psi(\theta(1)) - \psi(\theta(T + 1)) = -\psi(\theta(T + 1))\leq 0$, so the previous display can be reduced to:

\begin{equation}
\label{ineq:bound_inner_product}
\sum_{t=1}^T g(t)^\top \theta(t) + \psi (\theta(t)) + V_T(-z(T + 1)) \leq \sum_{t=1}^T \frac{\gamma(t-1)}{2} \|g(t)\|^2.
\end{equation}
Combining with \eqref{ineq:intermed_lemma}, the lemma holds true.
\end{proof}

\end{lemma}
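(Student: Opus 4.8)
The plan is to prove this regret bound by the standard dual-averaging telescoping argument: I establish a one-step inequality relating consecutive values of the conjugate function $V_t$, sum it, and only at the end introduce the arbitrary comparator $\theta$ through the Fenchel-type maximization that \emph{defines} $V_T$. Throughout I work with $V_t$, the smoothing operator $\smoothop_t=\nabla V_t$, and the partial sums $z(t)=\sum_{s=1}^{t-1}g(s)$ introduced above, and I use Lemma~\ref{lma:grad_argmin} and Lemma~\ref{lma:bound_vprox} as black boxes.

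First I would derive the per-step estimate. Starting from Lemma~\ref{lma:bound_vprox}, namely $V_t(-z(t+1))+\psi(\theta(t+1))\leq V_{t-1}(-z(t+1))$, I rewrite the right-hand side via $z(t+1)=z(t)+g(t)$ and apply the smoothness inequality \eqref{ineq:stongly_cvx} of Lemma~\ref{lma:grad_argmin} at base point $-z(t)$ with increment $-g(t)$. Identifying $\nabla V_{t-1}(-z(t))=\theta(t)$ yields
\begin{align*}
g(t)^{\top}\theta(t)+\psi(\theta(t+1))\leq V_{t-1}(-z(t))-V_t(-z(t+1))+\frac{\gamma(t-1)}{2}\|g(t)\|^2.
\end{align*}

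Second, I would sum over $t=1,\dots,T$. The $V$ terms telescope to $V_0(-z(1))-V_T(-z(T+1))=-V_T(-z(T+1))$ using $V_0(0)=0$ and $z(1)=0$, while the regularizer terms differ from $\sum_t\psi(\theta(t))$ only by the nonnegative boundary quantity $\psi(\theta(T+1))-\psi(\theta(1))$ (nonnegative since $\theta(1)=0$ and $\psi(0)=0$), whose deletion only strengthens the bound. This gives the aggregate estimate $\sum_{t=1}^T\big(g(t)^{\top}\theta(t)+\psi(\theta(t))\big)+V_T(-z(T+1))\leq\sum_{t=1}^T\frac{\gamma(t-1)}{2}\|g(t)\|^2$. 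To bring in the comparator $\theta$, I would lower-bound $V_T(-z(T+1))$ by evaluating its defining maximum at the single point $\theta$:
\begin{align*}
V_T(-z(T+1))\geq -z(T+1)^{\top}\theta-\frac{\|\theta\|^2}{2\gamma(T)}-T\psi(\theta).
\end{align*}
Since $\sum_t g(t)^{\top}\theta=z(T+1)^{\top}\theta$ and $\sum_t\psi(\theta)=T\psi(\theta)$, substituting this lower bound into the aggregate estimate cancels the $z(T+1)^{\top}\theta$ and $T\psi(\theta)$ contributions, leaving exactly $\sum_t g(t)^{\top}(\theta(t)-\theta)+\sum_t(\psi(\theta(t))-\psi(\theta))\leq\sum_t\frac{\gamma(t-1)}{2}\|g(t)\|^2+\frac{\|\theta\|^2}{2\gamma(T)}$; dividing by $T$ is the claim.

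The step I expect to be the main obstacle is the per-step inequality: it is the only place where the non-increasing hypothesis on $(\gamma(t))$ is consumed (implicitly, through Lemma~\ref{lma:bound_vprox}), and it requires carefully matching indices and signs — linearizing $V_{t-1}$ rather than $V_t$, at $-z(t)$ rather than $-z(t+1)$ — so that the first-order term collapses to $g(t)^{\top}\theta(t)$ and the telescope closes. Once that inequality is correctly assembled, everything else is bookkeeping: a telescoping sum, the boundary argument for $\psi$, and a single Fenchel-Young evaluation of $V_T$ at the comparator.
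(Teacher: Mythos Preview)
Your proposal is correct and follows essentially the same argument as the paper: both obtain the per-step inequality from Lemma~\ref{lma:bound_vprox} plus the smoothness bound \eqref{ineq:stongly_cvx}, telescope, handle the $\psi$ boundary term via $\psi(\theta(1))=0\leq\psi(\theta(T+1))$, and bring in the comparator through the defining maximum of $V_T$. The only difference is cosmetic ordering---the paper introduces the comparator first (upper-bounding the $\theta$-dependent terms by $V_T(-z(T+1))/T$) and telescopes afterwards, whereas you telescope first and then lower-bound $V_T$ at $\theta$; the two are equivalent.
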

Bounding the error of the dual averaging is provided in the next theorem, where we remind that $R_n = \avf + \psi$:
\begin{theorem}\label{thm:dual_averaging_standard}
  Let $(\gamma(t))_{t \geq 1}$ be a non increasing sequence. Let $(z(t))_{t \geq 1}$, $(\theta(t))_{t \geq 1}$, $(\avtheta(t))_{t \geq 1}$ and $(g(t))_{t \geq 1}$ be generated according to Algorithm~\ref{alg:dual_averaging_standard}. Assume that the function $\avf$ is $L_f$-Lipschitz and that $\theta^* \in \argmin_{\theta' \in \bbR^d} R_n(\theta')$, then for any $T \geq 2 $, one has:
  \begin{equation}
    \label{eq:dual_standard_bound}
    R_n(\avtheta(T)) - R_n(\theta^*) \leq \frac{\| \theta^* \|^2}{2T \gamma(T)} + \frac{L_f^2}{2T} \sum_{t = 1}^{T - 1} \gamma(t).
  \end{equation}

  Moreover, if one knows $D > 0$ such that $\|\theta^*\| \leq D$, then for the choice $\gamma(t) = \frac{D}{L_f \sqrt{2t}}$, one has:
  \[
    R_n(\avtheta(T)) - R_n(\theta^*) \leq \frac{\sqrt{2} D L_f}{\sqrt{T}}.
  \]
\end{theorem}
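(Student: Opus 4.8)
The plan is to reduce the theorem to the machinery already assembled in Lemma~\ref{lma:bound_inner_pdct}, so that the only fresh ingredients are convexity and the Lipschitz assumption. First I would exploit that in this deterministic setting $g(t) = \nabla\avf(\theta(t))$, so convexity of $\avf$ gives the first-order bound $\avf(\theta(t)) - \avf(\theta^*) \leq g(t)^{\top}(\theta(t) - \theta^*)$ for each $t$. Adding $\psi(\theta(t)) - \psi(\theta^*)$ to both sides and recalling $R_n = \avf + \psi$ yields, for every $t$, the pointwise inequality $R_n(\theta(t)) - R_n(\theta^*) \leq g(t)^{\top}(\theta(t)-\theta^*) + \psi(\theta(t)) - \psi(\theta^*)$.

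Next I would average these inequalities over $t = 1,\ldots,T$ and invoke Jensen's inequality: since $R_n$ is convex and $\avtheta(T) = \frac{1}{T}\sum_{t=1}^T \theta(t)$, one has $R_n(\avtheta(T)) \leq \frac{1}{T}\sum_{t=1}^T R_n(\theta(t))$, so that $R_n(\avtheta(T)) - R_n(\theta^*)$ is bounded by the average of the right-hand sides above. That average is precisely the left-hand side of Lemma~\ref{lma:bound_inner_pdct} evaluated at $\theta = \theta^*$, so I would apply the lemma verbatim to obtain
\[
  R_n(\avtheta(T)) - R_n(\theta^*) \leq \frac{1}{T}\sum_{t=1}^T \frac{\gamma(t-1)}{2}\|g(t)\|^2 + \frac{\|\theta^*\|^2}{2T\gamma(T)}.
\]
To close the first bound I would use that $\avf$ is $L_f$-Lipschitz, hence $\|g(t)\| = \|\nabla\avf(\theta(t))\| \leq L_f$ and $\|g(t)\|^2 \leq L_f^2$; with the convention $\gamma(0)=0$ the sum $\sum_{t=1}^T \gamma(t-1)$ reindexes to $\sum_{t=1}^{T-1}\gamma(t)$, giving exactly $\frac{\|\theta^*\|^2}{2T\gamma(T)} + \frac{L_f^2}{2T}\sum_{t=1}^{T-1}\gamma(t)$.

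For the explicit rate I would substitute $\gamma(t) = \frac{D}{L_f\sqrt{2t}}$ together with $\|\theta^*\| \leq D$. The first term collapses to $\frac{D^2}{2T}\cdot\frac{L_f\sqrt{2T}}{D} = \frac{DL_f}{\sqrt{2T}}$, while for the second I would bound the partial sum by an integral comparison, $\sum_{t=1}^{T-1} t^{-1/2} \leq 2\sqrt{T}$, producing $\frac{L_f D}{2\sqrt{2}\,T}\cdot 2\sqrt{T} = \frac{DL_f}{\sqrt{2T}}$; adding the two halves yields $\frac{2DL_f}{\sqrt{2T}} = \frac{\sqrt{2}DL_f}{\sqrt{T}}$, as claimed.

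The truthful assessment is that there is little genuine difficulty remaining once Lemma~\ref{lma:bound_inner_pdct} is in hand, since the substantive descent-and-telescoping argument through the conjugate functions $V_t$ has already been executed in Lemmas~\ref{lma:grad_argmin}--\ref{lma:bound_inner_pdct}. The only places demanding care are the index bookkeeping in the reindexing (the $\gamma(0)=0$ convention that converts $\sum\gamma(t-1)$ into $\sum_{t=1}^{T-1}\gamma(t)$) and the elementary integral estimate $\sum_{t=1}^{T-1} t^{-1/2} \leq 2\sqrt{T}$ needed to render the $\mathcal{O}(1/\sqrt{T})$ rate explicit with the stated constant.
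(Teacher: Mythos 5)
Your proposal is correct and follows essentially the same route as the paper's own proof: convexity of $\avf$ and $\psi$ (equivalently, Jensen applied to $R_n$ at $\avtheta(T)$), then Lemma~\ref{lma:bound_inner_pdct} with $\theta=\theta^*$, then the bound $\|g(t)\|\leq L_f$ together with the $\gamma(0)=0$ reindexing. Your explicit computation of the $\sqrt{2}DL_f/\sqrt{T}$ rate via the comparison $\sum_{t=1}^{T-1}t^{-1/2}\leq 2\sqrt{T}$ is also correct (the paper leaves this substitution implicit).
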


\begin{proof}
  Let $T \geq 2 $. Using the convexity of $\avf$ and $\psi$, we can get:
\begin{align*}
    R_n(\avtheta(T)) - R_n(\theta^*)
    &\leq \frac{1}{T} \sum_{t = 1}^T \avf(\theta(t)) - \avf(\theta^*) +\psi( \avtheta) - \psi(\theta^*) \nonumber\\
    &\leq \frac{1}{T} \sum_{t = 1}^T g(t)^{\top} (\theta(t) - \theta^*) +
    \frac{1}{T}\sum_{t = 1}^T (\psi( \theta(t)) - \psi(\theta^*)) \\
    &\leq \frac{1}{T} \sum_{t = 1}^T \frac{\gamma(t-1)}{2} \|g(t)\|^2+ \frac{\| \theta \|^2}{2T \gamma(T)} \,.
\end{align*}
where the second inequality holds since $g(t) = \nabla \avf(\theta(t))$, and the third one is from an application of Lemma~\ref{lma:bound_inner_pdct} with the choice $\theta=\theta^*$.
Provided that $\|g(t)\|\leq L_f$, which is true whenever $\avf$ is $L_f$-Lipschitz.
\end{proof}




\subsection{Stochastic Dual Averaging}
\label{sec:stoch-subgr-dual}

Similarly to sub-gradient descent algorithms, one can adapt dual averaging algorithm to a stochastic setting; this was studied extensively by \citet{xiao2009dual}. Instead of updating the dual variable $z(t)$ with the (full) gradient of $\avf$ at $\theta(t)$, one now only requires the \emph{expected} value of the update to be the gradient, as detailed in Algorithm~\ref{alg:dual_averaging_sto}.

As in the gradient descent case, convergence results still hold in expectation, as stated in Theorem~\ref{thm:dual_av_sto}.
\begin{theorem}
  \label{thm:dual_av_sto}
  Let $(\gamma(t))_{t \geq 1} $ be a non increasing sequence. Let $(z(t))_{t \geq 1} $, $(\theta(t))_{t \geq 1} $ and $(g(t))_{t \geq 1} $ be generated according to Algorithm~\ref{alg:dual_averaging_sto}. Assume that the function $\avf$ is $L_f$-Lipschitz and that $\theta^* \in \argmin_{\theta' \in \bbR^d} R_n(\theta')$, then for any $T \geq 2 $, one has:
  \begin{equation}
    \label{eq:dual_sto_bound}
    \bbE_T\Big[ R_n(\avtheta(T)) - R_n(\theta^*) \Big] \leq \frac{\| \theta^* \|^2}{2T \gamma(T)} + \frac{L_f^2}{2T} \sum_{t = 1}^{T - 1} \gamma(t),
  \end{equation}
where $\bbE_T$ is the expectation over all possible sequence $(g(t))_{1 \leq t \leq T}$.

Moreover, if one knows that $D > 0$ such that $\|\theta^*\| \leq D$, then for $\gamma(t) = \frac{D}{L_f \sqrt{2t}}$, one has:
  \[
       \bbE_T  \big[R_n(\avtheta(T)) - R_n(\theta^*) \big] \leq \frac{\sqrt{2} D L_f}{\sqrt{T}}.
  \]
\end{theorem}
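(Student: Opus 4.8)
The plan is to mirror the deterministic proof of Theorem~\ref{thm:dual_averaging_standard} almost verbatim, taking expectations at the right moment and exploiting the fact that Lemma~\ref{lma:bound_inner_pdct} is a purely pathwise (deterministic) statement valid for \emph{any} sequence $(g(t))_{t\geq1}$, in particular for the random stochastic gradients produced by Algorithm~\ref{alg:dual_averaging_sto}. First I would apply Jensen's inequality to the convex function $R_n$ and the averaged iterate $\avtheta(T)=\frac{1}{T}\sum_{t=1}^T\theta(t)$ to get $R_n(\avtheta(T))-R_n(\theta^*)\leq\frac{1}{T}\sum_{t=1}^T(\avf(\theta(t))-\avf(\theta^*))+\frac{1}{T}\sum_{t=1}^T(\psi(\theta(t))-\psi(\theta^*))$, and then use convexity of $\avf$ to bound each term as $\avf(\theta(t))-\avf(\theta^*)\leq\nabla\avf(\theta(t))^{\top}(\theta(t)-\theta^*)$.

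The one genuinely new ingredient, and the step I expect to be the main obstacle, is replacing the true gradient $\nabla\avf(\theta(t))$ by the stochastic estimate $g(t)$ after expectations are taken. The key observation is that the iterate $\theta(t)=\smoothop_{t-1}\big(\sum_{s=1}^{t-1}g(s)\big)$ depends only on $g(1),\dots,g(t-1)$, so conditioning on the natural filtration $\mathcal{F}_{t-1}$ generated by these variables leaves $\theta(t)$ measurable, and the unbiasedness hypothesis $\bbE[g(t)\mid\theta(t)]=\nabla\avf(\theta(t))$ gives, via the tower rule, $\bbE_T[\nabla\avf(\theta(t))^{\top}(\theta(t)-\theta^*)]=\bbE_T[g(t)^{\top}(\theta(t)-\theta^*)]$. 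Care must be taken to spell out this filtration so that the conditioning is rigorous; this is the only place where the stochastic argument departs from the deterministic one.

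Combining the two previous steps yields $\bbE_T[R_n(\avtheta(T))-R_n(\theta^*)]\leq\bbE_T\big[\frac{1}{T}\sum_{t=1}^T g(t)^{\top}(\theta(t)-\theta^*)+\frac{1}{T}\sum_{t=1}^T(\psi(\theta(t))-\psi(\theta^*))\big]$. I would then apply Lemma~\ref{lma:bound_inner_pdct} pathwise with $\theta=\theta^*$ to bound the bracketed quantity by $\frac{1}{T}\sum_{t=1}^T\frac{\gamma(t-1)}{2}\|g(t)\|^2+\frac{\|\theta^*\|^2}{2T\gamma(T)}$, and take expectations. To control the remaining stochastic term I would use that each $g(t)$ is the gradient of a single $L_f$-Lipschitz function $f(\cdot;x_{i_t},x_{j_t})$, whence $\|g(t)\|\leq L_f$ almost surely and $\bbE_T\|g(t)\|^2\leq L_f^2$. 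With the convention $\gamma(0)=0$ this gives exactly the claimed bound $\frac{\|\theta^*\|^2}{2T\gamma(T)}+\frac{L_f^2}{2T}\sum_{t=1}^{T-1}\gamma(t)$.

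Finally, for the explicit rate I would substitute $\gamma(t)=D/(L_f\sqrt{2t})$: the first term becomes $\frac{D^2}{2T}\cdot\frac{L_f\sqrt{2T}}{D}=\frac{DL_f}{\sqrt{2T}}$, while the second, using the standard estimate $\sum_{t=1}^{T-1}t^{-1/2}\leq 2\sqrt{T}$, is at most $\frac{L_f^2}{2T}\cdot\frac{D}{L_f\sqrt{2}}\cdot 2\sqrt{T}=\frac{DL_f}{\sqrt{2T}}$, and adding the two gives $\sqrt{2}DL_f/\sqrt{T}$. This closing computation is routine and identical to the deterministic case.
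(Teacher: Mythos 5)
Your proposal is correct and takes essentially the same route as the paper's own proof: convexity of $R_n$ plus the tower property of conditional expectation (using that $\theta(t)$ is measurable with respect to $g(1),\dots,g(t-1)$) to replace $\nabla \avf(\theta(t))$ by $g(t)$ inside the expectation, followed by a pathwise application of Lemma~\ref{lma:bound_inner_pdct} with $\theta=\theta^*$ and the almost-sure bound $\|g(t)\|\leq L_f$. Your closing computation for $\gamma(t)=D/(L_f\sqrt{2t})$ likewise matches the paper's.
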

\begin{proof}
One only has to prove that the convexity inequality in Lemma~\ref{lma:bound_inner_pdct} holds in expectation. The rest of the proof can be directly adapted from Theorem~\ref{thm:dual_averaging_standard}.

Let $T \geq 2 $; using the convexity of $\avf$, one obtains:
\[
  \bbE_T[\avf(\avtheta(T)) - \avf(\theta^*)] \leq \frac{1}{T} \sum_{t = 1}^T \bbE_T[\avf(\theta(t)) - \avf(\theta^*)].
\]
For any $0 < t \leq T$, $\bbE[\theta(t) | g(0), \ldots, g(t - 1)] = \theta(t)$. Therefore, we have:
\[
  \bbE_T[\avf(\theta(t)) - \avf(\theta^*)] = \bbE_{t - 1}[\avf(\theta(t)) - \avf(\theta^*)].
\]
The vector $\bbE_t[g(t) | \theta(t)]$ is the gradient of $\avf$ at $\theta(t)$, we can then use $\avf$ convexity to write:
\[
  \bbE_{t - 1}[\avf(\theta(t)) - \avf(\theta^*)] \leq \bbE_{t - 1}\Big[(\theta(t) - \theta^*)^{\top} \bbE_t[g(t) | \theta(t)] \Big].
\]
Using properties of conditional expectation, we obtain:
\[
  \bbE_{t - 1}\Big[(\theta(t) - \theta^*)^{\top} \bbE_t[g(t) | \theta(t)] \Big] = \bbE_{t - 1}\Big[\bbE_t[(\theta(t) - \theta^*)^{\top} g(t) | \theta(t)] \Big] = \bbE_t[(\theta(t) - \theta^*)^{\top} g(t)].
\]
Finally, we can write:
\begin{equation}
  \label{ineq:sto_cvx}
  \bbE_T[\avf( \avtheta(T) - f(\theta^*)] \leq \frac{1}{T} \sum_{t = 1}^T \bbE_t[(\theta(t) - \theta^*)^{\top} g(t)] = \bbE_T \left[ \frac{1}{T} \sum_{t = 1}^T (\theta(t) - \theta^*)^{\top} g(t) \right].
\end{equation}
\end{proof}

\section{Convergence Proof for Synchronous Pairwise Gossip Dual Averaging}
\label{sec:decentr-with-bias}

In \cite{Duchi2012a}, the following convergence rate for distributed dual averaging is established:
\begin{align*}
  R_n(\avtheta_i(T)) - R_n(\theta^*)
  &\leq \frac{1}{2 T \gamma(T)} \| \theta^* \|^2 + \frac{L_f^2}{2T} \sum_{t = 2}^T \gamma(t - 1) \\
  &+ \frac{L_f}{nT} \sum_{t = 2}^T \gamma(t - 1) \sum_{j = 1}^n \Big( \|z_i(t) - z_j(t) \| + \| \avz(t) - z_j(t) \| \Big).
\end{align*}

The first part is an optimization term, which is exactly the same as in the centralized setting. Then, the second part is a network-dependent term which depends on the global variation of the dual variables; the following lemma provides an explicit dependence between this term and the topology of the network.
\begin{lemma}
  \label{lma:control_on_z}
  Let $W(\mathcal{G}) = I_n - \frac{L(\mathcal{G})}{|E|}$ and let $(G(t))_{t \geq 1}$ and $(Z(t))_{t \geq 1}$ respectively be the gradients and the gradients cummulative sum of the distributed dual averaging algorithm. If $\mathcal{G}$ is connected and non bipartite, then one has for $t \geq 1$:
  \[
    \frac{1}{n} \sum_{i = 1}^n \bbE \|z_i(t) - \overline{z}^n(t) \| \leq \frac{L_f}{1 - \sqrt{\lambda_2^{\mathcal{G}}}},
  \]
  where $\lambda_2^{\mathcal{G}}$ is the second largest eigenvalue of $W(\mathcal{G})$.
\end{lemma}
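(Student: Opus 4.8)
The plan is to control the whole dual-variable matrix in mean-square Frobenius norm rather than node by node, and to exploit that each gossip averaging step is an orthogonal projection, which gives an \emph{exact} second-moment contraction and thereby avoids any spurious dependence on $n$. Collect the iterates into $Z(t)\in\bbR^{n\times d}$ whose $i$-th row is $z_i(t)^\top$, let $\Pi=\tfrac1n\1_n\1_n^\top$, and set $\widetilde Z(t)=(I_n-\Pi)Z(t)$, so that its $i$-th row is $z_i(t)-\overline z^n(t)$ and each of its columns lies in $\1_n^\perp$. Drawing the edge $(i,j)$ and averaging $z_i,z_j$ amounts to left-multiplying $Z(t)$ by the random symmetric matrix $W(t)=I_n-\tfrac12(e_i-e_j)(e_i-e_j)^\top$, which fixes $\1_n$; using $W(t)\Pi=\Pi W(t)=\Pi$ one checks that $(I_n-\Pi)W(t)=W(t)(I_n-\Pi)$ and $(I_n-\Pi)W(t)\Pi=0$, whence the deviation matrix satisfies the clean recursion $\widetilde Z(t+1)=W(t)\widetilde Z(t)+(I_n-\Pi)G(t)$ with $\widetilde Z(1)=0$ (since $Z(1)=0$).

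The key structural fact is that $W(t)$ is an orthogonal projection: with $u=(e_i-e_j)/\sqrt2$ one has $W(t)=I_n-uu^\top$, hence $W(t)^2=W(t)$ and therefore $\bbE[W(t)^2]=\bbE[W(t)]=W(\mathcal G)$ exactly. Consequently, for any fixed $y\in\1_n^\perp$, $\bbE\|W(t)y\|^2=y^\top W(\mathcal G)y\le\lambda_2^{\mathcal G}\|y\|^2$, the inequality coming from the spectral decomposition of the symmetric matrix $W(\mathcal G)$, whose top eigenvalue $1$ is attained on $\1_n$; the average-of-projections form also shows $W(\mathcal G)\succeq0$, so $\lambda_2^{\mathcal G}\ge0$ and $\sqrt{\lambda_2^{\mathcal G}}$ is well defined, while connectedness forces $\lambda_2^{\mathcal G}<1$. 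Applying this column by column, and using that $W(t)$ is drawn independently of the past (hence of $\widetilde Z(t)$) while the columns of $\widetilde Z(t)$ lie in $\1_n^\perp$, conditioning on the history yields the contraction $\bbE\|W(t)\widetilde Z(t)\|_F^2\le\lambda_2^{\mathcal G}\,\bbE\|\widetilde Z(t)\|_F^2$.

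Set $\phi(t)=\big(\bbE\|\widetilde Z(t)\|_F^2\big)^{1/2}$. Taking the $L^2(\Omega)$ norm of the recursion and applying Minkowski's inequality, the contraction above, and the crude bound $\|(I_n-\Pi)G(t)\|_F\le\|G(t)\|_F\le\sqrt n\,L_f$ (each row of $G(t)$ has norm at most $L_f$ because $f$ is $L_f$-Lipschitz), I get $\phi(t+1)\le\sqrt{\lambda_2^{\mathcal G}}\,\phi(t)+\sqrt n\,L_f$. With $\phi(1)=0$ this geometric recursion gives $\phi(t)\le\sqrt n\,L_f/(1-\sqrt{\lambda_2^{\mathcal G}})$ for every $t$. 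Finally, two applications of the Cauchy--Schwarz/Jensen inequality convert this into the claimed average of norms: $\tfrac1n\sum_i\bbE\|z_i(t)-\overline z^n(t)\|\le\tfrac{1}{\sqrt n}\,\bbE\|\widetilde Z(t)\|_F\le\tfrac{1}{\sqrt n}\,\phi(t)\le L_f/(1-\sqrt{\lambda_2^{\mathcal G}})$, the factor $\sqrt n$ cancelling exactly.

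The main obstacle, and the precise point where this sharper analysis beats a naive one, is keeping the constant dimension-free. Bounding $\|z_i(t)-\overline z^n(t)\|$ row by row through the unrolled products of mixing matrices and $\ell_1$/$\ell_2$ comparisons costs a factor $\sqrt n$ (and a weaker entrywise decay), whereas carrying the entire deviation matrix in root-mean-square Frobenius norm and invoking the exact identity $\bbE[W(t)^2]=W(\mathcal G)$, which holds because the gossip matrices are projections, produces the clean per-step factor $\lambda_2^{\mathcal G}$ on second moments and lets the $\sqrt n$ introduced by the driving term cancel against the final norm conversion.
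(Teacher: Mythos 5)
Your proof is correct and uses essentially the same ingredients as the paper's: the matrix recursion $Z(t+1)=W(t)Z(t)+G(t)$ with the projection identity $\bbE[W(t)^{\top}W(t)]=\bbE[W(t)]=W(\mathcal{G})$, the spectral bound $\lambda_2^{\mathcal{G}}$ on the complement of $\1_n$, the Frobenius-norm bound $\|G(t)\|_F\leq\sqrt{n}L_f$, and the final Cauchy--Schwarz conversion in which the $\sqrt{n}$ cancels. The only difference is organizational: you solve a one-step $L^2$ contraction recursion $\phi(t+1)\leq\sqrt{\lambda_2^{\mathcal{G}}}\,\phi(t)+\sqrt{n}L_f$, whereas the paper unrolls $Z(t)=\sum_s W(t:s)G(s)$ and sums the per-term geometric decay $(\lambda_2^{\mathcal{G}})^{(t-s)/2}$ --- both yield the identical bound (and your closing contrast with a ``naive row-by-row'' analysis describes a strawman, not the paper's actual argument).
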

\begin{proof}
  For $t \geq 1$, let $W(t)$ be the random matrix such that if $(i, j) \in E$ is picked at $t$, then
  \[
W(t) = I_n - \frac{1}{2}(e_i - e_j)(e_i - e_j)^{\top}.
\]
  As denoted in \cite{Duchi2012a}, the update rule for $Z$ can be expressed as follows:
  \[
    Z(t + 1) = G(t) + W(t)Z(t),
  \]
  for any $t \geq 1$, reminding that $G(0)=0, Z(1)=0$. Therefore, one can obtain recursively
  \[
    Z(t) = \sum_{s = 0}^t W(t:s) G(s),
  \]
  where $W(t:s) = W(t) \ldots W(s + 1)$, with the convention $W(t:t) = I_n$. For any $t \geq 1$, let $W'(t) := W(t) - \frac{\1_n \1_n^{\top}}{n}$. One can notice that for any $0 \leq s \leq t$, $W'(t:s) = W(t:s) - \frac{\1_n \1_n^{\top}}{n}$ and write:
  \[
    Z(t) - \1_n \overline{z}^n(t)^{\top} = \sum_{s = 0}^t W'(t:s)G(s).
  \]
  We now take the expected value of the Frobenius norm:
  \begin{align}
    \bbE \left[ \left\| Z(t) - \1_n \overline{z}^n(t)^{\top} \right\|_F \right] 
    &\leq \sum_{s = 0}^t \bbE \left[ \left\| W(t:s) G(s) \right\|_F \right] \nonumber\\
    &\leq \sum_{s = 0}^t \sqrt{\bbE \left[ \left\| W(t:s) G(s) \right\|^2_F \right]} \nonumber\\
    &= \sum_{i = 1}^n \sum_{s = 0}^t \sqrt{\bbE \left[ g^{(i)}(s)^{\top} W'(t:s)^{\top} W'(t:s) g^{(i)}(s) \right]},\nonumber
  \end{align}
  where $g^{(i)}(s)$ is the column~$i$ of matrix $G(s)$. Since for any $s \geq 0$, $W(s)$ is a symmetric projection matrix, $W'(s)^{\top} W'(s) = W'(s)$; moreover, conditioning over $\mathcal{F}_s$ leads to:
  \begin{align}
    \bbE \left[ g^{(i)}(s)^{\top} W'(t:s)^{\top} W'(t:s) g^{(i)}(s) \right] = \bbE \left[ g^{(i)}(s)^{\top} \bbE[W'(t:s) | \mathcal{F}_s] g^{(i)}(s) \right] \leq \lambda_2^{\mathcal{G}} \|g^{(i)}(s)\|^2.
  \end{align}
  Using the fact that for any $s \geq 0$, $\| G(s) \|_F^2 \leq n L^2_f$, one has:
  \[
    \bbE \left[ \left\| Z(t) - \1_n \overline{z}^n(t)^{\top} \right\|_F \right] \leq \sqrt{n} L_f \sum_{s = 0}^t \left(\lambda_2^{\mathcal{G}}\right)^{\frac{t-s}{2}} \leq \frac{\sqrt{n}L_f}{1 - \sqrt{\lambda_2^{\mathcal{G}}}}.
  \]
  Finally, using the bounds between $\ell_1$ and $\ell_2$-norms yields:
  \[
    \frac{1}{n} \sum_{i = 1}^n \bbE \| z_i(t) - \overline{z}^n(t) \| \leq \frac{1}{\sqrt{n}} \bbE \left\| Z(t) - \1_n \overline{z}^n(t)^{\top} \right\|_F \leq \frac{L_f}{1 - \sqrt{\lambda_2^{\mathcal{G}}}}.
  \]

\end{proof}
With this bound on the dual variables, one can reformulate the convergence rate as stated below.
\begin{cor}
  Let $\mathcal{G}$ be a connected and non bipartite graph. Let $(\gamma(t))_{t \geq 1}$ be a non-increasing and non-negative sequence. For $i \in [n]$, let $(g_i(t))_{t \geq 1} $, $(z_i(t))_{t \geq 1} $ and $(\theta_i(t))_{t \geq 1} $ be generated according to the distributed dual averaging algorithm. For $\theta^* \in \argmin_{\theta' \in \bbR^d} R_n(\theta')$, $i \in [n]$ and $T \geq 2$, one has:
  \begin{align*}
    R_n(\avtheta_i(T)) - R_n(\theta^*)
    \leq& \frac{1}{2 T \gamma(T)} \| \theta^* \|^2 + \frac{L_f^2}{2T} \sum_{t = 1}^{T - 1} \gamma(t) \\
    &+ \frac{3L_f^2}{T\left(1 - \sqrt{\lambda_2^{\mathcal{G}}}\right)} \sum_{t = 1}^{T - 1} \gamma(t),
  \end{align*}
  where $\lambda_2^{\mathcal{G}} < 1$ is the second largest eigenvalue of $W(\mathcal{G})$.
\end{cor}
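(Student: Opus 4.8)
The plan is to combine the distributed dual averaging rate recalled at the start of this section with the consensus control of Lemma~\ref{lma:control_on_z}. The first two summands of that rate, $\frac{1}{2T\gamma(T)}\|\theta^*\|^2$ and $\frac{L_f^2}{2T}\sum_{t=2}^T\gamma(t-1)$, are purely optimization terms identical to the centralized bound of Theorem~\ref{thm:dual_averaging_standard}; after the harmless reindexing $\sum_{t=2}^T\gamma(t-1)=\sum_{t=1}^{T-1}\gamma(t)$ they already reproduce the first line of the claim and are simply carried over. All the work therefore lies in bounding the network term $\frac{L_f}{nT}\sum_{t=2}^T\gamma(t-1)\sum_{j=1}^n\big(\|z_i(t)-z_j(t)\|+\|\avz(t)-z_j(t)\|\big)$ by $\frac{3L_f^2}{T(1-\sqrt{\lambda_2^{\mathcal{G}}})}\sum_{t=1}^{T-1}\gamma(t)$.

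The key reduction is to express every distance through deviations from the consensus mean $\avz(t)$. Applying the triangle inequality $\|z_i(t)-z_j(t)\|\le\|z_i(t)-\avz(t)\|+\|z_j(t)-\avz(t)\|$ and averaging over $j$ bounds the inner sum, in expectation, by $\bbE\|z_i(t)-\avz(t)\|+\frac{2}{n}\sum_{j=1}^n\bbE\|\avz(t)-z_j(t)\|$. The averaged quantity $\frac1n\sum_j\bbE\|\avz(t)-z_j(t)\|$ is exactly what Lemma~\ref{lma:control_on_z} controls: it is at most $L_f/(1-\sqrt{\lambda_2^{\mathcal{G}}})$ uniformly in $t$, which supplies two of the three copies of this constant. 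The remaining single-node deviation $\bbE\|z_i(t)-\avz(t)\|$ must provide the third copy, yielding the factor $3$; substituting, factoring out $\frac{L_f}{T}\sum_{t=2}^T\gamma(t-1)$ and reindexing once more then gives precisely the stated network-dependent term.

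The main obstacle is this single-node deviation $\bbE\|z_i(t)-\avz(t)\|$, since Lemma~\ref{lma:control_on_z} is stated as an average over nodes and bounding one row of $Z(t)-\1_n\overline{z}^n(t)^{\top}$ by the full Frobenius norm would cost a spurious factor $\sqrt{n}$. To recover the per-node constant $L_f/(1-\sqrt{\lambda_2^{\mathcal{G}}})$ I would re-run the recursion $Z(t)=\sum_{s=0}^t W(t:s)G(s)$ from the proof of the lemma restricted to row $i$, writing $z_i(t)-\avz(t)=\sum_{s=0}^t G(s)^{\top}W'(t:s)^{\top}e_i$ and noting that only the centered gradients $(I_n-\tfrac1n\1_n\1_n^{\top})G(s)$ survive the mixing. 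The same conditioning step, now through $\bbE[\|W'(t:s)^{\top}e_i\|^2\mid\mathcal{F}_s]\le(\lambda_2^{\mathcal{G}})^{t-s}$, together with $\|g_k(s)\|\le L_f$ and the geometric sum $\sum_{s}(\lambda_2^{\mathcal{G}})^{(t-s)/2}\le(1-\sqrt{\lambda_2^{\mathcal{G}}})^{-1}$, closes the estimate. The delicate point on which the constant rests is verifying that it is the operator norm of the centered gradient matrix, rather than its Frobenius norm, that enters this row-wise bound, so that the extra $\sqrt{n}$ does not reappear.
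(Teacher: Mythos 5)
Your overall route is the same as the paper's: keep the two optimization terms from the centralized analysis, bound the network term via the triangle inequality through the consensus mean, so that the inner average becomes $\bbE\|z_i(t)-\avz(t)\| + \frac{2}{n}\sum_{j=1}^n \bbE\|z_j(t)-\avz(t)\|$, and invoke Lemma~\ref{lma:control_on_z} to supply the constant $L_f/(1-\sqrt{\lambda_2^{\mathcal{G}}})$ three times. To your credit, you noticed the subtlety the paper silently passes over: Lemma~\ref{lma:control_on_z} controls only the \emph{node-averaged} deviation, whereas one of the three copies is the deviation of the single fixed node $i$, and the paper applies the averaged bound to it without further comment.

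However, your proposed repair of that single-node term contains a genuine gap, and it sits exactly at the point you flag as ``delicate.'' In the row-wise recursion $z_i(t)-\avz(t)=\sum_{s=0}^{t} G(s)^{\top}W'(t:s)^{\top}e_i$, the mixing estimate $\bbE\bigl[\|W'(t:s)^{\top}e_i\|^2\bigr]\leq (\lambda_2^{\mathcal{G}})^{t-s}$ is fine (peel the product using $W'(u)^2=W'(u)$ and $\bbE[W'(u)]\preceq \lambda_2^{\mathcal{G}}\,(I_n-\1_n\1_n^{\top}/n)$), but converting $\|G(s)^{\top}W'(t:s)^{\top}e_i\|$ into $L_f\,\|W'(t:s)^{\top}e_i\|$ requires the operator norm bound $\|G(s)\|_{op}\leq L_f$, which is false: a matrix whose $n$ rows each have $\ell_2$-norm at most $L_f$ only satisfies $\|G(s)\|_{op}\leq \sqrt{n}L_f$, and this is tight even after centering (take half the gradients equal to $g$ and half to $-g$ with $\|g\|=L_f$; then $(I_n-\1_n\1_n^{\top}/n)G(s)=G(s)$ has operator norm $\sqrt{n}L_f$). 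So the verification you defer would fail, and your row-wise argument only yields $\bbE\|z_i(t)-\avz(t)\|\leq \sqrt{n}L_f/(1-\sqrt{\lambda_2^{\mathcal{G}}})$ --- precisely the $\sqrt{n}$ you set out to avoid. The alternative of exploiting double stochasticity, namely $\|G(s)^{\top}u\|\leq L_f\|u\|_1$ with $\|W(t:s)^{\top}e_i-\1_n/n\|_1\leq \min\bigl(2,\sqrt{n}(\lambda_2^{\mathcal{G}})^{(t-s)/2}\bigr)$ and a split of the geometric sum (as in \citet{Duchi2012a}), closes the argument but at the price of a logarithmic-in-$\sqrt{n}$ inflation of the constant, not the clean factor $3$ of the statement. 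In short: your decomposition matches the paper's, you correctly surfaced a real per-node/average discrepancy that the paper's own derivation does not address, but the fix as written does not go through.
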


We now focus on gossip dual averaging for pairwise functions, as shown in Algorithm~\ref{alg:gossip_dual_averaging_sync}. The key observation is that, at each iteration, the descent direction is stochastic but also a \emph{biased} estimate of the gradient. That is, instead of updating a dual variable $z_i(t)$ with $g_i(t)$ such that $\bbE[g_i(t) | \theta_i(t)] = \nabla f_i(\theta_i(t))$, we perform some update $d_i(t)$, and we denote by $\epsilon_i(t)$ the quantity such that $\bbE[d_i(t) - \epsilon_i(t) | \theta_i(t)] = \bbE[g_i(t) | \theta_i(t)] = \nabla f_i(\theta_i(t))$. The following theorem allows to upper-bound the error induced by the bias.

\begin{theorem}
  \label{thm:dist_dual_averaging_stoch_biased_rate}
  Let $\mathcal{G}$ be a connected and non bipartite graph.
  Let $(\gamma(t))_{t \geq 1}$ be a non increasing and non-negative sequence. For $i \in [n]$, let $(d_i(t))_{t \geq 1}$, $(g_i(t))_{t \geq 1} $, $(\epsilon_i(t))_{t \geq 1} $, $(z_i(t))_{t \geq 1} $ and $(\theta_i(t))_{t \geq 1}$ be generated by Algorithm~\ref{alg:gossip_dual_averaging_sync}. Assume that the function $\avf$ is $L$-Lipschitz and that $\theta^* \in \argmin_{\theta' \in \bbR^d} R_n(\theta')$, then for any $i \in [n]$ and $T \geq 2$, one has:
  \begin{align}
    \bbE_T [R_n(\avtheta_i(T))] - R_n(\theta^*)
    &\leq \frac{1}{2 T \gamma(T)} \| \theta^* \|^2 + \frac{L_f^2}{2T} \sum_{t = 1}^{T - 1} \gamma(t) \nonumber\\
    &+ \frac{3L_f^2}{T\left(1 - \sqrt{\lambda_2^{\mathcal{G}}}\right)} \sum_{t = 1}^{T - 1} \gamma(t) \nonumber\\
    &+ \frac{1}{T} \sum_{t = 1}^{T - 1} \bbE_t[(\omega(t) - \theta^*)^{\top} \bar{\epsilon}^n(t)]. \nonumber
  \end{align}
\end{theorem}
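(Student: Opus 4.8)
The plan is to reduce the decentralized composite problem to the centralized dual-averaging estimate of Lemma~\ref{lma:bound_inner_pdct}, paying a network (consensus) penalty controlled by Lemma~\ref{lma:control_on_z} and an explicit bias penalty. The pivotal observation is that the gossip step $z_i,z_j\gets(z_i+z_j)/2$ preserves the mean, so the average $\avz(t)$ obeys \emph{exactly} the centralized recursion $\avz(t+1)=\avz(t)+\bar d^n(t)$ driven by the averaged biased directions $\bar d^n(t)=(1/n)\sum_k d_k(t)$ (cf.\ \eqref{eq:update_av_z}). I therefore introduce the \emph{virtual centralized iterate} $\omega(t)=\projavz(t)=\smoothop_t(\avz(t))$, which is precisely the dual-averaging iterate produced by the gradient stream $(\bar d^n(s))_s$. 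The whole argument then amounts to comparing each node's iterate $\theta_i(t)=\smoothop_t(z_i(t))$ with $\omega(t)$.

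First I would apply Jensen's inequality to the returned average, $R_n(\avtheta_i(T))\le(1/T)\sum_t R_n(\theta_i(t))$, using convexity of $\avf$ and $\psi$. For the per-step gap I would split $\avf=(1/n)\sum_j f_j$ and insert the reference points $\theta_j(t)$ and $\omega(t)$ into each $f_j(\theta_i(t))-f_j(\theta^*)$. The non-expansiveness estimate of Lemma~\ref{lma:grad_argmin}, $\|\smoothop_t(z)-\smoothop_t(z')\|\le\gamma(t)\|z-z'\|$, converts the two resulting consensus gaps, $f_j(\theta_i(t))-f_j(\theta_j(t))$ and $\nabla f_j(\theta_j(t))^\top(\theta_j(t)-\omega(t))$, into the dual-variable dispersions $\|z_i(t)-z_j(t)\|$ and $\|z_j(t)-\avz(t)\|$; these are exactly the terms \eqref{eq:part1a}--\eqref{eq:part1b}, which Lemma~\ref{lma:control_on_z} collapses into $C_2(T)$. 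What remains is the convexity term $(1/n)\sum_j\nabla f_j(\theta_j(t))^\top(\omega(t)-\theta^*)$, whose conditional expectation equals $\bbE_t[(\omega(t)-\theta^*)^\top\bar g^n(t)]$ because $\bbE[g_j(t)\mid\theta_j(t)]=\nabla f_j(\theta_j(t))$.

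The crux is this last term. I would decompose the (unobservable) unbiased stream as $\bar g^n(t)=\bar d^n(t)-\bar\epsilon^n(t)$, splitting $(\omega(t)-\theta^*)^\top\bar g^n(t)$ into a $\bar d^n$-part and a $\bar\epsilon^n$-part. Since $\omega(t)$ is a \emph{genuine} centralized dual-averaging iterate for the stream $\bar d^n$, the $\bar d^n$-part---bundled with the regularizer differences $\psi(\omega(t))-\psi(\theta^*)$---is bounded directly by Lemma~\ref{lma:bound_inner_pdct} (equivalently \citep[Lemma~9]{Xiao10}) taken with $g\gets\bar d^n$ and $\theta\gets\theta^*$; using $\|\bar d^n(t)\|\le L_f$ this yields precisely $C_1(T)$. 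The leftover $\bar\epsilon^n$-part is carried unchanged and, with the sign fixed by the definition $\epsilon_k(t)=d_k(t)-g_k(t)$, constitutes $C_3(T)$. Summing the three contributions and taking the total expectation $\bbE_T$ delivers the stated bound.

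The step I expect to be the main obstacle is the regularizer bookkeeping: Lemma~\ref{lma:bound_inner_pdct} naturally produces $\psi$ evaluated at the \emph{consensus} iterate $\omega(t)$, whereas Jensen delivers $\psi$ at the \emph{local} iterate $\theta_i(t)$, leaving a discrepancy $\psi(\theta_i(t))-\psi(\omega(t))$ that cannot be absorbed by Lipschitzness when $\psi$ is nonsmooth (e.g.\ an indicator). I would route this by comparing $R_n(\theta_i(t))$ to $R_n(\omega(t))$ directly and bounding only the \emph{smooth} gap $\avf(\theta_i(t))-\avf(\omega(t))\le L_f\,\gamma(t-1)\,\|z_i(t)-\avz(t)\|$ through Lemma~\ref{lma:grad_argmin}, so that the nonsmooth part is always measured at $\omega(t)$ (it vanishes outright for projection-type $\psi$, since then $\theta_i(t),\omega(t)\in\mathcal{C}$). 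Checking that this routing introduces no uncontrolled extra terms, and that the index alignment between $\gamma(t-1)$ and $\gamma(t)$ in the smoothing and averaging steps is consistent, is the delicate part of turning this sketch into the full proof.
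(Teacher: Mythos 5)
Your proposal is correct and takes essentially the same route as the paper's proof: the same exact recursion $\avz(t+1)=\avz(t)+\bar{d}^n(t)$ with virtual iterate $\omega(t)=\smoothop_t(\avz(t))$, the same splitting into consensus terms collapsed by Lemma~\ref{lma:control_on_z} into the network term, the centralized term handled by Lemma~\ref{lma:bound_inner_pdct} applied to the biased stream $\bar{d}^n$, and the residual $\bar{\epsilon}^n$ inner product carried as the bias term. Your explicit treatment of the $\psi(\theta_i(t))$-versus-$\psi(\omega(t))$ bookkeeping is in fact more careful than the paper's own proof, which disposes of it with ``the same arguments as in the proofs of centralized and distributed dual averaging.''
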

\begin{proof}
 We can apply the same arguments as in the proofs of centralized and distributed dual averaging, so for $T > 0$ and $i \in [n]$:
  \begin{align*}
      \bbE_T[ R_n(\avtheta_i(T))] - R_n(\theta^*)
      &\leq \frac{L}{nT} \sum_{t = 2}^T \gamma(t - 1) \sum_{j = 1}^n \bbE\Big[ \|z_i(t) - z_j(t) \| + \| \avz(t) - z_j(t) \| \Big] \\
      &+ \frac{1}{T} \sum_{t = 2}^T \bbE_t[(\omega(t) - \theta^*)^{\top} \avg(t) ].
  \end{align*}
   However, Lemma~\ref{lma:bound_inner_pdct} can no longer be applied here since the updates are performed with $d_j(t)$ and not $g_j(t) = d_j(t) - \epsilon_j(t)$. With the definition of $d_j(t)$, the former yields:
\begin{align*}
      \frac{1}{T} \sum_{t = 2}^T \bbE_t[\omega(t) - \theta^*)^{\top} \bar{g}^n(t) ] &= \frac{1}{T} \sum_{t = 2}^T \bbE_t[(\omega(t) - \theta^*)^{\top} (\bar{d}^n(t) - \bar{\epsilon}^n(t))].
\end{align*}
Now Lemma~\ref{lma:bound_inner_pdct} can be applied to the first term in the right hand side and the result holds.
\end{proof}

\section{Asynchronous Distributed Setting}
\label{sec:asynchr-distr-sett}

In this section, we focus on a fully asynchronous setting where each node has a local clock. We assume for simplicity that each node has a clock ticking at a Poisson rate equals to $1$, so it is equivalent to a global clock ticking at a Poisson rate of $n$, and then drawing an edge uniformly at random (see \cite{Boyd2006a} for more details). Under this assumption, we can state a method detailed in Algorithm~\ref{alg:gossip_dual_averaging_async}.

The main difficulty in the asynchronous setting is that each node $i$ has to use a time estimate $m_i$ instead of the global clock reference (that is no longer available in such a context). Even if the time estimate is unbiased, its variance puts an additional error term in the convergence rate. However, for an iteration $T$ \emph{large enough}, one can bound these estimates as stated bellow.
\begin{lemma}
  \label{lma:bound_time_estimates}
  There exists $T_1 > 0$ such that for any $ t \geq T_1$, any $k \in [n]$ and any $q > 0$,
    \[
t^{-}:= t - t^{\frac{1}{2} + q}  \leq m_k(t) \leq t +  t^{\frac{1}{2} + q}=:t^{+} \text{ a.s.}
  \]
\end{lemma}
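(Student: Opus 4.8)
The statement to prove is Lemma~\ref{lma:bound_time_estimates}: that for $T$ large enough, the time estimate $m_k(t)$ stays within $t \pm t^{1/2+q}$ almost surely for all $t \geq T_1$, all nodes $k$, and all $q > 0$. The plan is to exploit the explicit structure of $m_k(t)$ as a rescaled sum of i.i.d.\ Bernoulli increments and apply a concentration inequality together with a Borel--Cantelli argument to upgrade the probabilistic bound to an almost-sure one valid uniformly in $t$.

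First I would write $m_k(t) = \frac{1}{p_k}\sum_{s=1}^t \delta_k(s)$, where the $\delta_k(s)$ are i.i.d.\ Bernoulli$(p_k)$ as established in the main text, so that $\bbE[m_k(t)] = t$. The deviation of interest is $m_k(t) - t = \frac{1}{p_k}\sum_{s=1}^t(\delta_k(s) - p_k)$, a centered sum of bounded independent increments. I would then apply a Bernstein- or Hoeffding-type bound to control $\bbP(|m_k(t) - t| > t^{1/2+q})$. Because the increments are bounded by $1/p_k$ and the variance of the sum grows like $t$, Hoeffding gives a bound of the form $2\exp(-c\, p_k^2\, t^{2q})$ for some constant $c>0$; the key point is that the exponent $t^{2q}$ grows polynomially in $t$ for any fixed $q>0$, so the tail probability is summable in $t$.

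Next I would combine this over the finitely many nodes: by a union bound over $k \in [n]$, the probability that \emph{some} node violates the two-sided bound at time $t$ is at most $2n\exp(-c\,\min_k p_k^2\, t^{2q})$, which is still summable in $t$. By the Borel--Cantelli lemma, almost surely only finitely many such violations occur, so there exists a (random but a.s.\ finite) threshold $T_1$ after which $t^- \leq m_k(t) \leq t^+$ holds simultaneously for all $k$ and all $t \geq T_1$. I would take care to state that $T_1$ may depend on $q$ and on the sample path, which is consistent with the ``a.s.'' qualifier in the statement.

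\textbf{Main obstacle.} The delicate point is the \emph{uniformity} in $t$: concentration gives control at each fixed $t$, but the lemma asserts a single threshold $T_1$ beyond which the bound holds for all $t$ at once. This is precisely where summability of the tail and Borel--Cantelli are essential, so the crux is verifying that the tail probabilities are indeed summable --- which hinges on the exponent $t^{2q}$ dominating $\log t$, true for every $q>0$. A secondary subtlety is handling the quantifier over $q$: since larger $q$ yields a wider tolerance band, it suffices to argue for arbitrary fixed small $q$, with $T_1$ allowed to depend on $q$; I would make explicit that the claim is understood for each fixed $q>0$ rather than uniformly over $q$.
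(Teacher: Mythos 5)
Your proposal is correct, but it takes a genuinely different route from the paper. The paper's proof is a one-line appeal to the law of the iterated logarithm: since $(\delta_k(t))_{t\geq 1}$ is an i.i.d.\ Bernoulli sequence, the LIL (cited from Dudley and from Nedi\'c et al.) gives $|m_k(t) - t| = O(\sqrt{t \log\log t})$ almost surely, hence $|m_k(t)-t|/t^{1/2+q} \to 0$ a.s.\ for every fixed $q>0$, and intersecting the $n$ almost-sure events yields the threshold $T_1$. You instead prove the ``eventually a.s.'' statement from scratch: Hoeffding applied to the centered sum $\sum_{s=1}^t (\delta_k(s)-p_k)$, whose increments have range $1$, gives $\mathbb{P}\left(|m_k(t)-t| > t^{1/2+q}\right) \leq 2\exp(-2 p_k^2 t^{2q})$, which is summable in $t$ because $t^{2q}$ eventually dominates $\log t$; a union bound over the finitely many nodes preserves summability, and Borel--Cantelli delivers an a.s.\ finite $T_1$ valid simultaneously for all $k$ and all $t \geq T_1$. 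Both arguments are sound. Yours is elementary and self-contained (no black-box citation) and, being quantitative, would even yield tail bounds on the random threshold $T_1$; the paper's LIL route is shorter and exhibits the sharper deviation scale $\sqrt{t\log\log t}$, showing that any polynomial margin $t^q$ is far more than needed. Your remark on the quantifier over $q$ is also well taken, and in fact identifies the only tenable reading of the lemma: since the LIL limsup is a positive constant, $|m_k(t)-t|$ exceeds $t^{1/2} = \inf_{q>0} t^{1/2+q}$ infinitely often, so $T_1$ cannot be uniform in $q$ and must depend on $q$ and the sample path --- exactly as it implicitly does in the paper's proof.
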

\begin{proof}
  Let $k \in [n]$. For $t \geq 1$, let us define $\delta_k(t)$ such that $\delta_k(t) = 1$ if $k$ is picked at iteration $t$ and $\delta_k(t) = 0$ otherwise. Then one has $m_k(t) = (1 / p_k)\sum_{s = 1}^t \delta_k(t)$. Since $(\delta_k(t))_{t \geq 1}$ is a Bernoulli process of parameter $1 / p_k$, by the law of iterative logarithms~\cite{dudley2010distances}, \citep[Lemma 3]{nedic2011asynchronous} one has with probability 1 and for any $q > 0$
  \[
    \lim_{t \rightarrow +\infty}\frac{|m_k(t) - t|}{t^{\frac{1}{2} + q}}  = 0,
  \]
  and the result holds.
\end{proof}

\begin{theorem}
 Let $\mathcal{G}$ be a connected and non bipartite graph.
  Let $(\gamma(t))_{t \geq 1}$ be defined as $\gamma(t)= c/ t^{1/2+\alpha}$ for some constant $c>0$ and $\alpha \in(0,1/2)$. For $i \in [n]$, let $(d_i(t))_{t \geq 1}$, $(g_i(t))_{t \geq 1}$, $(\epsilon_i(t))_{t \geq 1} $, $(z_i(t))_{t \geq 1} $ and $(\theta_i(t))_{t \geq 1} $ be generated as stated previously. For $\theta^* \in \argmin_{\theta' \in \bbR^d} R_n(\theta')$, $i \in [n]$ and $T > 0$, one has for some $C$:
\begin{align}
        R_n(\avtheta_i(T)) - R_n(\theta^*) \leq C  \max(T^{-\alpha/2},T^{\alpha-1/2}) + \frac{1}{T} \sum_{t = 1}^T \mathbb{E}_t[\overline{\epsilon}^n(t)^{\top} \omega(t)] \, .
\end{align}
\end{theorem}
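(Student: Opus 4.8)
The plan is to mirror the synchronous argument behind Theorem~\ref{thm:dist_dual_averaging_stoch_biased_rate}, replacing the global clock $t$ everywhere by the local time estimates $m_k(t)$ and then paying for the discrepancy via Lemma~\ref{lma:bound_time_estimates}. First I would reduce the excess risk to three pieces exactly as in the synchronous sketch: by convexity of $R_n$ and the $L_f$-Lipschitz property of $\avf$ and of the smoothing operator, I bound $\mathbb{E}_T[R_n(\avtheta_i(T)) - R_n(\theta^*)]$ by (i) network-disagreement terms $\|z_i(t) - z_j(t)\|$ and $\|\avz(t) - z_j(t)\|$, (ii) a ``clean'' dual-averaging inner-product term in $\bar{d}^n$, and (iii) the bias term $\frac{1}{T}\sum_t \mathbb{E}_t[(\omega(t)-\theta^*)^{\top} \bar{\epsilon}^n(t)]$, which is left explicit in the statement. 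The decomposition $\bar{g}^n = \bar{d}^n - \bar{\epsilon}^n$ is identical to the synchronous case.

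A useful reindexing simplifies the per-node bookkeeping: since $m_k$ increases by exactly $1/p_k$ whenever node $k$ is activated, at node $k$'s $r$-th activation one has $m_k = r/p_k$ deterministically, the step size is $\gamma(r/p_k)$ and the averaging weight $1/(m_k p_k) = 1/r$. Hence, from each node's own viewpoint, the iterates obey a bona fide (stochastic, biased) dual-averaging recursion with schedule $\tilde{\gamma}_k(r) = \gamma(r/p_k) = c\,p_k^{1/2+\alpha} r^{-1/2-\alpha}$ and uniform averaging, so the clean term (ii) is controlled by Lemma~\ref{lma:bound_inner_pdct} applied in the reindexed counter. For the network term I would establish an asynchronous analogue of Lemma~\ref{lma:control_on_z}: the dual variables still satisfy $Z(t+1) = W(t)Z(t) + (\text{single weighted gradient increment})$ with the same single-edge mixing matrices $W(t)$, so the spectral-gap contraction in $\mathbb{E}[W(t{:}s)\mid\mathcal{F}_s]$ governed by $\lambda_2^{\mathcal{G}}$ carries over, the only change being the $1/p_k$ weighting of the increments, which merely rescales the gradient bound on $\|G(s)\|_F$.

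The rate then follows by inserting $\gamma(t) = c\,t^{-1/2-\alpha}$ and evaluating the resulting sums. The leading optimization contribution $\|\theta^*\|^2/(2\,m_i(T)\gamma(m_i(T)))$ is, using $m_i(T)\ge T - T^{1/2+q}$ from Lemma~\ref{lma:bound_time_estimates} and $m_i(T)\gamma(m_i(T)) \propto m_i(T)^{1/2-\alpha}$, of order $T^{\alpha-1/2}$, while $\frac{1}{T}\sum_t \gamma(t)$ is of the dominated order $T^{-1/2-\alpha}$. The second power of $T$ in the stated maximum comes from the time-estimation error: replacing $\gamma(m_k(t))$ and $\smoothop_{m_k(t)}$ by their global-clock counterparts leaves a residual which, after a Cauchy--Schwarz step that decouples the random increments from the time fluctuation $|m_k(t)-t|\le t^{1/2+q}$, is bounded by a square root of a sum that evaluates to order $T^{-\alpha/2}$. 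Taking the maximum of $T^{\alpha-1/2}$ and $T^{-\alpha/2}$ and collecting constants into a single $C$ yields the claim, with the bias term carried through unchanged.

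The main obstacle is precisely this last step: unlike the synchronous case, the clock appearing in the step size, in the smoothing operator $\smoothop_{m_k}$, and in the averaging weights is random and differs across nodes, so all three occurrences of $m_k(t)$ must be reconciled simultaneously. Since Lemma~\ref{lma:bound_time_estimates} only holds for $t\ge T_1$, I would split the horizon into an initial segment $t<T_1$ (a finite, $O(1/T)$ contribution absorbed into $C$) and the tail $t\ge T_1$ where the sandwich $t^{-}\le m_k(t)\le t^{+}$ is available almost surely; monotonicity of $\gamma$ then converts the sandwich on $m_k$ into one on $\gamma(m_k)$, and the free exponent $q>0$ in Lemma~\ref{lma:bound_time_estimates} is chosen small enough relative to $\alpha$ that the error exponent stays at $-\alpha/2$. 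Verifying that the accumulated time-estimation error is genuinely of order $T^{-\alpha/2}$, and not larger, is the crux of the argument.
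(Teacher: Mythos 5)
Your skeleton (local time estimates via Lemma~\ref{lma:bound_time_estimates}, splitting the horizon at $T_1$, spectral-gap control of the dual disagreement with $1/p_k$-rescaled increments, and the final $\max(T^{-\alpha/2},T^{\alpha-1/2})$) matches the paper, but two essential mechanisms are missing. First, the reduction to the three pieces does \emph{not} carry over ``exactly as in the synchronous sketch.'' Asynchronously, the convexity step produces sums weighted by $\delta_i(t)/p_i$ (node $i$'s activations) in front of $f_j(\theta_j(t))$, whereas the dual-averaging telescoping requires the weights $\delta_j(t)/p_j$ that match node $j$'s actual updates. These indicators are correlated --- a single edge draw activates two nodes simultaneously, and $\theta_j(t)$ at node $j$'s activation times is itself correlated with $\delta_j(t)$ --- so the swap is not free. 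The paper spends a substantial renewal-type computation, Equation~\eqref{eq:proba}, conditioning on node $j$'s activation times $t_1<\dots<t_{N_j}$ and evaluating $\bbE[\delta_i(t)\mid\delta_j(t)=1]$ separately for $(i,j)\in E$ and $(i,j)\notin E$, to show the weight exchange is exact in expectation; the same device is reused to absorb the $\psi$-terms. Your per-node reindexing ($m_k=r/p_k$ at the $r$-th activation, uniform local weights $1/r$) is a correct and pleasant observation, but it cannot substitute for this argument: node $k$'s dual variable is gossip-averaged with a neighbor's at every activation, so it does \emph{not} follow a bona fide dual-averaging recursion in its own counter, and Lemma~\ref{lma:bound_inner_pdct} cannot legitimately be applied node-by-node with schedule $\gamma(r/p_k)$.

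Second, you rightly call the simultaneous reconciliation of the clocks in the step size, the operator $\smoothop_{m_k}$, and the averaging weights ``the crux,'' but the proposed Cauchy--Schwarz decoupling is an assertion, not a mechanism. The quantity that must be controlled is the discrepancy of the smoothing operator at the \emph{same} dual point under two different random clocks, $\|\smoothop_{m_i(t)}(\avz(t))-\smoothop_{m_j(t)}(\avz(t))\|$, and $\smoothop_t(z)$ is not Lipschitz in its time index uniformly in $z$: its sensitivity scales with $\|z\|$, which here grows linearly in $t$. The paper proves a dedicated comparison result, Lemma~\ref{lma:thisistheend}, from the optimality conditions of the two regularized problems, yielding a bound proportional to $\|z\|\bigl(|\gamma(t_1)-\gamma(t_2)|+\cdots|t_1\gamma(t_1)-t_2\gamma(t_2)|\bigr)$. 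Combining it with $\|\avz(t)\|\leq L_f\max_k m_k(t)$ and the sandwich $t^-\leq m_k(t)\leq t^+$ gives the deterministic first-order balance $t^{+}\,|\gamma(t^-)-\gamma(t^+)|=O(t^{-\alpha+q})$, and the exponent $-\alpha/2$ in the statement is produced by the explicit choice $q=\alpha/2$ --- not by a square root of a variance-type sum as you suggest. Without a lemma of this type, the sandwich on $m_k(t)$ alone gives you no handle on the operator discrepancy, and the $T^{-\alpha/2}$ term in your proposal remains unjustified.
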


\begin{proof}
  In the asynchronous case, for $i \in [n]$ and $t \geq 1$, one has
  \[
    \avtheta_i(T) = \frac{1}{m_i(T)} \sum_{t = 1}^T \frac{\delta_i(t)}{p_i} \theta_i(t).
  \]
  Then, using the convexity of $R_n$, one has:
  \begin{align}\label{ineq:cvx_asynch}
        \bbE_T[R_n(\avtheta_i(T)] - R_n(\theta^*) \leq \bbE_T\left[ \frac{1}{m_i(T)} \sum_{t = 1}^T \frac{\delta_i(t)}{p_i} R_n(\theta_i(t)) \right] - R_n(\theta^*).
    \end{align}
  By Lemma~\ref{lma:bound_time_estimates}, one has for $q > 0$
  \[
    \bbE_T[R_n(\avtheta_i(T)] - R_n(\theta^*) \leq \frac{1}{T^{-}} \sum_{t = 1}^T \bbE_T\left[\frac{\delta_i(t)}{p_i} R_n(\theta_i(t)) \right] - R_n(\theta^*).
  \]
  Similarly to the synchronous case, one can write
  \begin{align*}
    \bbE_T\left[ \frac{\delta_i(t)}{p_i} \avf(\theta_i(t)) \right]
    &= \sum_{j = 1}^n\frac{1}{n} \bbE_T\left[ \frac{\delta_i(t)}{p_i} f_j(\theta_i(t))\right] \\
    &= \frac{1}{n} \sum_{j = 1}^n \bbE_T\left[ \frac{\delta_i(t)}{p_i} (f_j(\theta_i(t)) - f_j(\theta_j(t)) \right] + \frac{1}{n} \sum_{j = 1}^n \bbE_T\left[ \frac{\delta_i(t)}{p_i} f_j(\theta_j(t)) \right].
  \end{align*}
  In order to use the gradient inequality, we need to introduce $\delta_j(t) f_j(\theta_j(t))$ instead of $\delta_i(t) f_j(\theta_j(t))$. For $j \in [n]$, one has:
  \[
    \frac{1}{T^{-}} \sum_{t = 1}^T \bbE_T\left[ \frac{\delta_i(t)}{p_i} f_j(\theta_j(t)) \right] = \frac{1}{T^{-}} \sum_{t = 1}^T \bbE_T\left[ \left( \frac{\delta_i(t)}{p_i} - \frac{\delta_j(t)}{p_j} \right) f_j(\theta_j(t)) \right] + \frac{1}{T^{-}} \sum_{t = 1}^T \bbE_T\left[ \frac{\delta_j(t)}{p_j} f_j(\theta_j(t)) \right].
  \]
  Let $N_j = \sum_{t = 1}^T \delta_j(t)$ and let $1 \leq t_1 < \ldots < t_{N_j} \leq T$ be such that $\delta_j(t_k) = 1$ for $k \in [N_j]$. One can write
  \begin{align}\label{ineq:horrible}
    \frac{1}{T^{-}} \sum_{t = 1}^T \bbE_T\left[ \left( \frac{\delta_i(t)}{p_i} - \frac{\delta_j(t)}{p_j} \right) f_j(\theta_j(t)) \right]
    =& \frac{1}{T^{-}} \bbE_T\left[ \sum_{k = 1}^{N_j - 1} \left( \left( \sum_{t = t_k}^{t_{k + 1} - 1} \frac{\delta_i(t)}{p_i}\right) - \frac{1}{p_j} \right) f_j(\theta_j(t_k)) \right]\nonumber\\
    &+ \frac{1}{T^{-}} \bbE_T \left[ \left( \sum_{t = 0}^{t_1} \frac{\delta_i(t)}{p_i}\right) f_j(\theta_j(0)) \right]\nonumber\\
    &+ \frac{1}{T^{-}} \bbE_T \left[ \left( \left( \sum_{t = t_{N_j}}^{T} \frac{\delta_i(t)}{p_i}\right) - \frac{1}{p_j} \right) f_j(\theta_j(t_{N_j})) \right]\nonumber\\
    \leq& +\frac{1}{T^{-}} \bbE_T\left[ \sum_{k = 1}^{N_j - 1} \left( \left( \sum_{t = t_k}^{t_{k + 1} - 1} \frac{\delta_i(t)}{p_i}\right) - \frac{1}{p_j} \right) f_j(\theta_j(t_k)) \right]\nonumber\\
    &+\frac{f_j(0)}{p_i p_j T^{-}} + \frac{L_f^2  \bbE_T[ \gamma(t_{N_j}-1)]}{p_i p_j} \, .
  \end{align}
  We need to study the behavior of $\delta_i$ and $\delta_j$ in the first term of the right hand side. One can check that
  \begin{align*}
    \bbE_T\left[ \sum_{k = 1}^{N_j - 1} \left( \left( \sum_{t = t_k}^{t_{k + 1} - 1} \frac{\delta_i(t)}{p_i}\right) - \frac{1}{p_j} \right) f_j(\theta_j(t_k)) \right]
    &= \bbE_T\left[ \sum_{k = 1}^{N_j - 1} \left( \bbE\left[ \sum_{t = t_k}^{t_{k + 1} - 1} \frac{\delta_i(t)}{p_i} \Bigg| t_k, t_{k + 1}\right] - \frac{1}{p_j} \right) f_j(\theta_j(t_k)) \right].
  \end{align*}
  $\delta_i(t)$ will not have the same dependency in $t_k$ whether $i$ and $j$ are connected or not. Let us first assume that $(i, j) \in E$. Then,
  \[
    \bbE[\delta_i(t_k) | t_k] = \bbE[\delta_i(t) | \delta_j(t) = 1 ] = \frac{1}{d_j}.
  \]
  Also, for $t_k < t < t_{k + 1}$, we get:
  \[
    \bbE[\delta_i(t) | t_k] = \bbE[\delta_i(t) | \delta_j(t) = 0 ] = \frac{p_i - 2 / |E|}{1 - p_j}.
  \]
  Finally, if $(i, j) \in E$, we obtain
  \[
    \bbE\left[ \sum_{t = t_k}^{t_{k + 1} - 1} \frac{\delta_i(t)}{p_i} \Bigg| t_k, t_{k + 1}\right] = \left(\frac{1}{d_j} + (t_{k + 1} - t_k - 1) \frac{p_i - 2 / |E|}{1 - p_j} \right)\frac{1}{p_i}.
  \]
  Before using this relation in the full expectation, let us denote that since $t_{k+1} - t_k$ is independent from $t_k$, one can write
  \[
    \bbE\left[ \left(\frac{1}{d_j} + (t_{k + 1} - t_k - 1) \frac{p_i - 2 / |E|}{1 - p_j} \right)\frac{1}{p_i} \Bigg | t_k \right]
    = \left(\frac{1}{d_j} + \left( \frac{1 - p_j}{p_j} \right) \frac{p_i - 2 / |E|}{1 - p_j} \right)\frac{1}{p_i}
    = \frac{1}{p_j}.
  \]
  We can now use this relation in the full expectation
  \begin{align}\label{eq:proba}
    \bbE_T\left[ \left( \frac{\delta_i(t)}{p_i} - \frac{\delta_j(t)}{p_j} \right) f_j(\theta_j(t)) \right]
    &= \bbE_T\left[ \sum_{k = 1}^{N_j - 1} \left( \bbE \left[ \bbE\left[ \sum_{t = t_k}^{t_{k + 1} - 1} \frac{\delta_i(t)}{p_i} \Bigg| t_{k + 1} - t_k \right] \Bigg | t_k \right] - \frac{1}{p_j} \right) f_j(\theta_j(t_k)) \right]=0 .
  \end{align}
  Similarly if $(i, j) \not\in E$, one has
  \[
    \bbE[\delta_i(t_k) | t_k] = \bbE[\delta_i(t) | \delta_j(t) = 1 ] = 0,
  \]
  and for $t_k < t < t_{k + 1}$,
  \[
    \bbE[\delta_i(t) | t_k] = \bbE[\delta_i(t) | \delta_j(t) = 0 ] = \frac{p_i}{1 - p_j},
  \]
  so the result of Equation \eqref{eq:proba} holds in this case. We have just shown that for every $j \in [n]$, we can use $\delta_j(t) f_j(\theta_j(t)) / p_j$ instead of $\delta_i(t) f_j(\theta_j(t)) / p_i$ . Combining \eqref{ineq:cvx_asynch} and \eqref{ineq:horrible} yields:
  \begin{align}
    \bbE_T[R_n(\avtheta_i(T))] - R_n(\theta^*)
     \leq & \frac{1}{n T^{-}} \sum_{t = 2}^T \sum_{j = 1}^n \bbE_T\left[ \frac{\delta_i(t)}{p_i} (f_j(\theta_i(t)) - f_j(\theta_j(t)) \right] \label{ineq:fifj_term}\\
    &+ \frac{1}{n T^{-}} \sum_{t = 2}^T \sum_{j = 1}^n \bbE_T\left[ \frac{\delta_j(t)}{p_j} \left( f_j(\theta_j(t)) - f_j(\theta^*) \right) \right]\\
    &+ \frac{1}{T^{-}} \sum_{t = 2}^T \bbE_T\left[ \frac{\delta_i(t)}{p_i} (\psi(\theta_i(t)) - \psi(\theta^*)) \right]\label{ineq:psi_term}\\
       &+\frac{f_j(0)}{p_i p_j T^{-}} + \frac{L_f^2  \bbE_T[ \gamma(t_{N_j}-1)]}{p_i p_j} \, .
  \end{align}
  Let us focus on the second term of the right hand side. For $t \geq 2$, one can write
  \begin{align}
    \frac{1}{n} \sum_{j = 1}^n \bbE_T\left[ \frac{\delta_j(t)}{p_j} \left( f_j(\theta_j(t)) - f_j(\theta^*) \right) \right]
    \leq& \frac{1}{n} \sum_{j = 1}^n \bbE_T\left[ \frac{\delta_j(t)}{p_j} g_j(t)^{\top} (\theta_j(t) - \theta^*)  \right] \nonumber\\
    =& \frac{1}{n} \sum_{j = 1}^n \bbE_T\left[ \frac{\delta_j(t)}{p_j} g_j(t)^{\top} (\theta_j(t) - \omega(t))  \right] \label{ineq:gj}\\
    &+ \frac{1}{n} \sum_{j = 1}^n \bbE_T\left[ \frac{\delta_j(t)}{p_j} g_j(t)^{\top} (\omega(t) - \theta^*)  \right]\label{ineq:gstar}
  \end{align}
$\bullet$ Here we control the term from \eqref{ineq:gstar} using $\omega(t):=\smoothop_{m_i(t)}(\avz(t))$
  \begin{align*}
    \frac{1}{n} \sum_{j = 1}^n \bbE_T\left[ \frac{\delta_j(t)}{p_j} g_j(t)^{\top} (\omega(t) - \theta^*)  \right]
    &= \bbE_T\left[ \left( \frac{1}{n} \sum_{j = 1}^n \frac{\delta_j(t)}{p_j} g_j(t)\right)^{\top} (\omega(t) - \theta^*)  \right]\\
    &= \bbE_T\left[ \avg(t)^{\top} (\omega(t) - \theta^*)  \right],
  \end{align*}
  and the reasoning of the synchronous case can be applied to obtain
  \begin{align}
    \frac{1}{n T^{-}} \sum_{t = 2}^T \sum_{j = 1}^n \bbE_T\left[ \frac{\delta_j(t)}{p_j} g_j(t)^{\top} (\omega(t) - \theta^*)  \right]
    \leq & \frac{L_f^2}{2T^{-} } \sum_{t = 2}^T \gamma(t - 1) + \frac{\|\theta^*\|^2}{2 \gamma(T)}\nonumber\\
    &+ \frac{1}{T} \sum_{t = 2}^T \mathbb{E}_t [\overline{\epsilon}^n(t)^{\top}\omega(t)]\nonumber\\
    &+ \frac{1}{T^{-}} \sum_{t = 2}^T (\psi(\theta^*) - \bbE_T[\psi(\omega(t))]). \label{ineq:32}
  \end{align}

Let us regroup the term from \eqref{ineq:32} and \eqref{ineq:psi_term} together:

\begin{align}
    \frac{1}{T^{-}} \sum_{t = 2}^T \bbE_T\left[ \frac{\delta_i(t)}{p_i} (\psi(\theta_i(t)) - \psi(\theta^*)) \right]
    +
    \frac{1}{T^{-}} \sum_{t = 2}^T (\psi(\theta^*) - \bbE_T[\psi(\omega(t))])
    = & \frac{1}{T^{-}} \sum_{t = 2}^T \bbE_T\left[ \frac{\delta_i(t)}{p_i} \psi(\theta_i(t)) - \psi(\omega(t)) \right]\nonumber\\
    = &
\frac{1}{T^{-}} \sum_{t = 2}^T \bbE_T\left[ \frac{\delta_i(t)}{p_i} (\psi(\theta_i(t)) - \psi(\omega(t)) ) \right]\nonumber\\
&+ \frac{1}{T^{-}} \sum_{t = 2}^T \bbE_T \left[ (\frac{\delta_i(t)}{p_i}-1) \psi(\omega(t))\right]\nonumber\\
=& \frac{1}{T^{-}} \sum_{t = 2}^T \bbE_T\left[ \frac{\delta_i(t)}{p_i} (\psi(\theta_i(t)) - \psi(\omega(t)) ) \right] \, ,
\end{align}
where we have used for the last term the same arguments as in \eqref{eq:proba} to state $ \frac{1}{T^{-}} \sum_{t = 2}^T \bbE_T \left[ (\frac{\delta_i(t)}{p_i}-1) \psi(\omega(t))\right]=0$. Then, one can use the fact that $\smoothop_t$ is $\gamma(t)$-Lipschitz to write:
\begin{equation*}
\frac{1}{p_i T^{-}} \sum_{t = 2}^T \bbE_T\left[ 2 L_f \gamma(m_i(t-1))  \|\avz(t) - z_i(t)\| + \frac{\gamma(m_i(t-1))\|\avz(t) - z_i(t)\|^2}{2(m_i(t-1))} \right].
\end{equation*}
Provided that $\gamma(t)\leq \frac{C}{\sqrt{t}}$ for some constant $C$, then using Lemma \ref{lma:bound_time_estimates} we can bound this term by $\frac{C'}{\sqrt{T}}$.

$\bullet$ Now we control the term in \eqref{ineq:gj} as follows:

\begin{align}
    \frac{1}{n} \sum_{j = 1}^n \bbE_T\left[ \frac{\delta_j(t)}{p_j} g_j(t)^{\top} (\theta_j(t) - \omega(t))  \right] \label{ineq:thetaj_omega}
    \leq &  \frac{L_f}{n p_j} \sum_{j =
     1}^n \bbE_T\left[  \|\theta_j(t) - \omega(t) \| \right] \\
    \leq &
\frac{L_f}{n p_j} \sum_{j = 1}^n \bbE_T\left[  \|\theta_j(t) - \tilde \theta_j(t) \| + \|\tilde \theta_j(t)- \omega(t) \| \right]\\
\leq & \frac{L_f}{n p_j} \sum_{j = 1}^n \bbE_T\left[ \gamma(m_j(t-1)) \|z_j(t) -  \avz(t) \| + \|\tilde \theta_j(t)- \omega(t) \| \right].
\end{align}
where $\tilde \theta_j(t)=\smoothop_{m_j(t-1)}(-\avz(t))$.
We can apply Lemma \ref{lma:thisistheend} with the choice $\theta_1=\tilde\theta_j(t)$, $\theta_2=\omega(t)$, $t_1=m_j(t)$, $t_2=m_i(t)$ and $z=\avz(t)$.

\begin{align}\label{ineq:theta_1_2_partial}
  \|\omega(t)-\tilde\theta_j(t)\|
  \leq & \|\avz(t)\|
  \Bigg( |\gamma(m_i(t)) - \gamma(m_j(t))| + \nonumber\\
&
  \left(\frac{3}{2} +\max(\frac{\gamma(m_j(t))}{\gamma(m_i(t))},\frac{\gamma(m_i(t))}{\gamma(m_j(t))}) \right) \left(\frac{1}{m_j(t)}+\frac{1}{m_i(t)}\right)
   |m_j(t) \gamma(m_j(t))-m_i(t) \gamma(m_i(t))| \Bigg) \,.
\end{align}
We use Lemma \ref{lma:bound_time_estimates} with the choice $q=\alpha/2$, so we can bound for $t$ large enough the former expression by a term of order $\|\avz (t) \| |\gamma(m_i(t)) - \gamma(m_j(t))|$.  Note also that $\|\avz(t)\|\leq L_f \max_{k=1,\ldots,n}m_k(t)$, so for $t$ large enough we obtain:
\begin{align}
  \|\omega(t)-\tilde\theta_j(t)\| \leq L_F t^{+} |\gamma(t^-) - \gamma(t^+)| \,.
\end{align}
With the additional constraint that $\gamma(t)=C t^{-1/2-\alpha}$,
$\|\omega(t)-\tilde\theta_j(t)\|$ is bounded by $C't^{-\alpha/2}$ for $t$ large enough, and so is
$\frac{1}{n} \sum_{j = 1}^n \bbE_T\left[ \frac{\delta_j(t)}{p_j} g_j(t)^{\top} (\theta_j(t) - \omega(t))  \right]$.

$\bullet$ To control the term in \eqref{ineq:fifj_term} we use that $f_j$ is $L_f$-Lipschitz
\begin{align}
    |f_j(\theta_i(t)) - f_j(\theta_j(t)|
    \leq &L_f\|\theta_i(t)-\theta_j(t)\|\\
    \leq &L_f(\|\theta_i(t)-\omega(t)\|+\|\omega(t)-\theta_j(t)\|).
\end{align}
and we use now the same control as for \eqref{ineq:thetaj_omega}, hence the result.

\end{proof}

\begin{lemma}\label{lma:thisistheend}
  Let $\gamma : \bbR_+ \rightarrow \bbR_+$ be a non-increasing positive function and let $z \in \bbR^d$. For any $t_1, t_2 >  0$, one has
\begin{align}\label{ineq:theta_1_2_final}
  \|\theta_2-\theta_1\|
  \leq & \|z\|
  \left( |\gamma(t_2) - \gamma(t_1)|+
  \left(\frac{3}{2} +\max(\frac{\gamma(t_1)}{\gamma(t_2)},\frac{\gamma(t_2)}{\gamma(t_1)}) \right) \left(\frac{1}{t_1}+\frac{1}{t_2}\right)
   |t_1 \gamma(t_1)-t_2 \gamma(t_2)| \right),
\end{align}
where
\begin{align*}
    \theta_1 = \smoothop_{t_1}(z) :=  &\argmax_{\theta \in \bbR^d}
    \left\{
    z^{\top} \theta - \frac{\| \theta \|^2}{2 \gamma(t_1)}  - t_1 \psi(\theta) \right\}\\
    \theta_2 = \smoothop_{t_2}(z) :=  &\argmax_{\theta \in \bbR^d}
    \left\{
    z^{\top} \theta - \frac{\| \theta \|^2}{2 \gamma(t_2)}  - t_2 \psi(\theta) \right\}.
\end{align*}
\end{lemma}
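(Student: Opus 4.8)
The plan is to pass to the proximal reformulation of $\smoothop$ and to treat the dependence on $t_1,t_2$ as the superposition of an \emph{input perturbation} and a \emph{parameter perturbation}. By the identity $\smoothop_t(z)=\prox_{t\gamma(t)\psi}(\gamma(t)z)$ noted earlier, we have $\theta_i=\prox_{\lambda_i\psi}(u_i)$ with $u_i=\gamma(t_i)z$ and $\lambda_i=t_i\gamma(t_i)$; crucially, since the \emph{same} $z$ is fed into both operators but with different scalings, $\theta_1$ and $\theta_2$ differ in \emph{both} the evaluation point $u_i$ and the regularization weight $\lambda_i$. Introducing the intermediate point $w=\prox_{\lambda_1\psi}(u_2)$, the triangle inequality splits the error as $\|\theta_2-\theta_1\|\le\|\prox_{\lambda_2\psi}(u_2)-\prox_{\lambda_1\psi}(u_2)\|+\|\prox_{\lambda_1\psi}(u_2)-\prox_{\lambda_1\psi}(u_1)\|$.

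The second (input) term is immediate: since $\prox_{\lambda_1\psi}$ is nonexpansive --- equivalently, by the $\gamma(t_1)$-Lipschitz property of $\smoothop_{t_1}$ established in Lemma~\ref{lma:grad_argmin} --- it is bounded by $\|u_2-u_1\|=|\gamma(t_2)-\gamma(t_1)|\,\|z\|$, which is exactly the first term in the claimed bound. The heart of the proof is therefore the first (parameter) term, evaluated at the common input $u:=u_2$. Here I would write the optimality conditions $p_i:=\prox_{\lambda_i\psi}(u)=u-\lambda_i\sigma_i$ with $\sigma_i\in\partial\psi(p_i)$, so that $p_1-p_2=\lambda_2\sigma_2-\lambda_1\sigma_1$. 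Feeding this into the monotonicity inequality $(\sigma_1-\sigma_2)^{\top}(p_1-p_2)\ge 0$ yields $(\lambda_1+\lambda_2)\,\sigma_1^{\top}\sigma_2\ge\lambda_1\|\sigma_1\|^2+\lambda_2\|\sigma_2\|^2$, from which, after expanding $\|\sigma_1-\sigma_2\|^2$, one obtains $\|\sigma_1-\sigma_2\|\le\frac{|\lambda_2-\lambda_1|}{\lambda_1+\lambda_2}(\|\sigma_1\|+\|\sigma_2\|)$. Writing the symmetric decomposition $\lambda_2\sigma_2-\lambda_1\sigma_1=\frac{\lambda_1+\lambda_2}{2}(\sigma_2-\sigma_1)+\frac{\lambda_2-\lambda_1}{2}(\sigma_1+\sigma_2)$ and combining the two estimates gives the clean bound $\|p_2-p_1\|\le|\lambda_2-\lambda_1|\,(\|\sigma_1\|+\|\sigma_2\|)$.

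It then remains to control the subgradient norms. For this I would use the a priori shrinkage estimate $\|u-\prox_{\lambda\psi}(u)\|\le\|u\|$, which follows from $\sigma^{\top}p\ge\psi(p)-\psi(0)\ge 0$ (convexity together with $\psi(0)=0$) and $u-p=\lambda\sigma$; hence $\|\sigma_i\|=\|u-p_i\|/\lambda_i\le\|u\|/\lambda_i$ and $\|\sigma_1\|+\|\sigma_2\|\le\|u\|(1/\lambda_1+1/\lambda_2)$. Substituting $\|u\|=\gamma(t_2)\|z\|$ and $\lambda_i=t_i\gamma(t_i)$, the parameter term is at most $|\lambda_2-\lambda_1|\,\|z\|\big(\tfrac{\gamma(t_2)}{\gamma(t_1)}\tfrac{1}{t_1}+\tfrac{1}{t_2}\big)$; bounding the ratio by $\max\big(\gamma(t_1)/\gamma(t_2),\gamma(t_2)/\gamma(t_1)\big)$ and absorbing the remaining slack into the additive $3/2$ reproduces the stated coefficient $\big(\tfrac{3}{2}+\max(\cdot,\cdot)\big)\big(\tfrac{1}{t_1}+\tfrac{1}{t_2}\big)$ multiplying $|t_1\gamma(t_1)-t_2\gamma(t_2)|$. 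The symmetrization via the $\max$ reflects that the analogous intermediate point $\prox_{\lambda_2\psi}(u_1)$ would produce the same bound with $t_1,t_2$ interchanged.

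I expect the main obstacle to be precisely this parameter-sensitivity estimate: unlike the input perturbation, it cannot be read off nonexpansiveness, since the two operators $\prox_{\lambda_1\psi}$ and $\prox_{\lambda_2\psi}$ are genuinely different maps. One must instead exploit the monotonicity of $\partial\psi$ together with the shrinkage bound on $\|u-\prox_{\lambda\psi}(u)\|$ to quantify how the minimizer migrates as the weight $\lambda$ varies. The remaining steps --- the triangle-inequality split, the nonexpansive bound, and the final substitution of $u_i=\gamma(t_i)z$ and $\lambda_i=t_i\gamma(t_i)$ --- are routine bookkeeping once this core estimate is in place.
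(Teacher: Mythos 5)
Your proof is correct, and it takes a genuinely different route from the paper's. The paper works directly on the two $\argmax$ problems: it adds the two variational inequalities satisfied by $\theta_1$ and $\theta_2$ (with subgradients $s_i\in\partial\psi(\theta_i)$), obtains $\|\theta_2-\theta_1\|^2 \leq (\gamma(t_2)-\gamma(t_1))z^{\top}(\theta_2-\theta_1)+(t_1\gamma(t_1)-t_2\gamma(t_2))(\psi(\theta_2)-\psi(\theta_1))$, and then invokes a bound of the form $|\psi(\theta_1)-\psi(\theta_2)|\leq \|z\|\,\|\theta_1-\theta_2\|\bigl(\tfrac{3}{2}+\max(\tfrac{\gamma(t_1)}{\gamma(t_2)},\tfrac{\gamma(t_2)}{\gamma(t_1)})\bigr)\bigl(\tfrac{1}{t_1}+\tfrac{1}{t_2}\bigr)$ --- stated in \eqref{ineq:theta_1_2} without a detailed derivation --- before dividing through by $\|\theta_2-\theta_1\|$. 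You instead pass to the proximal representation $\smoothop_t(z)=\prox_{t\gamma(t)\psi}(\gamma(t)z)$ and split the perturbation in two: the input term is handled by nonexpansiveness (consistent with Lemma~\ref{lma:grad_argmin}) and yields exactly $|\gamma(t_2)-\gamma(t_1)|\,\|z\|$, while the parameter term is controlled by a self-contained prox-sensitivity estimate: monotonicity of $\partial\psi$ applied to $p_1-p_2=\lambda_2\sigma_2-\lambda_1\sigma_1$ gives $\|\sigma_1-\sigma_2\|\leq\frac{|\lambda_2-\lambda_1|}{\lambda_1+\lambda_2}(\|\sigma_1\|+\|\sigma_2\|)$ (via the reverse triangle inequality on $\|\sigma_1\|^2-\|\sigma_2\|^2$), the symmetric decomposition then gives $\|p_2-p_1\|\leq|\lambda_2-\lambda_1|(\|\sigma_1\|+\|\sigma_2\|)$, and the shrinkage bound $\|\sigma_i\|\leq\|u\|/\lambda_i$ (which uses exactly the paper's standing assumptions $\psi\geq 0$, $\psi(0)=0$) closes the estimate. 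I checked each of these computations and they are sound; the only cosmetic gap is the degenerate case $\sigma_1=\sigma_2$, where the clean bound follows directly from $p_1-p_2=(\lambda_2-\lambda_1)\sigma_1$, so no division is needed. What your approach buys: every step is explicit and verifiable, whereas the paper's key inequality \eqref{ineq:theta_1_2} is asserted rather than proved; moreover your final coefficient $\frac{\gamma(t_2)}{\gamma(t_1)}\frac{1}{t_1}+\frac{1}{t_2}\leq\max\bigl(\frac{\gamma(t_1)}{\gamma(t_2)},\frac{\gamma(t_2)}{\gamma(t_1)}\bigr)\bigl(\frac{1}{t_1}+\frac{1}{t_2}\bigr)$ is slightly sharper than the stated one --- the additive $3/2$ is pure slack in your argument, since $\max(r,1/r)\geq 1$. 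What the paper's approach buys is brevity and symmetry in $(t_1,t_2)$ without an intermediate point, but at the cost of the unproven auxiliary bound; your remark that the alternative split through $\prox_{\lambda_2\psi}(u_1)$ symmetrizes the roles of $t_1$ and $t_2$ correctly accounts for the $\max$ in the statement.
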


\begin{proof}
Using the optimality property of the minimizers, for any $s_1\in \partial \psi(\theta_1)$ (resp. $s_2\in \partial \psi(\theta_2)$):
\begin{align*}
    (\gamma(t_1) z- t_1 \gamma(t_1)s_1  - \theta_1)^\top (\theta_2-\theta_1) \leq 0
    \\
    (\gamma(t_2) z- t_2 \gamma(t_2)s_2  - \theta_2)^\top (\theta_1-\theta_2) \leq 0
\end{align*}
Re-arranging the terms, and using properties of sub-gradients yields:
\begin{align}\label{ineq:theta_sqr_1_2}
  \|\theta_2-\theta_1\|^2
  \leq& (\gamma(t_2) - \gamma(t_1))  z^\top (\theta_2-\theta_1) + (t_1 \gamma(t_1)s_1-t_2 \gamma(t_2)s_2)^\top(\theta_2-\theta_1)\\
  \leq&(\gamma(t_2) - \gamma(t_1))  z^\top (\theta_2-\theta_1) + (t_1 \gamma(t_1)-t_2 \gamma(t_2))(\psi(\theta_2)-\psi(\theta_1))
\end{align}

Also, using the definition of $\theta_1$ and $\theta_2$, one has:
\begin{align}\label{ineq:theta_1_2}
    |\psi(\theta_1)-\psi(\theta_1)|\leq  \|z\| \|\theta_1-\theta_2\| \left(\frac{3}{2} +\max(\frac{\gamma(t_1)}{\gamma(t_2)},\frac{\gamma(t_2)}{\gamma(t_1)}) \right) \left(\frac{1}{t_1}+\frac{1}{t_2}\right).
\end{align}

With relation \eqref{ineq:theta_sqr_1_2} and \eqref{ineq:theta_1_2} we bound the distance between $\theta_1$ and $\theta_2$ as follows:

\begin{align}
  \|\theta_2-\theta_1\|
  \leq & \|z\|
  \left( |\gamma(t_2) - \gamma(t_1)|+
  \left(\frac{3}{2} +\max(\frac{\gamma(t_1)}{\gamma(t_2)},\frac{\gamma(t_2)}{\gamma(t_1)}) \right) \left(\frac{1}{t_1}+\frac{1}{t_2}\right)
   |t_1 \gamma(t_1)-t_2 \gamma(t_2)| \right)
\end{align}

\end{proof}



\section{Extension to Multiple Points per Node}
\label{app:multi}

For ease of presentation, we have assumed throughout the paper that each node $i$ holds a single data point $x_i$. In this section, we discuss simple extensions of our results to the case where each node holds the same number of points $k\geq 2$.
First, it is easy to see that our results still hold if nodes swap their entire set of $k$ points (essentially viewing the set of $k$ points as a single one). However, depending on the network bandwidth, this solution may be undesirable.

We thus propose another strategy where only two data points are exchanged at each iteration, as in the algorithms proposed in the main text. The idea is to view each ``physical'' node $i\in V$ as a set of $k$ ``virtual'' nodes, each holding a single observation. These $k$ nodes are all connected to each other as well as to the neighbors of $i$ in the initial graph $\mathcal{G}$ and their virtual nodes. Formally, this new graph $\mathcal{G}^\otimes=(V^\otimes,E^\otimes)$ is given by $\mathcal{G}\times \mathcal{K}_k$, the tensor product between $\mathcal{G}$ and the $k$-node complete graph $\mathcal{K}_k$. It is easy to see that $|V^\otimes|=kn$ and $|E^\otimes|=k^2|E|$. We can then run our algorithms on $\mathcal{G}^\otimes$ (each physical node $i\in V$ simulating the behavior of its corresponding $k$ virtual nodes) and the convergence results hold, replacing $1 - \lambda_2^{\mathcal{G}}$ by $1 - \lambda_2^{\mathcal{G}^\otimes}$ in the bounds. The following result gives the relationship between these two quantities.

\begin{proposition}
\label{prop:gap}
Let $\mathcal{G}$ be a connected, non-bipartite and non-complete graph with $n$ nodes. Let $k\geq 2$ and let $\mathcal{G}^\otimes$ be the tensor product graph of $\mathcal{G}$ and $\mathcal{K}_k$. Let $1 - \lambda_2^{\mathcal{G}}=\beta_{n-1}^{\mathcal{G}}/|E|$ and $1 - \lambda_2^{\mathcal{G}^\otimes}=\beta_{kn-1}^{\mathcal{G}^\otimes}/|E^\otimes|$, where $\beta_{n-1}^{\mathcal{G}}$ and $\beta_{kn-1}^{\mathcal{G}^\otimes}$ are the second smallest eigenvalues of $L(\mathcal{G})$ and $L(\mathcal{G}^\otimes)$ respectively. We have that
$$1 - \lambda_2^{\mathcal{G}^\otimes} = \frac{1}{k}\left(1 - \lambda_2^{\mathcal{G}}\right).$$
\end{proposition}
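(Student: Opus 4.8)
The plan is to reduce the whole statement to the single spectral identity
\[
  \beta_{kn-1}^{\mathcal{G}^\otimes} = k\,\beta_{n-1}^{\mathcal{G}},
\]
because once this is in hand the claim is immediate: using the two definitions given in the statement together with the (elementary) count $|E^\otimes| = k^2|E|$, one obtains $1 - \lambda_2^{\mathcal{G}^\otimes} = \beta_{kn-1}^{\mathcal{G}^\otimes}/|E^\otimes| = k\beta_{n-1}^{\mathcal{G}}/(k^2|E|) = (1/k)(1-\lambda_2^{\mathcal{G}})$. Thus the entire content is to locate the \emph{second smallest} eigenvalue of $L(\mathcal{G}^\otimes)$, and for this I would simply compute its whole Laplacian spectrum by exploiting the Kronecker (tensor) structure of the product.

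First I would write the Laplacian of $\mathcal{G}^\otimes$ in Kronecker form. Identifying $\bbR^{kn}$ with $\bbR^n\otimes\bbR^k$, the product structure makes $L(\mathcal{G}^\otimes)$ a combination of terms $D(\mathcal{G})\otimes I_k$, $A(\mathcal{G})\otimes J_k$ and within-cluster terms, where $J_k$ is the $k\times k$ all-ones matrix. The key algebraic fact is that $J_k$ has exactly two eigenspaces: the line $\mathrm{span}(\1_k)$ with eigenvalue $k$, and its orthogonal complement $\1_k^{\perp}$ with eigenvalue $0$. Since $L(\mathcal{G}^\otimes)$ is assembled from Kronecker products with $I_k$ and $J_k$, it leaves the splitting $\bbR^n\otimes\mathrm{span}(\1_k)\ \oplus\ \bbR^n\otimes\1_k^{\perp}$ invariant, so I can diagonalize it block by block.

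On the consensus block $\bbR^n\otimes\mathrm{span}(\1_k)$, i.e.\ vectors of the form $u\otimes\1_k$, substituting $J_k\1_k = k\1_k$ and $I_k\1_k=\1_k$ collapses the operator to $k\,L(\mathcal{G})$ acting on $u$; this yields the eigenvalues $\{\,k\beta : \beta\in\mathrm{spec}(L(\mathcal{G}))\,\}$, in particular $0$ and $k\beta_{n-1}^{\mathcal{G}}$. On the complementary block $\bbR^n\otimes\1_k^{\perp}$, i.e.\ vectors $u\otimes w$ with $w\perp\1_k$, the $J_k$-terms vanish and the operator reduces to a diagonal action, producing eigenvalues of the form $k\,d_i$ (or $k(d_i+1)$, according to whether the $k$ virtual nodes of a cluster are internally linked), each with multiplicity $k-1$. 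Counting $n$ eigenvalues from the first block and $n(k-1)$ from the second recovers all $kn$ eigenvalues, so this is the complete spectrum.

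The crux, and the step I expect to be the main obstacle, is to certify that the second smallest eigenvalue of $L(\mathcal{G}^\otimes)$ is $k\beta_{n-1}^{\mathcal{G}}$ and not one of the complementary eigenvalues. The smallest eigenvalue is $0$, so I must show that every complementary eigenvalue dominates $k\beta_{n-1}^{\mathcal{G}}$; since those are bounded below by $k\,d_{\min}$ (with $d_{\min}$ the minimum degree of $\mathcal{G}$), it suffices to prove $\beta_{n-1}^{\mathcal{G}}\le d_{\min}$, i.e.\ that the algebraic connectivity of $\mathcal{G}$ does not exceed its minimum degree. This is exactly where the \emph{non-complete} hypothesis is needed: by Fiedler's classical inequality, for any connected graph that is not complete the algebraic connectivity is at most the vertex connectivity, which is in turn at most $d_{\min}$; hence $\beta_{n-1}^{\mathcal{G}}\le d_{\min}$ and the comparison holds. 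Consequently the second smallest eigenvalue of $L(\mathcal{G}^\otimes)$ equals $k\beta_{n-1}^{\mathcal{G}}$, which combined with $|E^\otimes|=k^2|E|$ gives the announced identity. Note that connectedness of $\mathcal{G}$ guarantees $\beta_{n-1}^{\mathcal{G}}>0$, and the non-bipartiteness assumption (inherited from the gossip setting) is not actually used in this argument.
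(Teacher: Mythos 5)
Your proof is correct and follows essentially the same route as the paper's: both identify the spectrum of $L(\mathcal{G}^\otimes)$ through the splitting into the consensus space $\bbR^n\otimes\mathrm{span}(\1_k)$, where the operator collapses to $kL(\mathcal{G})$, and its complement $\bbR^n\otimes\1_k^{\perp}$, yielding eigenvalues $kd_i$ --- the paper's case analysis on whether $\sum_{i=1}^k v_i$ vanishes is exactly this dichotomy --- and both then invoke Fiedler's inequality (algebraic connectivity $\leq$ vertex connectivity $\leq$ minimum degree, which is where non-completeness enters) to conclude that the second smallest eigenvalue is $k\beta_{n-1}^{\mathcal{G}}$. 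If anything your version is slightly tighter: computing the full spectrum with multiplicities explicitly certifies that $k\beta_{n-1}^{\mathcal{G}}$ \emph{is} an eigenvalue of $L(\mathcal{G}^\otimes)$ (the paper proves only the containment direction and leaves this lift implicit), and your remark that non-bipartiteness goes unused is accurate under the paper's matrix-level definition $A^\otimes=\1_k\1_k^{\top}\otimes A$ with $|E^\otimes|=k^2|E|$, for which $\mathcal{G}^\otimes$ is connected whenever $\mathcal{G}$ is.
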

\begin{proof}
Let $A\in\{0,1\}^{n\times n}$ and $A^\otimes\in\{0,1\}^{nk\times nk}$ be the adjacency matrices of $\mathcal{G}$ and $\mathcal{G}^\otimes$ respectively. Similarly, let $D\in\mathbb{N}^{n\times n}$ and $D^\otimes\in\mathbb{N}^{nk\times nk}$ be the diagonal degree matrices of $\mathcal{G}$ and $\mathcal{G}^\otimes$ respectively, \textit{i.e.}, $D_{ii}=\sum_{j=1}^n A_{ij}$ and $D^\otimes_{ii}=\sum_{j=1}^{nk} A^\otimes_{ij}$. Denoting the Kronecker product by $\otimes$, we can write:
\begin{eqnarray*}
A^\otimes &=& \1_k\1_k^T \otimes A,\\
D^\otimes &=& kI_k \otimes D.
\end{eqnarray*}
Recall that $L(\mathcal{G})=D-A$ and $L(\mathcal{G}^\otimes) = D^\otimes - A^\otimes$.

Let $(v,\beta^{\mathcal{G}^\otimes})\in\mathbb{R}^{nk}\times \mathbb{R}$ be an eigenpair of $L(\mathcal{G}^\otimes)$, \textit{i.e.}, $(D^\otimes - A^\otimes)v = \beta^{\mathcal{G}^\otimes} v$ and $v\neq\mathbf{0}_{nk}$. Let us write $v=[v_1 \dots v_k]^{\top}$ where $v_1,\dots,v_k\in\mathbb{R}^n$. Exploiting the structure of $A^\otimes$ and $D^\otimes$, we have:
\begin{equation}
\label{eq:propeq1}
kDv_i - \sum_{j=1}^k A v_j = \beta^{\mathcal{G}^\otimes} v_i,\quad\forall i\in\{1,\dots,k\}.
\end{equation}
Summing up \eqref{eq:propeq1} over all $i\in\{1,\dots,k\}$ gives
$$D\sum_{i=1}^k v_i - A\sum_{i=1}^k v_i = \frac{\beta^{\mathcal{G}^\otimes}}{k}\sum_{i=1}^k v_i,$$
which shows that if $(v,\beta^{\mathcal{G}^\otimes})$ is an eigenpair of $L(\mathcal{G}^\otimes)$ with $\sum_{i=1}^k v_i \neq \mathbf{0}_n$, then $(\sum_{i=1}^k v_i,\beta^{\mathcal{G}^\otimes}/k)$ is an eigenpair of $L(\mathcal{G})$.
In the case where $\sum_{i=1}^k v_i = \mathbf{0}_n$, then there exists an index $j\in\{1,\dots,k\}$ such that $v_j=-\sum_{i\neq j}v_j\neq \mathbf{0}_n$. Hence \eqref{eq:propeq1} gives
$$Dv_j = \frac{\beta^{\mathcal{G}^\otimes}}{k}v_j,$$
which shows that $(v_j,\beta^{\mathcal{G}^\otimes}/k)$ is an eigenpair of $L(\mathcal{G})$. Observe that $\beta^{\mathcal{G}^\otimes}=kd_i$ for some $i\in\{1,\dots,n\}$.

We have thus shown that any eigenvalue $\beta^{\mathcal{G}^\otimes}$ of $L(\mathcal{G}^\otimes)$ is either of the form $\beta^{\mathcal{G}^\otimes}=k\beta^{\mathcal{G}}$, where $\beta^{\mathcal{G}}$ is an eigenvalue of $L(\mathcal{G})$, or of the form $\beta^{\mathcal{G}^\otimes}=kd_i$ for some $i\in\{1,\dots,n\}$.

Since $L(\mathcal{G}^\otimes)$ is a Laplacian matrix, its smallest eigenvalue is 0. Let $\beta^{\mathcal{G}^\otimes}_{nk-1}$ be the second smallest eigenvalue of $L(\mathcal{G}^\otimes)$. Note that $\mathcal{G}^\otimes$ is not a complete graph since $\mathcal{G}$ is not complete. Therefore, $\beta^{\mathcal{G}^\otimes}_{nk-1}$ is bounded above by the vertex connectivity of ${G}^\otimes$ \citep{Fiedler1973a}, which is itself trivially bounded above by the minimum degree $d^\otimes_{min} = \min_{i=1}^{kn} D^\otimes_{ii}$ of ${G}^\otimes$. This implies that $\beta^{\mathcal{G}^\otimes}_{nk-1}=k\beta_{n-1}^{\mathcal{G}}$, and hence
$$1 - \lambda_2^{\mathcal{G}^\otimes}=\frac{\beta_{kn-1}^{\mathcal{G}^\otimes}}{|E^\otimes|} = \frac{k\beta_{n-1}^{\mathcal{G}}}{k^2|E|} = \frac{1}{k}(1 - \lambda_2^{\mathcal{G}}).$$
\end{proof}

Proposition~\ref{prop:gap} shows that the network-dependent term in our convergence bounds is only affected by a factor $k$. Furthermore, note that iterations involving two virtual nodes corresponding to the same physical node will not require actual network communication, which somewhat attenuates this effect in practice.


\section{Additional experiments}
\label{sec:addit-exper}

\begin{figure}[t]
  \centering
  \includegraphics[width=.99\textwidth]{./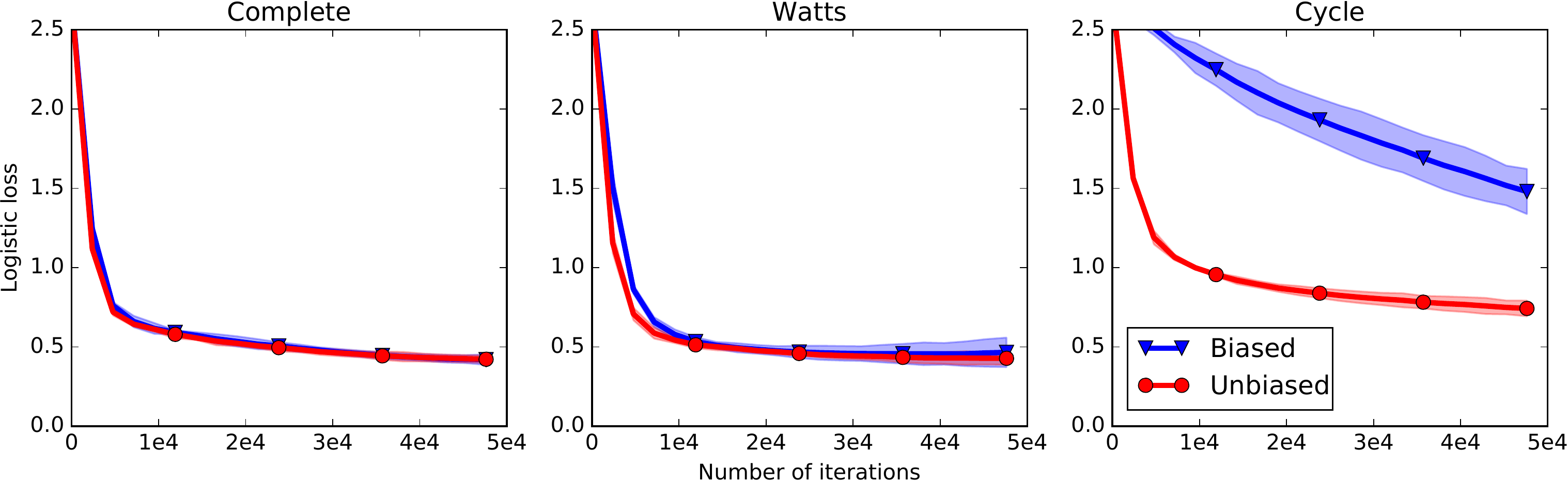}
  \caption{Metric learning: comparison between our algorithm and an unbiased version}
  \label{fig:async_ml_baseline-vs-gossip}
\end{figure}

\begin{figure*}[t]
  \centering
\subfigure[Evolution of the objective function and its standard deviation (asynchronous case).]{\hspace*{.5cm}  \includegraphics[height=6cm]{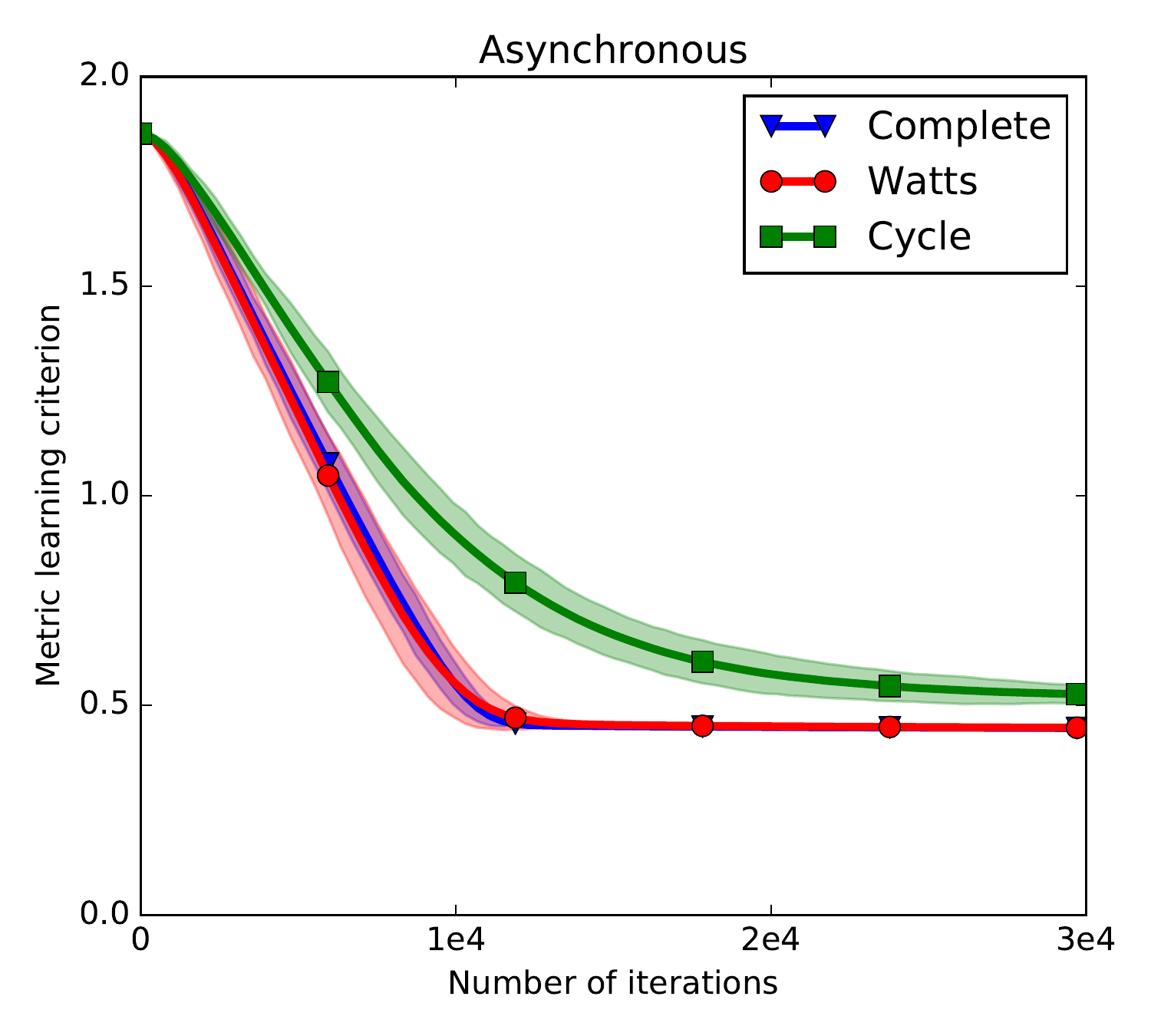}\label{fig:async_ml_cancer_standard}}\hspace*{.5cm}
  \subfigure[Evolution of the bias term.]{\hspace*{.5cm}  \includegraphics[height=6cm]{./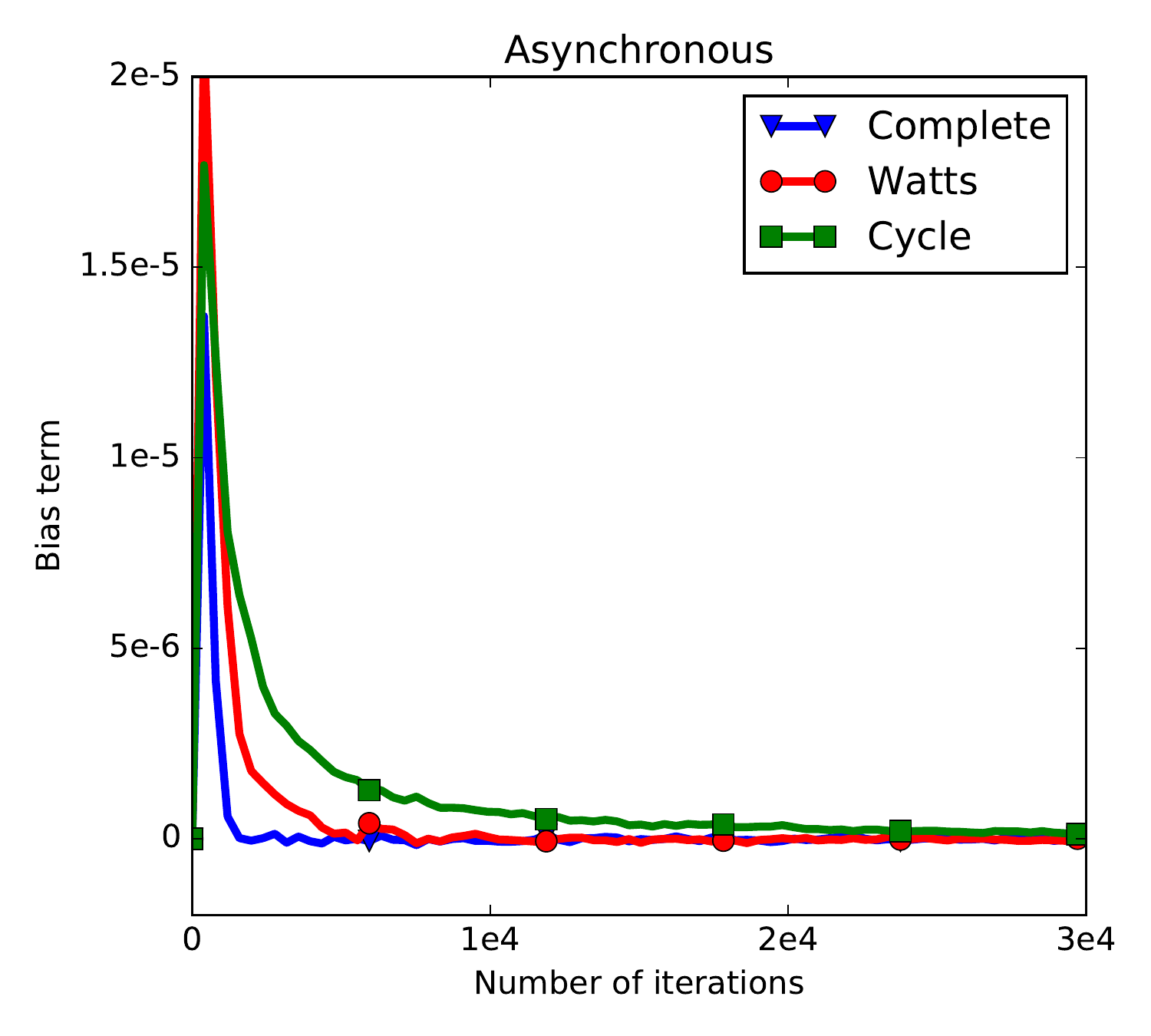}\hspace*{.5cm}\label{fig:async_ml_cancer_bias}}
  \caption{Metric learning experiments on a real dataset.}
\end{figure*}

In this section, we present additional results of decentralized metric learning. First, we discuss the comparison to the unbiased basline for metric learning on the synthetic dataset introduced in Section~\ref{sec:experiments}. Then, we analyze numerical experiments of decentralized metric learning on the Breast Cancer Wisconsin dataset\footnote{\url{https://archive.ics.uci.edu/ml/datasets/Breast+Cancer+Wisconsin+(Original)}}.

\paragraph{Synthetic Dataset}
In Section~\ref{sec:experiments}, we discussed the results of decentralized metric learning over a synthetic dataset of $n=1,000$ points generated from a mixture of $10$ Gaussians in $\mathbb{R}^{40}$ such that all gaussian means are contained in a 5d subspace.

We compare the logistic loss associated to our algorithm's iterates to the loss associated to the following baseline: instead of adding $\nabla f(\theta_i(t); x_i, y_i(t))$ to its dual variable $z_i(t)$, a node $i \in [n]$ receives a vector drawn uniformly at random from the set $\{ \nabla f(\theta_i(t); x_i, x_1), \ldots, \nabla f(\theta_i(t); x_i, x_n) \}$. The bias introduced by the random walk procedure is already shown to be very small in comparison to the objective function on Figure~\ref{fig:async_ml_bias}. Here, Figure~\ref{fig:async_ml_baseline-vs-gossip} evidences the fact that this small bias has close to no influence on the optimization process for well-connected networks.

\paragraph{Breast Cancer Wisconsin Dataset}

We now focus on decentralized metric learning on the Breast Cancer Wisconsin Dataset already used in Section~\ref{sec:experiments} for AUC maximization. This dataset contains $n=699$ observations of dimension $11$. Figure~\ref{fig:async_ml_cancer_standard} shows the evolution of the metric learning criterion with the number of iterations, averaged over $50$ runs. As in previous experiments, there is almost no difference between the convergence rate of the Watts-Strogatz network and the complete network. Moreover, the bias term is again largely negligible when compared to the metric learning criterion, as shown on Figure~\ref{fig:async_ml_cancer_bias}.

\clearpage

\section*{Acknowledgments}

This work was partially supported by the chair ``Machine Learning for Big Data'' of T\'el\'ecom ParisTech and by a grant from CPER Nord-Pas de Calais/FEDER DATA Advanced data science and technologies 2015-2020.

\bibliographystyle{icml2016}
\bibliography{doc}
\end{document}